\documentclass[letterpaper]{article} %
\usepackage[preprint]{aaai2027}  %
\usepackage[hyphens]{url}  %
\usepackage{graphicx} %
\usepackage{natbib}  %
\usepackage{caption} %
\usepackage{algorithm}
\usepackage{algorithmic}

\usepackage{newfloat}
\usepackage{listings}
\DeclareCaptionStyle{ruled}{labelfont=normalfont,labelsep=colon,strut=off} %
\floatstyle{ruled}
\newfloat{listing}{tb}{lst}{}
\floatname{listing}{Listing}

\usepackage{booktabs}
\usepackage{amsmath}
\usepackage{array}
\usepackage{xspace}
\usepackage{amssymb}
\usepackage{amsthm}
\usepackage{mathtools}

\usepackage{thmtools}
\newif\iflongversion
\longversionfalse

\newtheorem{definition}{Definition}
\newtheorem{theorem}{Theorem}
\newtheorem{lemma}{Lemma}
\newtheorem{corollary}{Corollary}

\newtheorem{proposition}{Proposition}

\newcommand{\ltl}{\text{LTL}\xspace}
\newcommand{\ltlf}{\text{LTLf}\xspace}
\newcommand{\ltlfplus}{\text{LTLf+}\xspace}

\newcommand{\ltlfU}{\textsf{U}}
\newcommand{\ltlfM}{\textsf{M}}
\newcommand{\ltlfW}{\textsf{W}}
\newcommand{\ltlfX}{\textsf{X}}
\newcommand{\ltlfN}{\textsf{N}} %
\newcommand{\ltlfG}{\textsf{G}}
\newcommand{\ltlfR}{\textsf{R}}
\newcommand{\ltlfF}{\textsf{F}}
\newcommand{\ltlftrue}{\textsf{tt}}
\newcommand{\ltlffalse}{\textsf{ff}}

\newcommand{\ap}{\textsf{AP}}
\newcommand{\last}{\mathit{last}\xspace}
\newcommand{\buchi}{B\"uchi\xspace}

\newcommand{\mup}{\mu(\varphi)}   %
\newcommand{\nup}{\nu(\varphi)}   %
\newcommand{\BGF}{\mup}
\newcommand{\BFG}{\nup}

\newcommand{\rewM}[1]{[#1]}                     %
\newcommand{\fgrw}[1]{[#1]_{\nu}}                        %
\newcommand{\gfrw}[1]{[#1]_{\mu}}                     %

\newcommand{\rewMN}[1]{\langle#1\rangle}        
\usepackage{todonotes}

\newcommand{\spot}{{$\mathsf{SPOT}$}\xspace}

\newcommand{\strix}{{$\mathsf{Strix}$}\xspace}
\newcommand{\semml}{{$\mathsf{SemML}$}\xspace}

\newcommand{\normalisation}{$\Delta_2$-normalization\xspace}

\newcommand{\modelsinf}{\models_{\mathsf{inf}}}

\newcommand{\modelsfin}{\models_{\mathsf{fin}}}

\newcommand{\fin}{\mathsf{fin}}
\newcommand{\finX}{\mathsf{fin}_{\ltlfX}}
\newcommand{\finN}{\mathsf{fin}_{\ltlfN}}

\newcommand{\chop}{\mathsf{chop}}

\newcommand{\tight}{\mathsf{tight}}

\newcommand{\proj}{\mathsf{proj}}

\newcommand{\safetyleaf}[1]{\underbrace{#1}_{\text{safety leaf}}}

\title{Infinite Trace Objectives with Finite Trace Techniques: Translating LTL to LTLf+}
\author {
    Christoph Weinhuber\textsuperscript{\rm 1}\corresponding,
    Maximilian Prokop\textsuperscript{\rm 2,3},
    Giuseppe De Giacomo\textsuperscript{\rm 1}, 
    Moshe Y. Vardi\textsuperscript{\rm 4}
}
\affiliations {
    \textsuperscript{\rm 1}University of Oxford\\
    \textsuperscript{\rm 2}Technical University of Munich\\
    \textsuperscript{\rm 3}Masaryk University Brno\\
    \textsuperscript{\rm 4}Rice University\\
}

\begin{document}

\maketitle

\begin{abstract}
Linear Temporal Logic (LTL) is one of the most widely adopted languages for specifying temporal extended objectives in AI, with applications ranging from reactive synthesis to stochastic planning in Markov decision processes and reinforcement learning. Traditionally, solving any of these problems requires translating the LTL specification to a nondeterministic automata on infinite words and then determinizing it, a step that is notoriously difficult in theory and in practice. Recent work has introduced LTLf+, which lifts the finite-trace logic LTLf to infinite traces. LTLf+ has the same expressive power as LTL, yet it retains most of the crucial advantages of its base logic LTLf. Most reasoning in LTLf+ rests on finite automata on finite words, for which we have not only a canonical minimal representation but also an efficient determinization procedure. In this work we present the first translation from LTL to LTLf+. We first normalize an LTL formula into the syntactic reactivity fragment of the Manna-Pnueli hierarchy, to create the general fragment-based shape of LTLf+. We then present linear translations for each individual component of that fragment. As a consequence of this translation, the expanding body of techniques developed for LTLf+ now becomes available to many AI problems currently formulated in LTL.  We further show that this comes at no asymptotic cost, as the pipeline from LTL to automaton via LTLf+ remains doubly exponential.
\end{abstract}

\section{Introduction}\label{sec:intro}

Linear Temporal Logic (\ltl) \citep{DBLP:conf/focs/Pnueli77} is one of the most widely adopted specification languages for reasoning about the temporal behavior of dynamic systems.
It plays a central role in model checking \citep{DBLP:books/daglib/0020348}, reactive synthesis \citep{DBLP:conf/popl/PnueliR89}, supervisory control \citep{DBLP:journals/deds/EhlersLTV17}, and AI planning for temporally extended goals \citep{DBLP:journals/amai/BacchusK98,DBLP:conf/ijcai/GiacomoR18,DBLP:conf/aips/CamachoBM19}.
In robotics, \ltl specifications express complex mission objectives such as persistent surveillance and conditional responses to environmental events \citep{DBLP:journals/automatica/FainekosGKP09}.
In business process management, \ltl captures compliance constraints and procedural requirements \citep{DBLP:conf/bpm/MaggiMWA11}.
Across all these domains, \ltl provides a natural and declarative way to specify \emph{what} a system should do, without prescribing \emph{how}.
Despite its wide applicability, the computational cost of strategic reasoning tasks, such as \ltl synthesis, remains a barrier.
The traditional pipeline requires translating an \ltl formula into a nondeterministic \buchi automaton on infinite words and determinizing it.
But the determinization step is notoriously difficult both in theory and in practice \citep{DBLP:conf/stacs/Vardi07}, and has long limited the scalability of \ltl-based synthesis methods.

To sidestep determinization difficulties, state-of-the-art \ltl synthesis tools exploit a normal form that decomposes the formula into a Boolean combination of simpler formulas \citep{DBLP:journals/jacm/EsparzaRS24}. 
Each component can be translated into a deterministic \buchi or co-\buchi automaton, and their Boolean combination yields directly a deterministic Emerson-Lei automaton \citep{DBLP:journals/scp/EmersonL87}, on which synthesis is then solved.
This normal form is based on Manna and Pnueli's safety-progress hierarchy \citep{DBLP:conf/podc/MannaP89}, which distinguishes six basic classes of temporal properties: safety (nothing bad ever happens), guarantee (something good eventually happens), their boolean combinations, called obligations, recurrence (something good happens infinitely often), persistence (something bad happens only finitely often), and their boolean combinations, called reactivity. 
While the normalization is worst-case exponential, it is now used by the best-performing \ltl synthesis tools, from \strix \citep{DBLP:conf/cav/MeyerSL18} to \semml \citep{DBLP:conf/tacas/KretinskyMPZ25,DBLP:journals/corr/abs-2604-24102}, which have dominated the \ltl synthesis tracks of SYNTCOMP in the recent years.

Recently a different route has emerged in AI, around the logic \ltlfplus \citep{DBLP:conf/ijcai/AminofGRV25}.  \ltlfplus  builds on the finite-trace logic \ltlf \citep{DBLP:conf/ijcai/GiacomoV13,DBLP:conf/ijcai/GiacomoV15}, which interprets the syntax of \ltl over finite traces. Notably 
\ltlf formulas can be effectively translated into deterministic finite automata (DFAs).
Furthermore, in contrast to their infinite-word counterparts, DFAs can be efficiently minimized.
As a result \ltlf synthesis is reaching the performance and scalability of model checking \cite{DBLP:conf/wia/DuretLutzZPGV25}, which in turn is an industrial reality \cite{PSL,SVA}. 

\ltlfplus lifts \ltlf to infinite traces, again through the Manna-Pnueli hierarchy.
It places one of four prefix quantifiers over an \ltlf formula, requiring it to hold on some prefix ($\exists$, guarantee), on every prefix ($\forall$, safety), on infinitely many prefixes ($\forall\exists$, recurrence), or on all but finitely many prefixes ($\exists\forall$, persistence).
These quantifiers match exactly the corresponding classes of the Manna-Pnueli hierarchy, with obligations and reactivity captured by their Boolean combinations.
\ltlfplus has the full expressive power of \ltl, yet its reasoning problems are mostly solved on DFAs built from the finite-trace \ltlf subformulas.

\paragraph{Contribution.}
In this paper, we show that \ltl, once put in normal form ~\citep{DBLP:journals/jacm/EsparzaRS24}, can be translated into \ltlfplus in \emph{linear time}.

We thereby close a gap in \citep{DBLP:conf/ijcai/AminofGRV25} by lifting the existential result to a constructive one.
Further, we pave the way for a new synthesis pipeline, that first converts \ltl to \ltlfplus and then continues through the \ltlfplus synthesis pipeline.
In particular, this allows for DFA-based techniques (e.g.\ automaton minimization which previously was off limits for \ltl) to be applied to problems stated in \ltl.
More generally, we establish \ltlfplus as a practical intermediate language for arbitrary \ltl specifications, thereby enabling the growing ecosystem of DFA-based techniques developed for LTLf to be applied beyond finite-horizon specifications.
We thereby enable future algorithms developed for \ltlfplus to be applied to the full expressive power of \ltl without additional asymptotic cost.

\section{Preliminaries}\label{sec:prelim}

\paragraph{LTL.}
We use \ltl formulas \citep{DBLP:conf/focs/Pnueli77} in negation normal form (NNF) over a finite set $\ap$ of atomic propositions:
\begin{align*}
\varphi ::={} & \ltlftrue \mid \ltlffalse \mid p \mid \neg p
\mid \varphi \wedge \varphi \mid \varphi \vee \varphi
\mid \ltlfX\varphi 
\\
&{}\mid \varphi\, \ltlfU\, \varphi
\mid \varphi\, \ltlfW\, \varphi
\mid \varphi\, \ltlfM\, \varphi
\mid \varphi\, \ltlfR\, \varphi
\end{align*}
with $p \in \ap$, and the usual abbreviations 
$\ltlfF\varphi \equiv \ltlftrue\,\ltlfU\,\varphi$ and
$\ltlfG\varphi \equiv \varphi\,\ltlfW\,\ltlffalse$.
We write $\Sigma = 2^{\ap}$ for the alphabet.
On an infinite trace $w = w_0 w_1 \cdots \in \Sigma^\omega$ and a position $i \geq 0$, the satisfaction relation $w,i \modelsinf \varphi$ is defined inductively as follows:
\begin{align*}
w,i &\modelsinf p
&&\iff p \in w_i,
\\
w,i &\modelsinf \neg p
&&\iff p \notin w_i,
\\
w,i &\modelsinf \varphi_1 \land \varphi_2
&&\iff w,i \modelsinf \varphi_1
       \text{ and } w,i \modelsinf \varphi_2,
\\
w,i &\modelsinf \varphi_1 \lor \varphi_2
&&\iff w,i \modelsinf \varphi_1
       \text{ or } w,i \modelsinf \varphi_2,
\\
w,i &\modelsinf \ltlfX\varphi
&&\iff w,i{+}1 \modelsinf \varphi,
\\
w,i &\modelsinf \varphi_1 \ltlfU \varphi_2
&&\iff
\begin{multlined}[t][.53\columnwidth]
  \exists k \geq i.\, w,k \modelsinf \varphi_2 \ \text{and} \\
  \forall j \in [i,k).\; w,j \modelsinf \varphi_1,
\end{multlined}
\\
w,i &\modelsinf \varphi_1 \ltlfW \varphi_2
&&\iff
\begin{multlined}[t][.53\columnwidth]
  w,i \modelsinf \varphi_1 \ltlfU \varphi_2 \ \text{or} \\
  \forall j \geq i.\; w,j \modelsinf \varphi_1,
\end{multlined}
\\
w,i &\modelsinf \varphi_1 \ltlfM \varphi_2
&&\iff
\begin{multlined}[t][.53\columnwidth]
  \exists k \geq i.\, w,k \modelsinf \varphi_1 \ \text{and} \\
  \forall j \in [i,k].\; w,j \modelsinf \varphi_2,
\end{multlined}
\\
w,i &\modelsinf \varphi_1 \ltlfR \varphi_2
&&\iff
\begin{multlined}[t][.53\columnwidth]
  w,i \modelsinf \varphi_1 \ltlfM \varphi_2 \ \text{or} \\
  \forall j \geq i.\; w,j \modelsinf \varphi_2.
\end{multlined}
\end{align*}
with $w, i \modelsinf \ltlftrue$ and $w, i \not \modelsinf \ltlffalse$.
We write $w \modelsinf \varphi$ for $w, 0 \modelsinf \varphi$, and $w[i..n]$ and $w[i...]$ for the infix $w_i \cdots w_n$ and the suffix $w_i w_{i+1} \cdots$.
We denote the set of all subformulas of $\varphi$
by $\mathit{sf}(\varphi)$. Although the grammar is in NNF, we write $\neg\varphi$ for the NNF dual of $\varphi$.

\paragraph{Fixpoint classification.}
Each temporal operator is a fixpoint. The operators $\ltlfF$, $\ltlfU$, and $\ltlfM$ are \emph{operators} with the \emph{least} fixpoint ($\mu$) that require a witness in a finite time. The operators $\ltlfG$, $\ltlfW$, and $\ltlfR$ are \emph{greatest} fixpoint ($\nu$) operators 
that hold unless violated within finite time.

\paragraph{Safety-progress hierarchy.}
The classes $\Sigma_i$ and $\Pi_i$ are defined inductively by which fixpoint types are allowed and at what alternation depth~\citep{DBLP:conf/podc/MannaP89,DBLP:conf/icalp/ChangMP92}.
Let $\Delta_0$ denote the Boolean combinations of $\ltlftrue$, $\ltlffalse$, atoms, negated atoms, allowing $\ltlfX$ as the only temporal operator.
$\Sigma_1$ (guarantee/co-safety) contains $\Delta_0$ and is closed under $\land$, $\lor$, $\ltlfX$, and the $\mu$-operators $\ltlfF$, $\ltlfU$, $\ltlfM$. 
$\Pi_1$ (safety) contains $\Delta_0$ and is closed under $\land$, $\lor$, $\ltlfX$, and the $\nu$-operators $\ltlfG$, $\ltlfW$, $\ltlfR$. 
$\Delta_1$ (obligation) is the Boolean closure of $\Sigma_1 \cup \Pi_1$.
$\Sigma_2$ (persistence) contains $\Pi_1$ and is closed under $\land$, $\lor$, $\ltlfX$, $\ltlfF$, $\ltlfU$, $\ltlfM$. This yields at most
one fixpoint alternation, with $\mu$-operators on the outside. 
$\Pi_2$ (recurrence) contains $\Sigma_1$ and is closed under $\land$, $\lor$, $\ltlfX$, $\ltlfG$, $\ltlfW$, $\ltlfR$. 
This yields at most one fixpoint alternation, with $\nu$-operators on the outside.
$\Delta_2$ (reactivity) is the Boolean closure of $\Sigma_2 \cup \Pi_2$. 
Moreover, every \ltl formula is equivalent to a formula in $\Delta_2$~\citep{DBLP:conf/icalp/ChangMP92}.

\paragraph{LTLf.} 
We follow \citet{DBLP:conf/ijcai/GiacomoV13} and define \ltl on finite traces (\ltlf) on a non-empty finite trace $u = u_0 \cdots u_n$ of length $n{+}1$.
We use $\modelsfin$ for the standard non-empty finite-trace satisfaction relation and write $\last = \ltlfN(\ltlffalse)$, which holds exactly at the last position of the trace.
\begin{align*}
u, k &\modelsfin p
&&\iff p \in u_k,
\\
u, k &\modelsfin \neg p
&&\iff p \notin u_k,
\\
u, k &\modelsfin \varphi_1 \land \varphi_2
&&\iff u, k \modelsfin \varphi_1
       \text{ and } u, k \modelsfin \varphi_2,
\\
u, k &\modelsfin \varphi_1 \lor \varphi_2
&&\iff u, k \modelsfin \varphi_1
       \text{ or } u, k \modelsfin \varphi_2,
\\
u, k &\modelsfin \ltlfX\varphi
&&\iff k < n \text{ and } u, k{+}1 \modelsfin \varphi,
\\
u, k &\modelsfin \ltlfN\varphi
&&\iff k = n \text{ or } u, k{+}1 \modelsfin \varphi,
\\
u, k &\modelsfin \varphi_1 \ltlfU \varphi_2
&&\iff
\begin{multlined}[t][.55\columnwidth]
  \exists m \in [k, n].\; u, m \modelsfin \varphi_2 \ \text{and} \\
  \forall j \in [k, m).\; u, j \modelsfin \varphi_1, \hspace{0.6em}
\end{multlined}
\\
u, k &\modelsfin \varphi_1 \ltlfW \varphi_2
&&\iff
\begin{multlined}[t][.55\columnwidth]
  u, k \modelsfin \varphi_1 \ltlfU \varphi_2 \ \text{or} \\
  \forall j \in [k, n].\; u, j \modelsfin \varphi_1, \hspace{0.5em}
\end{multlined}
\\
u, k &\modelsfin \varphi_1 \ltlfM \varphi_2
&&\iff
\begin{multlined}[t][.55\columnwidth]
  \exists m \in [k, n].\; u, m \modelsfin \varphi_1 \ \text{and} \\
  \forall j \in [k, m].\; u, j \modelsfin \varphi_2,
  \hspace{0.6em}
\end{multlined}
\\
u, k &\modelsfin \varphi_1 \ltlfR \varphi_2
&&\iff
\begin{multlined}[t][.55\columnwidth]
  u, k \modelsfin \varphi_1 \ltlfM \varphi_2 \ \text{or} \\
  \forall j \in [k, n].\; u, j \modelsfin \varphi_2.\hspace{0.4em}
\end{multlined}
\end{align*}
with $u, k \modelsfin \ltlftrue$ and $u, k \not \modelsfin \ltlffalse$.
We write $u \modelsfin \varphi$ for $u, 0 \modelsfin \varphi$ and write $\neg\varphi$ for the NNF dual of an \ltlf formula as well.

\paragraph{LTLf+.}
Following \citet{DBLP:conf/ijcai/AminofGRV25}, \ltlfplus extends \ltlf to infinite traces by quantifying over the non-empty prefixes $w[0..i] = w_0 \cdots w_i$ of an infinite trace $w$:
\begin{align*}
&w \modelsinf \forall\varphi &&\iff &&&\forall\, i \ge 0\colon w[0..i] \modelsfin \varphi, \\
&w \modelsinf \exists\varphi &&\iff &&&\exists\, i \ge 0\colon w[0..i] \modelsfin \varphi, \\
&w \modelsinf \forall\exists\varphi &&\iff &&&\forall\, i \ge 0,\; \exists\, j \ge i\colon w[0..j] \modelsfin \varphi, \\
&w \modelsinf \exists\forall\varphi &&\iff &&&\exists\, i \ge 0,\; \forall\, j \ge i\colon w[0..j] \modelsfin \varphi.
\end{align*}
We refer to any such formula as an \ltlfplus-\textit{leaf formula}.
An \ltlfplus formula is a Boolean combination of \ltlfplus leaf formulas
The four leaf types match the four basic classes of the hierarchy: $\forall\varphi$ is safety ($\Pi_1$), $\exists\varphi$ guarantee ($\Sigma_1$), $\forall\exists\varphi$ recurrence ($\Pi_2$), and $\exists\forall\varphi$ persistence ($\Sigma_2$). 
Full \ltlfplus is equivalent to reactivity ($\Delta_2$)~\citep{DBLP:conf/ijcai/AminofGRV25}.

\section{Normalization to $\Delta_2$ Normal Form}\label{sec:normalisation}
It noted above every \ltl formula can be translated to an equivalent formula that belongs to $\Delta_2$ \citep{DBLP:conf/lop/LichtensteinPZ85,DBLP:conf/icalp/ChangMP92}.
The original result, however, uses automata as intermediate representations and incurs a non-elementary blow-up. Recently, a pure formula rewriting system, which is only single exponential and much more effective, was proposed by \citet{DBLP:journals/jacm/EsparzaRS24}. As it plays a central role in our translation, we provide a brief account of this normalization procedure in this section.

\paragraph{Intuition.}
The normalization classifies the traces of the formula-to-be-normalized by the \emph{limit-behavior} they exhibit.
A limit-behavior is a pair $(M, N)$ of sets of limit formulas, where $M$ records which least fixpoint subformulas are satisfied infinitely often, and $N$ which greatest fixpoint subformulas will eventually be satisfied forever.
For every limit-behavior, we can rewrite the original formula assuming the current limit-behavior holds.
In particular, we can simplify arbitrary nesting of greatest and least fixpoint operators down to a single alternation (i.e.\ a $\Sigma_2$ or a $\Pi_2$ formula) when assuming a certain limit-behavior.
The normalized formula is then a disjunction over all limit-behaviors.
Each disjunct contains i) the simplified formula under the current limit-behavior and ii) a check that the current limit-behavior indeed holds.
Crucially, these checks fall into the classes $\Sigma_2$ or $\Pi_2$ which overall yields a Boolean combination of only $\Sigma_2$ or $\Pi_2$ formulas, or in other words, a syntactic $\Delta_2$ formula.
We restate the normalization below, culminating in Theorem~\ref{thm:normalisation}.

We begin by formally defining the set of limit formulas (the \textit{Basis}) $B$, from which $(M, N)$ is drawn. 
Each disjunct of the normalization
is then assembled from $(M, N)$ by three rewrites: $\varphi\fgrw{M}$ and $\varphi\gfrw{N}$ produce the persistence and recurrence checks of part ii) above, and $\varphi\rewM{M,N}$ the simplified formula of part i).

\begin{definition}[Basis {\citep{DBLP:journals/jacm/EsparzaRS24}}]\label{def:basis}
Let $\varphi$ be an \ltl formula and $\mathit{sf}(\varphi)$ its subformula set.
The \emph{basis} $B = \BGF \cup \BFG$ is defined as:
\begin{alignat*}{4}
\BGF
  &= \{\ltlfG\ltlfF\psi \mid \exists\, \varphi'.\; \varphi' \ltlfU \psi
  &{}\in{} \mathit{sf}(\varphi)
  &{}\lor{} \psi \ltlfM \varphi'
  &{}\in{} \mathit{sf}(\varphi)\},\\
\BFG
  &= \{\ltlfF\ltlfG\psi \mid \exists\, \varphi'.\; \psi \ltlfW \varphi'
  &{}\in{} \mathit{sf}(\varphi)
  &{}\lor{} \varphi' \ltlfR \psi
  &{}\in{} \mathit{sf}(\varphi)\}.
\end{alignat*}
\end{definition}

A tuple $(M, N)$ with $M \subseteq \BGF$ and $N \subseteq \BFG$ carries all the information needed to rewrite $\varphi$ under the assumption that the limit-behavior is correct. $M$ records the subformulas that hold infinitely often, while $N$ records those that eventually hold forever.
To check that $(M, N)$ actually holds, consider a subformula $\psi$ with $\ltlfF\ltlfG\psi \in N$. Its claim is that $\psi$ holds from some position onward. But $\ltlfF\ltlfG$ captures persistence only when its operand is a safety formula. On an operand that still contains least fixpoint operators it expresses something else, e.g. $\ltlfF\ltlfG(\ltlfF a)$ is just $\ltlfG\ltlfF a$, the recurrence of $a$. The following $\Pi_1$-rewrite resolves those operators in $\psi$ using $M$, leaving a safety formula $\psi\fgrw{M} \in \Pi_1$ on which $\ltlfF\ltlfG(\psi\fgrw{M})$ is a genuine persistence check.
Recurrence is dual, with the $\Sigma_1$-rewrite using $N$ to turn a subformula recorded by $M$ into the co-safety operand of $\ltlfG\ltlfF(\psi\gfrw{N})$.

\begin{definition}[$\Pi_1$-rewrite~\citep{DBLP:journals/jacm/EsparzaRS24}]\label{def:fg-rewrite}
For an operand $\psi$ with $\ltlfF\ltlfG\psi \in \BFG$ and a set $M \subseteq \BGF$, the formula rewrite $\psi\fgrw{M}$ leaves $\ltlftrue$, $\ltlffalse$ and literals unchanged, distributes over $\lor$, $\land$, $\ltlfX$, $\ltlfW$, $\ltlfR$, and resolves the $\mu$-operators $\ltlfF$, $\ltlfU$, $\ltlfM$:
\[
(\psi_1 \ltlfU \psi_2)\fgrw{M} =
\begin{cases}
\psi_1\fgrw{M}\, \ltlfW\, \psi_2\fgrw{M} & \text{if } \ltlfG\ltlfF\psi_2 \in M, \\
\ltlffalse & \text{otherwise.}
\end{cases}
\]
That is, if $M$ contains $\ltlfG\ltlfF\psi_2$, then $\ltlfU$ relaxes to~$\ltlfW$. The result of $\psi\fgrw{M}$ is a $\Pi_1$ formula with no least fixpoint operators. The $\ltlfM$ operator rewrite case is analogous, testing $\ltlfG\ltlfF\psi_1 \in M$, and relaxes to $\ltlfR$, and $\ltlfF$ is handled through $\ltlfF\psi \equiv \ltlftrue\,\ltlfU\,\psi$.
\end{definition}

\begin{definition}[$\Sigma_1$-rewrite~\citep{DBLP:journals/jacm/EsparzaRS24}]\label{def:gf-rewrite}
For an operand $\psi$ with $\ltlfG\ltlfF\psi \in \BGF$ and a set $N \subseteq \BFG$, the rewrite $\psi\gfrw{N}$ is dual to Definition~\ref{def:fg-rewrite}: it leaves $\ltlftrue$, $\ltlffalse$ and literals unchanged, distributes over $\lor$, $\land$, $\ltlfX$, $\ltlfF$, $\ltlfU$, $\ltlfM$, and resolves the $\nu$-operators $\ltlfG$, $\ltlfW$, $\ltlfR$:
\[
(\psi_1 \ltlfW \psi_2)\gfrw{N} =
\begin{cases}
\ltlftrue & \text{if } \ltlfF\ltlfG\psi_1 \in N, \\
\psi_1\gfrw{N}\, \ltlfU\, \psi_2\gfrw{N} & \text{otherwise.}
\end{cases}
\]
The $\ltlfR$ operator case is analogous, testing $\ltlfF\ltlfG\psi_2 \in N$, and strengthens to $\ltlfM$; the operator $\ltlfG$ is handled through $\ltlfG\psi \equiv \psi\,\ltlfW\,\ltlffalse$.
The result of $\psi\gfrw{N}$ is a $\Sigma_1$ formula with no greatest fixpoint operators.
\end{definition}

With the limit-behavior fixed, so that $(M, N)$ may be assumed to hold on the current trace, we can now apply the simplification to the formula $\varphi$ being normalized.

\begin{definition}[$\Sigma_2$-rewrite~\citep{DBLP:journals/jacm/EsparzaRS24}]\label{def:body-rewrite}
For $(M, N)$, the rewrite $\varphi\rewM{M,N}$ leaves $\ltlftrue$, $\ltlffalse$ and literals unchanged, distributes over $\lor$, $\land$, $\ltlfX$, $\ltlfF$, $\ltlfU$, $\ltlfM$, and resolves the $\nu$-operators $\ltlfG$, $\ltlfW$, $\ltlfR$, each keeping a single highlighted \emph{safety leaf}:
\begin{align*}
(\ltlfG\,\psi_1)\rewM{M,N} &= \psi_1\rewM{M,N}\, \ltlfU\, \safetyleaf{\ltlfG(\psi_1\fgrw{M})} \\
(\psi_1 \ltlfW \psi_2)\rewM{M,N} &= \psi_1\rewM{M,N}\, \ltlfU\, \bigl(\psi_2\rewM{M,N} \!\lor\! \safetyleaf{\ltlfG(\psi_1\fgrw{M})}\bigr) \\
(\psi_2 \ltlfR \psi_1)\rewM{M,N} &= \bigl(\psi_2\rewM{M,N}\!\lor\! \safetyleaf{\ltlfG(\psi_1\fgrw{M})}\bigr)
\ltlfM\, \psi_1\rewM{M,N}
\end{align*}
In each clause the safety leaf becomes $\ltlffalse$ unless $\ltlfF\ltlfG\psi_1 \in N$.
The result $\varphi\rewM{M,N}$ is always in~$\Sigma_2$.
\end{definition}

Intuitively, the $\Sigma_2$-rewrite flattens an arbitrarily nested \ltl formula, into a single fixpoint alternation.

\begin{restatable}[\normalisation~\protect\citep{DBLP:journals/jacm/EsparzaRS24}]{theorem}{normalisationthm}
\label{thm:normalisation}
Every \ltl formula $\varphi$ is equivalent to:
\begin{multline*}
\! \! \! \! \!
\bigvee_{\substack{M \subseteq \BGF \\ N \subseteq \BFG}} \! \! \!
\Big(\varphi\rewM{M,N} ~\wedge\!\!\! \bigwedge_{\ltlfF\ltlfG\psi \in N} \! \! \! \!\ltlfF\ltlfG(\psi\fgrw{M})
 ~\wedge \! \! \! \bigwedge_{\ltlfG\ltlfF\psi \in M} \! \! \!\!\ltlfG\ltlfF(\psi\gfrw{N})\Big).
\end{multline*}
The resulting normalized formula has length $2^{O(|\varphi|)}$.
\end{restatable}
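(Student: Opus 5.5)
The plan is to prove the two directions of the equivalence separately, and then bound the size. Both directions rest on two families of lemmas about the rewrites, each proved by structural induction on subformulas. Fix a trace $w$ and a pair $(M,N)$. Write $M_w$ for the set of $\ltlfG\ltlfF\psi \in \BGF$ with $w \modelsinf \ltlfG\ltlfF\psi$, and $N_w$ for the set of $\ltlfF\ltlfG\psi \in \BFG$ with $w \modelsinf \ltlfF\ltlfG\psi$.

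\emph{Rewrite lemmas.} For $M \subseteq M_w$, I would show that $w,i \modelsinf \psi\fgrw{M}$ implies $w,i \modelsinf \psi$ for all $i$. For $M = M_w$, I would show the converse for all sufficiently large $i$. The soundness direction holds because $\ltlfU$ is relaxed to $\ltlfW$ only when $\psi_2$ recurs, so a $\ltlfW$-witness that holds forever is eventually cut by an occurrence of $\psi_2$. The completeness direction holds because from some point on, every $\mu$-subformula whose argument does not recur is false.

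Dually, for $N \subseteq N_w$, satisfaction of $\psi\gfrw{N}$ implies satisfaction of $\psi$ everywhere. For $N = N_w$, the converse holds from some point on. The key argument is that $\ltlfW$ may be replaced by $\ltlftrue$ when $\ltlfF\ltlfG\psi_1$ holds and the position is late enough.

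For the $\Sigma_2$-rewrite I would prove two statements:
\begin{itemize}
\item[(a)] If $M \subseteq M_w$ and $N \subseteq N_w$, then $w,i \modelsinf \varphi\rewM{M,N}$ implies $w,i \modelsinf \varphi$.
\item[(b)] If $M = M_w$ and $N = N_w$, then $w,i \modelsinf \varphi$ implies $w,i \modelsinf \varphi\rewM{M,N}$ for every $i$, not just late ones.
\end{itemize}
For (a), in the $\ltlfG$ case a safety leaf $\ltlfG(\psi_1\fgrw{M})$ reached at position $k$ gives $\psi_1$ from $k$ on, by the $\Pi_1$ soundness lemma. Before $k$, $\psi_1$ holds by induction. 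For (b), if $\ltlfG\psi_1$ holds then $\ltlfF\ltlfG\psi_1 \in N_w$, so the leaf is kept, and by $\Pi_1$ completeness $\psi_1\fgrw{M_w}$ holds from some $k$ onward. Induction gives $\psi_1\rewM{M,N}$ before $k$, so the $\ltlfU$ is satisfied. $\ltlfW$ and $\ltlfR$ are handled the same way by case split on whether the until-part is eventually discharged.

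\emph{Equivalence.} For the forward direction, given $w \modelsinf \varphi$, take the disjunct for $(M_w,N_w)$. Its first conjunct holds by (b). Each conjunct $\ltlfF\ltlfG(\psi\fgrw{M_w})$ holds by $\Pi_1$ completeness, since $\psi$ holds from some point on and the rewrite then agrees with $\psi$. The $\ltlfG\ltlfF(\psi\gfrw{N_w})$ conjuncts hold by the dual argument.

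For the backward direction, suppose some disjunct $(M,N)$ holds. From $\ltlfF\ltlfG(\psi\fgrw{M})$ we want $\ltlfF\ltlfG\psi$, and from $\ltlfG\ltlfF(\psi\gfrw{N})$ we want $\ltlfG\ltlfF\psi$, that is $M \subseteq M_w$ and $N \subseteq N_w$. Then (a) gives $w \modelsinf \varphi$.

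The difficulty is circularity: $\psi\fgrw{M}$ soundness needs $M \subseteq M_w$, and $\psi\gfrw{N}$ soundness needs $N \subseteq N_w$. I would break it by induction on subformula nesting depth, processing the elements of $M \cup N$ in order of increasing size of $\psi$. A rewrite of $\psi$ only queries basis elements built from strict subformulas of $\psi$, whose membership is already established. I expect this well-founded mutual induction to be the main obstacle.

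\emph{Size.} There are at most $2^{|B|} \le 2^{|\varphi|}$ disjuncts. Each rewrite replaces each operator node by at most a constant number of nodes together with one leaf of size $O(|\varphi|)$, giving size $O(|\varphi|^2)$ per rewrite. There are $O(|\varphi|)$ conjuncts per disjunct, so the total length is $2^{|\varphi|} \cdot \mathrm{poly}(|\varphi|) = 2^{O(|\varphi|)}$.
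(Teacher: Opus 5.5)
Your proposal is correct in substance, but it takes a different route from the paper. The paper does not reprove the normalization. It takes \citet[Theorem~4.22]{DBLP:journals/jacm/EsparzaRS24} as a black box for the variant $\varphi\rewMN{M}$, which keeps every safety leaf, and accounts only for the one change in Definition~\ref{def:body-rewrite}: leaves are dropped unless $\ltlfF\ltlfG\psi_1\in N$. Three lemmas do this work.
\begin{itemize}
\item $\varphi\rewM{M,N}$ implies $\varphi\rewMN{M}$ pointwise, so each new disjunct implies the corresponding old one.
\item The two bodies coincide on traces where every dropped leaf's persistence check fails.
\item $\psi\gfrw{N}$ is monotone in $N$.
\end{itemize}
An old disjunct $(M,N)$ satisfied by $w$ is then turned into the new disjunct $(M,N^*)$ by adding to $N$ every persistence check that holds on $w$. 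The length bound is inherited in the same way.

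You instead reprove the whole theorem in the style of \citet{DBLP:journals/jacm/EsparzaKS20}. You use soundness and completeness of the three rewrites, the canonical disjunct $(M_w,N_w)$ for the forward direction, and a well-founded induction on subformulas to break the circularity between $M\subseteq M_w$ and $N\subseteq N_w$. Your route is self-contained and makes explicit which disjunct witnesses satisfaction; it also shows that (a) needs only $M\subseteq M_w$. The price is redoing the hardest part of the cited work, the mutual induction. The paper's route is much shorter and isolates exactly what conditional leaf-dropping changes.

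One of your rewrite lemmas is misstated, though harmlessly: for the $\Sigma_1$-rewrite you have the position qualifiers swapped.
\begin{itemize}
\item Soundness holds only from some position on. With $N\subseteq N_w$, take $\psi=\ltlfG a$, $N=\{\ltlfF\ltlfG a\}$ and $w=\emptyset\{a\}^\omega$. Then $(\ltlfG a)\gfrw{N}=\ltlftrue$ holds at position $0$, where $\ltlfG a$ fails.
\item Completeness holds at every position. With $N=N_w$, $\psi\Rightarrow\psi\gfrw{N_w}$ holds everywhere: if $\ltlfF\ltlfG\psi_1\notin N_w$, then $\ltlfG\psi_1$ fails at every position and $\ltlfW$ collapses to $\ltlfU$.
\end{itemize}
The ``late enough'' argument you sketch is really the soundness argument. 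Since $\gfrw{N}$ occurs only under $\ltlfG\ltlfF$, soundness from some position on is enough to derive $w\modelsinf\ltlfG\ltlfF\psi$. So the proof goes through once this lemma is stated correctly, and, as you indicate, in the localized form that assumes only the basis elements queried by $\psi$ hold on $w$.
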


\paragraph{Synthesis with Normal Form.}
The $\Delta_2$ normal form is the standard starting point for reactive synthesis from \ltl~\citep{DBLP:conf/cav/MeyerSL18,DBLP:conf/tacas/KretinskyMPZ25,DBLP:journals/corr/abs-2604-24102}.
To construct a deterministic automaton, we first translate the individual components (i.e.\ the $\Sigma_2$ bodies and the limit behavior formulae) for each disjunct of the normalization in Theorem~\ref{thm:normalisation} into deterministic Büchi and co-Büchi automata using the simple constructions from  \citet{DBLP:journals/jacm/EsparzaKS20}. This yields a boolean combination of deterministic Büchi and co-Büchi, or equivalently, a deterministic Emerson Lei automaton, where the state space is the synchronous product and the acceptance condition is the exact same boolean combination of the individual Büchi and co Büchi acceptances. As there are exponentially many deterministic Büchi and co-Büchi automata and each of them is doubly exponential in the original formulas size, the Emerson Lei automaton has $\big(2^{2^{O(|\varphi|)}}\big)^{2^{O(|\varphi|)}} = 2^{2^{O(|\varphi|)}}$ states. Intuitively, the exponentially many factors are accumulating inside the inner exponent rather than adding a further exponentiation. This means that, even though the normalization yields an exponentially large formula, converting that formula to a deterministic automaton does not become triply exponential. Instead, the normalization allows us to obtain a theoretically optimal and practically tractable LTL-to-deterministic-automaton translation that powers the state of the art synthesis tools Strix \citep{DBLP:conf/cav/MeyerSL18} and SemML \citep{DBLP:conf/tacas/KretinskyMPZ25,DBLP:journals/corr/abs-2604-24102}.

\section{Translating \ltl to \ltlfplus}\label{sec:route}
The four prefix quantifiers of \ltlfplus correspond exactly to the classes of the safety-progress hierarchy of \citet{DBLP:conf/podc/MannaP89}.
The existential quantifier $\exists$ expresses guarantee properties ($\Sigma_1$).
The universal quantifier $\forall$ expresses safety properties ($\Pi_1$).
Further, the alternating quantifiers $\forall\exists$ express recurrence properties ($\Pi_2$) and the symmetric combination $\exists\forall$ expresses persistence properties ($\Sigma_2$) \cite{DBLP:conf/ijcai/AminofGRV25}.
Thus, by design, every \ltlfplus{} formula is a boolean combination of formulae of these 4 fragments.

Consequently, when translating from arbitrary LTL formulas to LTLf+, we first use the normalization of Theorem \ref{thm:normalisation} to create the general shape of a boolean combination of the 4 fragments.
Recall that this yields a boolean combination of three types of formulas, namely:
\begin{enumerate}
    \item $\Pi_2$-formulas of the form $\ltlfG\ltlfF (\sigma_1)$, with $\sigma_1 \in \Sigma_1$
    \item $\Sigma_2$-formulas of the form $\ltlfF\ltlfG (\pi_1)$, with $\pi_1 \in \Pi_1$
    \item general $\Sigma_2$ formulas.
\end{enumerate}
where the former two are the checks of the guessed limit behaviors and the latter are the resulting $\Sigma_2$ bodies assuming the respective limit behavior holds.
To complete the translation, it suffices to translate each type of formula to its corresponding LTLf+ fragment.
In the remainder of this paper, we present exactly these translations.

\section{Translating Limit-Behavior Blocks}\label{sec:min-tight-completion}
Translating limit-behavior involves translating
$\Pi_2$-formulas of the form $\ltlfG\ltlfF (\sigma_1)$, with $\sigma_1 \in \Sigma_1$
and 
$\Sigma_2$-formulas of the form $\ltlfF\ltlfG (\pi_1)$, with $\pi_1 \in \Pi_1$.

We first convert the $\Pi_2$-formulas, which, following~\citet{DBLP:conf/ijcai/AminofGRV25}, receives the $\forall\exists$ quantifier, and construct a conversion $t \colon \Sigma_1 \to \ltlf$ such that for every infinite trace $w$,
$$
w \modelsinf \ltlfG \ltlfF (\sigma_1) \iff w \modelsinf \forall \exists\, t(\sigma_1).
$$

We start by observing that formulae of the $\Sigma_1$ fragment have something called a "good prefix".
In other words, all operators of the $\Sigma_1$ fragment want to be completed after a finite prefix, similar to their \ltlf versions.
This motivates to simply reinterpret the Sigma1 formula as an LTLf formula, which we formalize as follows:

\begin{definition}[Finite reinterpretation]
\label{def:fin}
For an \ltl formula $\sigma \in \Sigma_1$, the finite trace interpretation $\finX(\sigma)$ rewrites every \ltl next-operator to the $\ltlf$ strong-next-operator and leaves $\ltlftrue, \ltlffalse, \ltlfF, \ltlfU, \ltlfM, \land, \lor$ and literals unchanged.
\end{definition}

A similar notion was introduced by \cite{DBLP:journals/lmcs/CimattiGGMT23}, who already established a close link to the co-safety fragment $\Sigma_1$.
We restate and adapt a few helper lemmas and definitions from \cite{DBLP:journals/lmcs/CimattiGGMT23} and \cite{DBLP:journals/fmsd/KupfermanV01} that we require for our arguments.

\begin{definition}[Completion index]\label{def:completion-index}
Fix an infinite word $w \in \Sigma^\omega$ and a \ltl formula $\sigma \in \Sigma_1$. 
Then, let $c_\sigma(i)$ the completion index when starting from index $i$ with
$c_\sigma(i) = \min \{ n \geq i : w[i..n] \modelsfin \finX(\sigma) \}$ or $\infty \text{ if none exists}$.
We write $c_\sigma = c_\sigma(0)$ and get $c_\sigma(i) \geq i$.
\end{definition}

Intuitively, the completion index $c_\sigma(i)$ is the smallest end index $n$, such that the infix $w[i..n] \modelsfin \finX(\sigma)$.

\begin{lemma} \label{lem:monotonicity}
Let $\sigma \in \Sigma_1$. If $w[i..n] \modelsfin \finX(\sigma)$ then 
    $w[i..n+1] \modelsfin \finX(\sigma)$. 
    Hence $\{ n : w[i..n] \modelsfin \finX(\sigma) \} = [c_\sigma(i), \infty)$ or $\emptyset$, and 
    $w[i..n] \modelsfin \finX(\sigma) \iff n \geq c_\sigma(i)$.
\end{lemma}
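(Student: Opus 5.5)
We prove a stronger statement by structural induction on $\sigma \in \Sigma_1$: for every position $k$ with $i \le k \le n$, if $w[i..n], k-i \modelsfin \finX(\sigma)$ then $w[i..n+1], k-i \modelsfin \finX(\sigma)$. Equivalently, we view any subformula as evaluated at an absolute position $k$ of a finite infix ending at $n$, and show that satisfaction persists when the end point $n$ is extended. The lemma is the special case $k=i$. The remaining two claims follow directly. By induction on $n$, the persistence property gives that the set $\{n : w[i..n] \modelsfin \finX(\sigma)\}$ is upward closed in $[i,\infty)$. An upward-closed set of naturals is either empty or equal to $[\min,\infty)$, and by Definition~\ref{def:completion-index} this minimum is exactly $c_\sigma(i)$. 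This yields the characterization $w[i..n]\modelsfin\finX(\sigma) \iff n \ge c_\sigma(i)$, where the right-hand side is never satisfied when $c_\sigma(i)=\infty$.

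\textbf{Base and Boolean cases.} The base cases are immediate. The constants $\ltlftrue$ and $\ltlffalse$ do not depend on the trace. A literal $p$ or $\neg p$ at position $k$ depends only on the letter $w_k$, which is the same letter in both infixes because $k \le n$. The cases $\land$ and $\lor$ follow directly from the induction hypothesis.

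\textbf{Strong next.} The key point is that $\finX$ replaces every next operator by the strong next $\ltlfX$, and this operator never introduces the weak next $\ltlfN$ or $\last$. If $\ltlfX\sigma'$ holds at $k$ in $w[i..n]$, then $k<n$ and $\sigma'$ holds at $k+1 \le n$. Hence $k < n+1$, and by the induction hypothesis applied at position $k+1$, $\sigma'$ still holds there in the extended infix.

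\textbf{Until and its relatives.} For $\sigma_1 \ltlfU \sigma_2$, suppose the witness $m \in [k,n]$ satisfies $\sigma_2$, and every $j \in [k,m)$ satisfies $\sigma_1$, all in $w[i..n]$. The same $m$ lies in $[k,n+1]$. By the induction hypothesis, $\sigma_2$ holds at $m$ and $\sigma_1$ holds at every $j\in[k,m)$ in $w[i..n+1]$. The operator $\ltlfM$ is handled identically, and $\ltlfF$ is the special case $\ltlftrue\,\ltlfU\,\sigma$.

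\textbf{Main obstacle.} The only real difficulty is setting up the induction hypothesis correctly. It must be stated for arbitrary inner positions $k$, not only for $k=i$, and it must be stated relative to a fixed infix start $i$. Otherwise the $\ltlfX$ and $\ltlfU$ cases cannot invoke the hypothesis at later positions. It is also worth noting explicitly why the argument needs $\Sigma_1$. The $\nu$-operators, and the weak next $\ltlfN$, can be satisfied vacuously at the end of a trace. They therefore break persistence under extension, and this is exactly why they are excluded from $\Sigma_1$.
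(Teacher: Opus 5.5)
Your proof is correct, but it is more self-contained than the paper's. The paper gives no induction of its own. It notes that $\finX(\sigma)$ lies in the strong-next-only co-safety fragment of Cimatti et al., after rewriting $\sigma_1 \ltlfM \sigma_2$ as $\sigma_2 \ltlfU (\sigma_1 \land \sigma_2)$. It then imports their Lemma~3.4, extension-closedness of that fragment, to get persistence when the endpoint is extended. The threshold characterization via $c_\sigma(i)$ then follows exactly as you derive it.

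You instead prove the extension property directly by structural induction. Your hypothesis is generalized to every inner position $k \in [i,n]$ of the fixed infix. That generalization is exactly what lets you apply it at $k+1$ in the $\ltlfX$ case, and at the witness and waiting positions in the $\ltlfU$ and $\ltlfM$ cases.

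Your route does not depend on the external fragment. You handle $\ltlfM$ directly, so you need not check that rewriting it into $\ltlfU$ is sound under the finite semantics. It also shows why only strong nexts and $\mu$-operators can be admitted. The paper's route is shorter, because it outsources precisely this routine induction to the cited lemma.
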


Once a prefix satisfies $\finX(\sigma)$, so does every longer one. This follows by \citet[Lemma~3.4]{DBLP:journals/lmcs/CimattiGGMT23}, as $\finX(\sigma)$ lies in their co-safety fragment with strong-next only, rewriting $\sigma_1 \ltlfM \sigma_2$ as $\sigma_2 \ltlfU (\sigma_1 \land \sigma_2)$.

\begin{lemma}\label{lem:2-3}
    Let $\sigma \in \Sigma_1$. $w[i..n] \modelsfin \finX(\sigma) \implies w[i...] \modelsinf \sigma$.
\end{lemma}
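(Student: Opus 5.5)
We prove a slightly stronger, position-wise statement by structural induction on $\sigma \in \Sigma_1$: for every subformula $\sigma'$ of $\sigma$ and every $k \in [i,n]$,
\[
w[i..n], k{-}i \modelsfin \finX(\sigma') \implies w, k \modelsinf \sigma'.
\]
Here positions in the infix $u = w[i..n]$ are shifted so that $u_j = w_{i+j}$. The lemma is then the special case $\sigma' = \sigma$ and $k = i$, since $w[i...] \modelsinf \sigma$ is the same as $w, i \modelsinf \sigma$. We have to generalise to arbitrary $k$ because the temporal operators move the evaluation point inside the infix.

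The base cases are immediate. $\ltlftrue$ holds everywhere. $\ltlffalse$ is never satisfied on the finite side, so the implication is vacuous. Literals are evaluated on the same letter $u_{k-i} = w_k$ in both semantics. Conjunction and disjunction follow directly from the induction hypothesis.

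For the temporal cases we use the fact that $\Sigma_1$ contains only $\ltlfX$, $\ltlfF$, $\ltlfU$ and $\ltlfM$, and that $\finX$ maps $\ltlfX$ to the strong next.
\begin{itemize}
\item \emph{Strong next.} $u, k{-}i \modelsfin \ltlfX\finX(\sigma_1)$ gives $k < n$ and $u, k{+}1{-}i \modelsfin \finX(\sigma_1)$. By induction $w, k{+}1 \modelsinf \sigma_1$, hence $w, k \modelsinf \ltlfX\sigma_1$. This is the crucial use of strong next: the weak next $\ltlfN$ would be trivially true at the last position with no infinite-trace counterpart.
\item \emph{Until.} The finite witness $m \in [k,n]$ with $\finX(\sigma_2)$ at $m$ and $\finX(\sigma_1)$ on $[k,m)$ transfers position-wise by induction. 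This gives an infinite witness $m \ge k$, so $w, k \modelsinf \sigma_1 \ltlfU \sigma_2$.
\item \emph{Release-dual $\ltlfM$ and eventually $\ltlfF$.} $\ltlfM$ is handled identically with the closed interval $[k,m]$, and $\ltlfF$ is the case $\ltlftrue\,\ltlfU\,\cdot$.
\end{itemize}

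I expect no real obstacle. The main point of care is to state the induction hypothesis for arbitrary positions $k$ inside the infix together with the index shift, rather than only for position $0$. The $\mu$-operators need only existential witnesses, which lie inside $[k,n] \subseteq [k,\infty)$. Each universal constraint ranges only over the interval up to that witness, so it is exactly matched on the infinite trace. Because $\Sigma_1$ contains no $\nu$-operators, no clause requires information beyond position $n$.

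Alternatively, the lemma can be cited from \citet{DBLP:journals/lmcs/CimattiGGMT23} after rewriting $\sigma_1 \ltlfM \sigma_2$ as $\sigma_2 \ltlfU (\sigma_1 \land \sigma_2)$, as was done for Lemma~\ref{lem:monotonicity}. The direct induction above is short enough to include.
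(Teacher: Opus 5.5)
Your proof is correct, but it takes a different route from the paper. The paper gives no argument for Lemma~\ref{lem:2-3}. It simply cites it, either as the dual of Remark~5.3 of Kupferman and Vardi (good prefixes of co-safety properties) or as Lemma~3.5 of Cimatti et al. The latter covers a co-safety fragment with strong next only. For Lemma~\ref{lem:monotonicity} the paper notes that this fragment requires rewriting $\sigma_1 \ltlfM \sigma_2$ as $\sigma_2 \ltlfU (\sigma_1 \land \sigma_2)$ first.

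Your structural induction over all evaluation positions $k \in [i,n]$ is self-contained. It makes explicit the two facts the citation leaves implicit:
\begin{itemize}
\item every $\mu$-operator is discharged by an existential witness $m \le n$, with constraints only on the bounded interval up to $m$, and these transfer verbatim to $w$;
\item $\finX$ must produce the strong next, since $\ltlfN$ would be vacuously true at the last position.
\end{itemize}
It treats $\ltlfM$ directly rather than through a rewrite, and it matches the proof style the paper itself uses for Lemma~\ref{lem:chop} and Lemma~\ref{lem:pi-1-formulas}. The citation route is shorter and ties the statement to the classical good-prefix characterization of co-safety. In exchange, the reader must check that the cited fragments and semantics line up with the paper's $\Sigma_1$ and its finite-trace semantics.

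A minor point: your generalization to arbitrary $k$ is not actually stronger than the lemma quantified over all $i \le n$. LTLf only looks forward, so $w[i..n], k{-}i \modelsfin \varphi$ holds iff $w[k..n] \modelsfin \varphi$. You could therefore induct directly on the lemma as stated, with $i$ universally quantified.
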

Given an infinite trace, if $\finX(\sigma)$ already holds on one of its finite prefixes, then $\sigma$ holds on the infinite trace itself; this follows by the dual of \citet[Remark~5.3]{DBLP:journals/fmsd/KupfermanV01} or via 
\citet[Lemma~3.5]{DBLP:journals/lmcs/CimattiGGMT23}.

\begin{lemma}\label{lem:2-4}
    Let $\sigma \in \Sigma_1$. $w[i...] \modelsinf
    \sigma \implies c_\sigma(i) < \infty$.
\end{lemma}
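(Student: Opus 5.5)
We prove the stronger statement: for every $\sigma \in \Sigma_1$ and every $i$ with $w[i...] \modelsinf \sigma$, there is some $n \geq i$ with $w[i..n] \modelsfin \finX(\sigma)$. Then $c_\sigma(i) < \infty$ by Definition~\ref{def:completion-index}. The argument is a structural induction on $\sigma$, with the position $i$ universally quantified inside the induction hypothesis. This matters because the $\ltlfX$, $\ltlfU$ and $\ltlfM$ cases need the hypothesis at positions other than $i$.

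We use Lemma~\ref{lem:monotonicity} throughout. It says that once $w[i..n] \modelsfin \finX(\sigma)$ holds, it holds for every larger $n$. So whenever several subformulas each have a witnessing end index from the same start position, we can take the maximum of these indices as a common witness.

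The cases are as follows.

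\begin{itemize}
\item \textbf{Base cases.} For $\ltlftrue$, $p$ and $\neg p$, take $n = i$. The case $\ltlffalse$ is vacuous.
\item \textbf{Disjunction.} Take the witness of the satisfied disjunct.
\item \textbf{Conjunction.} Take the maximum of the two witnesses; this works by Lemma~\ref{lem:monotonicity}.
\item \textbf{Next, $\ltlfX\sigma'$.} Apply the induction hypothesis at position $i{+}1$ to get $n \geq i{+}1$ with $w[i{+}1..n] \modelsfin \finX(\sigma')$. The finite trace $w[i..n]$ then has a successor position, so the strong next is satisfied. One has to note that evaluating $w[i..n]$ at its relative position $1$ is the same as evaluating $w[i{+}1..n]$ at $0$. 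This holds because $\modelsfin$ on an infix depends only on the suffix of that infix.
\item \textbf{Until, $\sigma_1 \ltlfU \sigma_2$.} Let $k \geq i$ be the witness, so $w[k...] \modelsinf \sigma_2$ and $w[j...] \modelsinf \sigma_1$ for all $j \in [i,k)$. The induction hypothesis gives end indices $n_k$ for $\sigma_2$ at $k$ and $n_j$ for $\sigma_1$ at each such $j$. Set $n = \max(n_k, \max_j n_j)$. This maximum is finite because only finitely many $j$ are involved.
\item \textbf{Release-dual, $\sigma_1 \ltlfM \sigma_2$.} This is symmetric to the until case, using the interval $[i,k]$. Alternatively, use the rewriting $\sigma_2 \ltlfU (\sigma_1 \wedge \sigma_2)$.
\item \textbf{Eventually, $\ltlfF$.} This is an abbreviation of $\ltlftrue\,\ltlfU\,\cdot$ and is covered by the until case.
\end{itemize}

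For the until case we also need monotonicity under extension. For each $j$, Lemma~\ref{lem:monotonicity} turns $w[j..n_j] \modelsfin \finX(\sigma_1)$ into $w[j..n] \modelsfin \finX(\sigma_1)$, since $n \geq n_j$. We then check that this is the same as $w[i..n], j{-}i \modelsfin \finX(\sigma_1)$, again by suffix-locality of $\modelsfin$, and similarly for $\sigma_2$ at $k$. The finite until semantics then holds with witness $k \in [i,n]$.

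The main obstacle is bookkeeping, not any real mathematical difficulty. The steps needing care are the identification of infix evaluation at an offset with evaluation of a shorter infix, and the finiteness of the maximum in the until and $\ltlfM$ cases. I would first state the suffix-locality fact as a one-line observation, then run the induction. Alternatively, the lemma can be cited as the converse direction of \citet[Lemma~3.5]{DBLP:journals/lmcs/CimattiGGMT23}, in the same way the surrounding lemmas are justified.
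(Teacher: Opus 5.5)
Your proof is correct, but it takes a different route from the paper. The paper gives no argument of its own here: it cites the co-safety dual of Kupferman and Vardi's Theorem~5.4, or Lemma~3.5 of Cimatti et al., both classical prefix characterizations of co-safety.

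Your structural induction has three ingredients:
\begin{itemize}
\item the start position $i$ is universally quantified in the induction hypothesis;
\item Lemma~\ref{lem:monotonicity} merges the finitely many witnesses into a common end index $n=\max(\ldots)$;
\item the suffix-locality of $\modelsfin$ lets you evaluate infixes at an offset.
\end{itemize}
With these, the argument is self-contained relative to the paper's earlier material. It handles $\ltlfM$ and $\ltlfF$ directly, instead of going through the rewriting $\sigma_1 \ltlfM \sigma_2 \equiv \sigma_2 \ltlfU (\sigma_1 \land \sigma_2)$ that the paper uses to place $\finX(\sigma)$ inside the cited fragment. It also shows why the lemma holds. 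Each until or $\ltlfM$ witness involves only finitely many positions, and satisfying end indices are upward closed. This is the same max-of-finitely-many-bounds argument the paper itself uses in the first half of the proof of Lemma~\ref{lem:until}.

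The citation route is shorter and reuses established results. Its cost is that it leaves implicit the check that the cited notions (the strong-next co-safety fragment, informative prefixes and their finite semantics) coincide with $\finX$ and $\modelsfin$ as defined in this paper. Your route costs some bookkeeping but needs nothing beyond Lemma~\ref{lem:monotonicity}, which the paper also justifies only by citation.
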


This follows directly via the co-safety dual of \citet[Theorem~5.4]{DBLP:journals/fmsd/KupfermanV01} or via 
\citet[Lemma~3.5]{DBLP:journals/lmcs/CimattiGGMT23}.
Lemma $~\ref{lem:2-4}$ combined with Lemma $~\ref{lem:2-3}$, yields that an infinite trace satisfies $\sigma$ exactly when one of its finite prefixes satisfies $\finX(\sigma)$.

With these ingredients in place, we can start translating Boolean combinations of $\Sigma_1$ and $\Pi_1$, as we show below.

\begin{proposition}\label{prop:sigma1-translation}
     For every $\sigma \in \Sigma_1$ and $w \in \Sigma^\omega$,
     $w \modelsinf \sigma \iff w \modelsinf \exists(\finX(\sigma))$.
\end{proposition}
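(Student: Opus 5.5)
The plan is to unfold the semantics of the $\exists$ quantifier and reduce both directions directly to Lemmas~\ref{lem:2-3} and~\ref{lem:2-4}, instantiated at starting position $i = 0$. By the definition of \ltlfplus, $w \modelsinf \exists(\finX(\sigma))$ holds iff there is some $n \geq 0$ with $w[0..n] \modelsfin \finX(\sigma)$. So the statement amounts to saying that $w \modelsinf \sigma$ iff some non-empty prefix of $w$ satisfies the finite reinterpretation. This is exactly the combination of the two lemmas remarked on after Lemma~\ref{lem:2-4}.

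For the direction ($\Leftarrow$), I assume $w \modelsinf \exists(\finX(\sigma))$ and fix a witness $n$ with $w[0..n] \modelsfin \finX(\sigma)$. Applying Lemma~\ref{lem:2-3} with $i = 0$ gives $w[0...] \modelsinf \sigma$. Since $w[0...] = w$, this is $w \modelsinf \sigma$.

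For the direction ($\Rightarrow$), I assume $w \modelsinf \sigma$, that is, $w[0...] \modelsinf \sigma$. Lemma~\ref{lem:2-4} with $i = 0$ yields $c_\sigma = c_\sigma(0) < \infty$. By Definition~\ref{def:completion-index}, $c_\sigma$ is then the minimum of a non-empty set, so $w[0..c_\sigma] \modelsfin \finX(\sigma)$; equivalently, Lemma~\ref{lem:monotonicity} gives this for $n = c_\sigma \geq c_\sigma(0)$. Hence $n = c_\sigma$ witnesses the existential quantifier, and $w \modelsinf \exists(\finX(\sigma))$.

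There is no real obstacle once the lemmas are assumed; the whole argument is bookkeeping. The one point to check is that the infix notation $w[0..n]$ in the lemmas coincides with the prefix notation used in the \ltlfplus semantics. It does, and both are non-empty since $n \geq 0$, so $\modelsfin$ is well defined. If one wanted the proof self-contained, the substantive work would be a structural induction on $\sigma$ proving Lemmas~\ref{lem:2-3} and~\ref{lem:2-4}. The delicate case there is $\ltlfX$, which must become the strong next so that a finite witness really requires the successor position to exist. The cases $\ltlfU$ and $\ltlfM$ follow by taking the maximum of the completion indices of the finitely many subformula witnesses, using monotonicity (Lemma~\ref{lem:monotonicity}) to extend each shorter witness to a common prefix length.
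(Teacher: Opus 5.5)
Your proof is correct and follows essentially the same route as the paper's. Lemma~\ref{lem:2-4} gives $c_\sigma < \infty$, and Lemma~\ref{lem:monotonicity} (or simply the definition of the minimum) makes $w[0..c_\sigma]$ the witness for the $\exists$ quantifier, while Lemma~\ref{lem:2-3} at $i = 0$ gives the converse direction.
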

\begin{proof}
    If $w \modelsinf \sigma$, then $c_\sigma < \infty$ by Lemma~\ref{lem:2-4}, so $w[0..c_\sigma] \modelsfin \finX(\sigma)$ by Lemma~\ref{lem:monotonicity}, giving $w \modelsinf \exists(\finX(\sigma))$. 
    Conversely, if $w[0..n] \modelsfin \finX(\sigma)$ for some $n$, then $w \modelsinf \sigma$ by Lemma~\ref{lem:2-3}.
\end{proof}

\begin{proposition}\label{prop:pi1-translation}
     For every $\pi \in \Pi_1$ and $w \in \Sigma^\omega$, 
     $w \modelsinf \pi \iff w \modelsinf \forall(\neg \finX(\neg\pi))$.
\end{proposition}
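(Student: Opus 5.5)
We reduce the statement to Proposition~\ref{prop:sigma1-translation} by duality. Since $\pi \in \Pi_1$, its NNF dual $\neg\pi$ lies in $\Sigma_1$: dualization swaps $\ltlfW,\ltlfR,\ltlfG$ with $\ltlfU,\ltlfM,\ltlfF$, swaps $\land$ with $\lor$, and keeps $\ltlfX$. A routine induction on the grammar of $\Pi_1$ confirms this. Hence $\finX(\neg\pi)$ is well defined, and Proposition~\ref{prop:sigma1-translation} applies to $\neg\pi$:
\[
w \modelsinf \neg\pi \iff \exists i \ge 0\colon w[0..i] \modelsfin \finX(\neg\pi).
\]

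Negating both sides, and using that on infinite traces the NNF dual is semantically the Boolean negation, gives
\[
w \modelsinf \pi \iff \forall i \ge 0\colon w[0..i] \not\modelsfin \finX(\neg\pi).
\]
It remains to replace $w[0..i] \not\modelsfin \finX(\neg\pi)$ by $w[0..i] \modelsfin \neg\finX(\neg\pi)$. This is exactly the fact that, on non-empty finite traces, the \ltlf NNF dual coincides with semantic negation. The dual pairs involved are $\ltlfX/\ltlfN$, $\ltlfU/\ltlfR$, $\ltlfM/\ltlfW$ and $\land/\lor$, and the standard finite-trace semantics of \citet{DBLP:conf/ijcai/GiacomoV13} validates each of these dualities. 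With this fact, the right-hand side becomes $\forall i\colon w[0..i] \modelsfin \neg\finX(\neg\pi)$, which is by definition $w \modelsinf \forall(\neg\finX(\neg\pi))$.

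The only step needing care is the finite-trace duality, particularly $\neg\ltlfX\psi \equiv \ltlfN\neg\psi$ at the last position and the $\ltlfU/\ltlfR$ and $\ltlfM/\ltlfW$ pairs bounded by $n$. We would check these by a short structural induction directly against the clauses in the preliminaries. Everything else follows immediately from Proposition~\ref{prop:sigma1-translation}.
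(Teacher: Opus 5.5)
Your proposal is correct and follows the paper's proof: both observe $\neg\pi\in\Sigma_1$, apply Proposition~\ref{prop:sigma1-translation} to it, and negate. The only difference is that you explicitly justify the finite-trace duality (that the \ltlf NNF dual coincides with semantic negation, so ``no prefix satisfies $\finX(\neg\pi)$'' equals ``every prefix satisfies $\neg\finX(\neg\pi)$''), which the paper uses without comment.
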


\begin{proof}
    Note that $\pi \in \Pi_1$ implies $\neg \pi \in \Sigma_1$. 
        By the semantics of $\forall$, we get $w \modelsinf \forall(\neg\finX(\neg\pi))$ holds iff no prefix of $w$ satisfies $\finX(\neg\pi)$, which by Proposition~\ref{prop:sigma1-translation} applied to $\neg \pi$ holds iff $w \not\modelsinf \neg\pi$, that is, iff $w \modelsinf \pi$.
\end{proof}

Under the quantifiers $\exists$ and $\forall$, Propositions~\ref{prop:sigma1-translation} and ~\ref{prop:pi1-translation} suffice. 
However, to successfully put $\finX$ under the recurrence quantifier $\forall\exists$, we need a few extra ingredients.
To see why, consider $\sigma = a \wedge \ltlfX(b\,\ltlfU\,c)$ and $w = \{a\}\{b\}\{c\}\{\}^\omega$.
Here $\sigma$ holds from position $0$ and from no other, so $w \not\modelsinf \ltlfG\ltlfF(\sigma)$.
However, $w \modelsinf \forall\exists\finX(\sigma)$, as $w[0..i] \modelsfin \finX(\sigma)$ for every $i\geq2$.
The problem is that if $\sigma$ is satisfied once, this prefix is contained in every longer prefix.

We thus need to tie the satisfaction of $\sigma$ to the end of the prefix.
Reading $\sigma$ from the last position is of no use, though, as a formula whose satisfaction requires a later position, as $\sigma$ does through its next operator, is false there.
We instead leave the starting point free by writing $\ltlfF(\finX(\sigma))$ and ask that the satisfaction of $\sigma$ completes exactly at the end of the finite trace.
In other words, for a prefix $w[0..n]$ we want the $\ltlfF$-operator to choose an index $i$ s.t.\ the infix $w[i..n] \modelsfin \finX(\sigma)$.
We can achieve this by writing $\sigma' = \ltlfF(a \wedge \ltlfX(b\,\ltlfU\,(c \wedge \last)))$ where $\last = \ltlfN(\ltlffalse)$.
Indeed, this correctly rejects $w$, as only $w[0..2] \modelsfin \sigma'$ and no longer prefix does.
However, this is still too weak.
Consider $w' = \{a\}\{b,c\}^\omega$, where $\sigma$ again holds only from position $0$, while we again find infinitely many prefixes that satisfy $\sigma'$.
The problem here is that the infinitely many endpoints of the prefixes all correspond to the same starting point.
We need to ensure no such one-to-infinity correspondence between starting points (for the LTL side) and end points (for the LTLf+ side) can exist to ensure we have infinitely many of one if and only if we have infinitely many of the other.
To that end, we require that for any given starting point (chosen by the $\ltlfF$-operator) only the first possible endpoint at which the satisfaction of $\finX(\sigma)$ is completed, creates a valid prefix.
After that, the wrapping $\ltlfF$-operator needs to pick a new starting point.
We call this concept tight completion and formalize it as follows.

\begin{definition}[Chop]\label{def:chop}
For an \ltl formula $\sigma \in \Sigma_1$, we recursively define the resulting \ltlf formula $\chop(\sigma)$ as follows:

\begin{center}
\renewcommand{\arraystretch}{1.15}
\begin{tabular}{@{}l@{\;\;}c@{\;\;}l@{}}
$\chop(\varphi)$          & = & $\varphi$
    \quad for $\varphi \in \{\ltlftrue,\, \ltlffalse,\, p,\, \neg p\}$ \\
$\chop(\ltlfX(\sigma))$   & = & $\ltlfX(\neg \last \land \chop(\sigma))$ \\
$\chop(\ltlfF(\sigma))$   & = & $\ltlfF(\neg \last \land \chop(\sigma))$ \\
$\chop(\sigma_1 \ltlfU \sigma_2)$ & = &
    $\chop(\sigma_1) ~\ltlfU~ (\neg \last \land \chop(\sigma_2))$\\
$\chop(\sigma_1 \ltlfM \sigma_2)$ & = &
    $(\neg \last \land \chop(\sigma_1)) ~\ltlfM~ \chop(\sigma_2)$ \\
$\chop(\sigma_1 \lor \sigma_2)$ & = &
  $\chop(\sigma_1) \lor \chop(\sigma_2)$ \\
$\chop(\sigma_1 \land \sigma_2)$ & = &
  $\chop(\sigma_1) \land \chop(\sigma_2)$ \\

\end{tabular}
\end{center}
The size of $\chop(\sigma)$ is linear in the size of $\sigma$.
\end{definition}

\begin{restatable}[]{lemma}{choplemma}
\label{lem:chop}
Let $u$ be a finite trace with its last index $N \geq 1$, and let $p \in [0, N-1]$. For every $\sigma \in \Sigma_1$ we get 
$u[p..N] \modelsfin \chop(\sigma) \iff u[p..N-1] \modelsfin \sigma$.
\end{restatable}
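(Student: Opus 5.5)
The plan is to prove a strengthened, position-wise version of the statement by structural induction on $\sigma$. Here $u[p..N-1] \modelsfin \sigma$ is read as satisfaction of the finite reinterpretation $\finX(\sigma)$, so $\ltlfX$ is the strong next. Write $v = u[p..N]$ and $v' = u[p..N-1]$, and let $n = N-p \geq 1$. Then $v$ has last index $n$, $v'$ has last index $n-1$, and $v'$ is $v$ with its final letter removed. The claim I will prove is:
\[
\text{for every } \sigma \in \Sigma_1 \text{ and every } k \in [0, n-1]:\quad v,k \modelsfin \chop(\sigma) \iff v',k \modelsfin \finX(\sigma).
\]
The lemma is the case $k=0$, which is admissible because $p \le N-1$ gives $n \ge 1$. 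Restricting to positions $k<n$ is essential. At the last position $n$ of $v$, $\chop(\sigma)$ may well hold (for instance when $\sigma$ is a literal true in $u_N$), even though that position does not exist in $v'$. The $\neg\last$ guards in Definition~\ref{def:chop} exist precisely to keep every "witness" evaluation away from position $n$.

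The base cases and the Boolean cases are immediate. Literals, $\ltlftrue$ and $\ltlffalse$ depend only on the letter at position $k$, and $v_k = v'_k$ for $k<n$. For $\land$ and $\lor$, apply the induction hypothesis at the same position $k$.

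For $\ltlfX$: $v,k \modelsfin \ltlfX(\neg\last \land \chop(\sigma))$ holds iff $k<n$, $k+1 \neq n$, and $v,k+1 \modelsfin \chop(\sigma)$. This is equivalent to $k+1 \le n-1$ together with $v,k+1 \modelsfin \chop(\sigma)$. Since $k+1 \le n-1$, the induction hypothesis turns the latter into $v',k+1 \modelsfin \finX(\sigma)$. Finally, $k+1 \le n-1$ is exactly the strong-next condition in $v'$.

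For $\ltlfU$ (and $\ltlfF$, as the special case $\sigma_1 = \ltlftrue$), suppose $v,k \modelsfin \chop(\sigma_1)\,\ltlfU\,(\neg\last\land\chop(\sigma_2))$ with witness $m \in [k,n]$. The guard forces $m \le n-1$. All intermediate positions $j \in [k,m)$ are then also below $n-1$, so the induction hypothesis applies at $m$ and at every $j$, giving the same witness for $\finX(\sigma_1)\,\ltlfU\,\finX(\sigma_2)$ in $v'$. Conversely, any witness $m \in [k,n-1]$ in $v'$ satisfies $\neg\last$ in $v$, and the induction hypothesis transfers it back.

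For $\ltlfM$, the argument is dual. The guard sits on the $\sigma_1$-witness $m$, forcing $m \le n-1$. The $\sigma_2$-positions $j \in [k,m]$ are then all at most $n-1$, so the induction hypothesis again applies to every position involved.

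I expect no deep obstacle here. The main point requiring care is choosing the right induction invariant: restricting to positions $k<n$ and checking, case by case, that each guard $\neg\last$ ensures that every subformula is only ever evaluated at positions strictly before $n$. The $\ltlfM$ case deserves a second look, because there the guard protects the inclusive range $[k,m]$ only indirectly, through bounding $m$.
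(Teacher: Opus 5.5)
Your proof is correct and follows essentially the same route as the paper: a structural induction strengthened to all positions strictly before the last one. It rests on the same two facts, namely that $\neg\last$ holds exactly at positions $\le N-1$ (so every guarded witness, and hence every position at which an operand is evaluated, lies in the shortened trace) and that both traces carry the same letters there. The only differences are cosmetic: you re-index to $v = u[p..N]$ and vary $k$, whereas the paper evaluates at position $p$ of $u$ and varies $p$; and you fold $\ltlfF$ into the $\ltlfU$ case rather than treating it as a separate case.
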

Intuitively, $\chop(\sigma)$ holds on the full trace exactly when $\sigma$
holds on the trace with its last letter removed.
For example, let $\sigma = \ltlfF a$, so that $\chop(\sigma) = \ltlfF(\neg\last \land a)$.
Then for $u = \{\}\{a\}$ we have $u \modelsfin \finX(\sigma)$ and $u \not\modelsfin \chop(\sigma)$.
Further, for $v = \{\}\{a\}\{\}$ we have $v \modelsfin \finX(\sigma)$ and $v \modelsfin \chop(\sigma)$,
as satisfaction is decided before the last letter.

\begin{definition}[Tight completion]\label{def:tight}
Given an arbitrary \ltl formula $\sigma \in \Sigma_1$, we define
    $\tight(\sigma)$ = $\finX(\sigma) \land (\last \lor \neg \chop(\finX(\sigma)))$.
Furthermore, $\tight(\sigma)$ is linear in $\sigma$.
\end{definition}

Intuitively, $\tight(\sigma)$ captures the finite words that satisfy the formula $\finX(\sigma)$ \textit{tightly}, meaning that without the last letter they would not satisfy ($\neg\chop(\sigma)$).
It also captures satisfying words of length one (indicated by the formula $\last$ being true in the first step), as then the above reasoning is ill-defined.

\begin{restatable}[]{lemma}{tightcompllemma}\label{lemma:tight-compl}
For every $\sigma \in \Sigma_1$, $w \in \Sigma^\omega$, and all $0 \leq p \leq n$: $w[p..n] \modelsfin \tight(\sigma) \iff n = c_\sigma(p)$.
\end{restatable}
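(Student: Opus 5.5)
The plan is to unfold $\tight(\sigma)$ into its two conjuncts and evaluate each on the finite infix $u = w[p..n]$ (re-indexed to start at $0$, last index $N = n-p$). By definition, $u \modelsfin \tight(\sigma)$ iff $u \modelsfin \finX(\sigma)$ and $u \modelsfin \last \lor \neg\chop(\finX(\sigma))$. Here $\chop$ is applied to $\finX(\sigma)$; I read this as $\chop$ acting on the strong-next syntax, so that Lemma~\ref{lem:chop} is used with $\sigma$ interpreted over finite traces (i.e.\ as $\finX(\sigma)$). The first conjunct is handled directly by Lemma~\ref{lem:monotonicity}: $w[p..n] \modelsfin \finX(\sigma) \iff n \geq c_\sigma(p)$.

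For the second conjunct I split on $n = p$ versus $n > p$. If $n = p$, the trace has length one, so $u \modelsfin \last$ holds at position $0$ and the conjunct is true. Then $\tight(\sigma)$ holds iff $p \geq c_\sigma(p)$, which together with $c_\sigma(p) \geq p$ gives exactly $n = p = c_\sigma(p)$. If $n > p$, then $N \geq 1$ and $\last$ is false at position $0$, so the conjunct reduces to $u \not\modelsfin \chop(\finX(\sigma))$. Lemma~\ref{lem:chop} with starting index $0$ turns this into $w[p..n-1] \not\modelsfin \finX(\sigma)$. By Lemma~\ref{lem:monotonicity} that holds iff $n-1 < c_\sigma(p)$, i.e.\ $n \leq c_\sigma(p)$. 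Combined with the first conjunct $n \geq c_\sigma(p)$, this yields $n = c_\sigma(p)$. The converse direction is immediate from the same chain of equivalences. If $c_\sigma(p) = \infty$, both sides are false, since the first conjunct already fails.

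The main obstacle is the bookkeeping in applying Lemma~\ref{lem:chop}. That lemma is stated for $\sigma \in \Sigma_1$ and compares $\chop(\sigma)$ on $u[p..N]$ with $\sigma$ on $u[p..N-1]$ under $\modelsfin$, whereas Definition~\ref{def:tight} applies $\chop$ to $\finX(\sigma)$. I would justify the bridge by observing that $\finX$ only replaces $\ltlfX$ by the strong next, which is what $\modelsfin$ assigns to $\ltlfX$ anyway. So $\chop(\finX(\sigma))$ coincides syntactically with $\chop(\sigma)$ read as an \ltlf formula, and $u[0..N-1] \modelsfin \sigma$ means exactly $w[p..n-1] \modelsfin \finX(\sigma)$.

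A further care point is that $\last$ inside $\tight$ is evaluated at position $0$ of the infix. I would check that the outermost $\last$ refers to position $0$ of $u$ and not to later positions, so that it is false whenever $N \geq 1$. Once these two identifications are settled, the remaining case analysis is routine.
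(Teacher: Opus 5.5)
Your proposal is correct and follows the paper's proof essentially step for step. You unfold $\tight(\sigma)$, note that $\last$ holds at position $0$ of $w[p..n]$ exactly when $n = p$, apply Lemma~\ref{lem:chop} at position $0$ when $n \geq p+1$, turn both conditions into threshold comparisons via Lemma~\ref{lem:monotonicity}, and finish with the arithmetic, including the case $c_\sigma(p) = \infty$. Your explicit remark that $\chop(\finX(\sigma))$ is just $\chop(\sigma)$ read with strong nexts over finite traces justifies applying Lemma~\ref{lem:chop}, a step the paper leaves implicit.
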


By Lemma~\ref{lem:monotonicity} the truth value of $\finX(\sigma)$ along the prefixes of $w$ changes at most once, from false to true, exactly at $c_\sigma$.

Tightening therefore reduces the satisfying prefixes from infinitely many to exactly one, namely $w[0..c_\sigma]$.
If no prefix satisfied $\finX(\sigma)$, none satisfies $\tight(\sigma)$ either.
Existence is therefore unaffected, and Propositions~\ref{prop:sigma1-translation} and~\ref{prop:pi1-translation} hold with $\tight$ in place of $\finX$ (Corollaries~\ref{corollary:E} and~\ref{corollary:A} in Appendix~\ref{appendix:min-tight-completion}).

Applied to the suffix $w[p...]$, Lemma~\ref{lemma:tight-compl} takes the positional form: $w[p..n] \modelsfin \tight(\sigma) \iff n = c_{\sigma}(p)$.

\begin{theorem}\label{theorem:AE}
For every \ltl formula $\sigma \in \Sigma_1$ and every $w \in \Sigma^\omega$,
$w \modelsinf \ltlfG \ltlfF (\sigma) \iff w \modelsinf \forall \exists (\ltlfF (\tight(\sigma)))$.
\end{theorem}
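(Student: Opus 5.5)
The plan is to reduce both sides to statements about the set of completion indices $C = \{c_\sigma(p) : p \ge 0,\ c_\sigma(p) < \infty\}$ and then argue that $C$ is infinite exactly when $\sigma$ holds infinitely often.

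\emph{Step 1 (unfolding the right-hand side).} By the finite-trace semantics of $\ltlfF$, $w[0..n] \modelsfin \ltlfF(\tight(\sigma))$ holds iff there is some $p \in [0,n]$ with $w[p..n] \modelsfin \tight(\sigma)$. By Lemma~\ref{lemma:tight-compl} this is equivalent to $n = c_\sigma(p)$. Hence the prefixes satisfying $\ltlfF(\tight(\sigma))$ are exactly those ending in $C$, and $w \modelsinf \forall\exists(\ltlfF(\tight(\sigma)))$ iff $C$ is infinite. We only need that $w[p..n]$ is the suffix $w[p...]$ read up to $n$, which is the positional form of the lemma the paper already notes.

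\emph{Step 2 (unfolding the left-hand side).} By Lemma~\ref{lem:2-3} and Lemma~\ref{lem:2-4}, $w[p...] \modelsinf \sigma$ iff $c_\sigma(p) < \infty$. So $w \modelsinf \ltlfG\ltlfF(\sigma)$ iff the set $P = \{p : c_\sigma(p) < \infty\}$ is infinite.

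\emph{Step 3 (comparing $P$ and $C$).} This is where the tightness matters. The map $p \mapsto c_\sigma(p)$ sends $P$ onto $C$.
\begin{itemize}
\item If $P$ is infinite, then $C$ is infinite, because $c_\sigma(p) \ge p$ makes the image unbounded.
\item Conversely, if $C$ is infinite then $P$ is infinite, since every element of $C$ is the image of some $p \in P$ and a finite $P$ has finite image.
\end{itemize}
So the one-to-infinity problem from the motivating example disappears: each starting point contributes exactly one endpoint. Combining Steps 1 to 3 gives the equivalence.

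\emph{Main obstacle.} I expect the substantive difficulty to lie inside Lemma~\ref{lemma:tight-compl}, which links $\chop$ (Lemma~\ref{lem:chop}) and monotonicity (Lemma~\ref{lem:monotonicity}); that lemma is assumed here. The only delicate points in this proof are two.
\begin{itemize}
\item The finite-trace $\ltlfF$ ranges over $p \in [0,n]$, which matches the hypothesis $0 \le p \le n$ of that lemma.
\item The edge case $n = p$ is covered by the $\last$ disjunct of $\tight$, and this too is handled by the lemma.
\end{itemize}
I would check these alignments explicitly before concluding.
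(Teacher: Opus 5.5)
Your proof is correct and takes essentially the same route as the paper. Both arguments use Lemma~\ref{lemma:tight-compl} to characterize the prefixes satisfying $\ltlfF(\tight(\sigma))$ as exactly those whose end index is the completion index of some position. Both then use $c_\sigma(p) \geq p$ for the forward direction, and the fact that finitely many starting positions yield finitely many completion indices for the converse; your surjection $p \mapsto c_\sigma(p)$ from $P$ onto $C$ is simply a tidier packaging of the paper's two-direction argument.
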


\begin{proof}
We first describe the prefixes that satisfy the \ltlf formula under the quantifier. By the semantics of $\ltlfF$, a prefix $w[0..n]$ satisfies $\ltlfF(\tight(\sigma))$ exactly when there is a position $p \leq n$ with $w[p..n] \modelsfin \tight(\sigma)$. By Lemma~\ref{lemma:tight-compl}, this holds exactly when $n = c_\sigma(p)$ for some $p \leq n$. The satisfying prefixes are therefore exactly those whose end index is the completion index of some position inside them.

($\Rightarrow$) Assume $w \modelsinf \ltlfG\ltlfF(\sigma)$, and let $m_1 < m_2 < \cdots$ be the infinitely many positions with $w[m_k...] \modelsinf \sigma$. By Lemma~\ref{lem:2-4}, every completion index $c_\sigma(m_k)$ is finite, and $c_\sigma(m_k) \geq m_k$ holds by definition. Set $n_k = c_\sigma(m_k)$. By the description above, every prefix $w[0..n_k]$ satisfies $\ltlfF(\tight(\sigma))$, with $p = m_k$ as the required position. Now let a deadline $i$ be given. Since the positions $m_k$ are unbounded, some $m_k$ satisfies $m_k \geq i$, and then $n_k \geq m_k \geq i$. Hence for every $i$ there is a satisfying prefix with end index at least $i$, which is $w \modelsinf \forall \exists (\ltlfF(\tight(\sigma)))$.

($\Leftarrow$) We prove the contrapositive. Assume $w \not\modelsinf \ltlfG\ltlfF(\sigma)$, and let $S$ be the set of positions $p$ with $w[p...] \modelsinf \sigma$, which is finite by assumption. Consider any $n$ such that $w[0..n]$ satisfies $\ltlfF(\tight(\sigma))$, and let $p \leq n$ be a position with $w[p..n] \modelsfin \tight(\sigma)$. Two facts hold for this position.
First, $p \in S$. The first conjunct of $\tight(\sigma)$ gives $w[p..n] \modelsfin \finX(\sigma)$, and Lemma~\ref{lem:2-3} gives $w[p...] \modelsinf \sigma$.
Second, $n = c_\sigma(p)$, by Lemma~\ref{lemma:tight-compl}. Each position of $S$ therefore determines exactly one end index.

Combining the two facts, the end index of every satisfying prefix lies in the set $\{\, c_\sigma(p) \mid p \in S \,\}$, which has at most $|S|$ elements. Only finitely many prefixes satisfy $\ltlfF(\tight(\sigma))$, so $w \not\modelsinf \forall \exists(\ltlfF(\tight(\sigma)))$.
\end{proof}

\begin{restatable}[]{theorem}{existsforalltheorem}\label{theorem:EA}
Given an \ltl formula $\pi \in \Pi_1$, we obtain:
$w \modelsinf \ltlfF \ltlfG (\pi) \iff w \modelsinf \exists\forall(\ltlfG(\neg \tight(\neg \pi)))$.    
\end{restatable}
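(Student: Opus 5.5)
The plan is to reduce the statement to Theorem~\ref{theorem:AE} by duality rather than repeat the counting argument. Set $\sigma := \neg\pi$, which lies in $\Sigma_1$ since $\pi \in \Pi_1$ and the NNF dual swaps the $\nu$-operators $\ltlfG,\ltlfW,\ltlfR$ with the $\mu$-operators $\ltlfF,\ltlfU,\ltlfM$. The NNF dual is involutive, so $\neg\sigma = \pi$. On the \ltl side this gives $w \modelsinf \ltlfF\ltlfG(\pi) \iff w \not\modelsinf \ltlfG\ltlfF(\sigma)$, by the standard duality of $\ltlfF\ltlfG$ and $\ltlfG\ltlfF$ over infinite traces.

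On the \ltlfplus side, I would first note that for every finite trace $u$ we have $u \modelsfin \ltlfG(\neg\chi) \iff u \not\modelsfin \ltlfF(\chi)$. This follows from the finite-trace semantics, since $\ltlfG\psi \equiv \psi\,\ltlfW\,\ltlffalse$ and $\ltlfF\psi \equiv \ltlftrue\,\ltlfU\,\psi$ are NNF duals. The NNF dual $\neg$ agrees semantically with Boolean negation on \ltlf; this is standard and I would note it briefly, since $\tight(\sigma)$ contains $\last$ and $\ltlfN$, and the $\ltlfX/\ltlfN$ duality must be respected. Next, unfold the prefix quantifiers: $w \modelsinf \exists\forall\chi'$ iff there is an $i$ such that every prefix $w[0..j]$ with $j \ge i$ satisfies $\chi'$. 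This holds iff only finitely many prefixes fail $\chi'$, which is exactly the negation of $w \modelsinf \forall\exists(\neg\chi')$. Taking $\chi' = \ltlfG(\neg\tight(\sigma))$ and using the previous equivalence prefix by prefix, we obtain
\[
w \modelsinf \exists\forall\bigl(\ltlfG(\neg\tight(\sigma))\bigr) \iff w \not\modelsinf \forall\exists\bigl(\ltlfF(\tight(\sigma))\bigr).
\]

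Combining these with Theorem~\ref{theorem:AE} applied to $\sigma$ closes the chain:
\[
w \modelsinf \ltlfF\ltlfG(\pi) \iff w \not\modelsinf \ltlfG\ltlfF(\sigma) \iff w \not\modelsinf \forall\exists(\ltlfF(\tight(\sigma))) \iff w \modelsinf \exists\forall(\ltlfG(\neg\tight(\neg\pi))).
\]

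The only step needing care is the semantic correctness of the NNF dual on \ltlf, in particular that $\neg\tight(\sigma)$ is satisfied on a finite trace exactly when $\tight(\sigma)$ is not. I expect this to be routine by structural induction, pairing $\ltlfX$ with $\ltlfN$, $\ltlfU$ with $\ltlfR$, and $\ltlfM$ with $\ltlfW$. Everything else is Boolean bookkeeping with the quantifiers.

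As a sanity check on the direct reading, a prefix $w[0..n]$ satisfies $\ltlfG(\neg\tight(\sigma))$ iff no $p \le n$ has $n = c_\sigma(p)$, by Lemma~\ref{lemma:tight-compl}. So the right-hand side says that eventually no end index is a completion index. Since each position $p$ has at most one completion index, and by Lemmas~\ref{lem:2-3} and~\ref{lem:2-4} it has one exactly when $w[p...] \modelsinf \sigma$, this is equivalent to $\sigma$ holding at only finitely many positions, i.e.\ to $\ltlfF\ltlfG(\pi)$.
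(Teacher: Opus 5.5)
Your proposal is correct and takes essentially the paper's route: the finite-trace duality $\ltlfG(\neg\chi)\equiv\neg\ltlfF(\chi)$ and the quantifier duality between $\exists\forall$ and $\forall\exists$ reduce the claim to Theorem~\ref{theorem:AE} applied to $\neg\pi\in\Sigma_1$. Your remark that the \ltlf NNF dual is semantically Boolean negation, pairing $\ltlfX$ with $\ltlfN$ so that $\neg\tight(\neg\pi)$ is handled correctly, makes explicit a step the paper uses without comment.
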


\begin{proof}
    Since $\ltlfG(\neg \tight(\neg \pi)) \equiv \neg \ltlfF(\tight(\neg \pi))$ on finite traces, $w \modelsinf \exists \forall(\ltlfG(\neg \tight(\neg \pi))) \iff w \not \modelsinf \forall \exists (\ltlfF(\tight(\neg \pi)))$. By Theorem~\ref{theorem:AE} applied to $\neg \pi \in \Sigma_1$, the latter holds iff $w \not \modelsinf \ltlfG \ltlfF(\neg \pi)$, which is $w \modelsinf \ltlfF \ltlfG (\pi)$.
\end{proof}

\section{Translating $\Sigma_2$ Bodies}\label{sec:sigma2}
The remaining step to complete the translation is to convert the $\Sigma_2$ body.
First, recall that $\Sigma_2$ formulas are the closure of safety formulas under co-safety and Boolean operators.
This makes the persistence quantifier $\exists\forall$ the natural choice for the $\Sigma_2$ body, as already noted by \citet{DBLP:conf/ijcai/AminofGRV25}.
Further, recall that in the normalized formula each $\Sigma_2$ body is conjoined with a limit-behavior $(M, N)$, and every safety leaf $\ltlfG\pi$ of the body reappears in its disjunct as the conjunct $\ltlfF\ltlfG\pi$ (see Definition~\ref{def:body-rewrite} and Theorem~\ref{thm:normalisation}).
We use this context to show that the $\Sigma_2$ body is virtually equivalent to a $\Sigma_1$ formula.
Intuitively, for a trace that exhibits the limit-behavior, there is an index from which all safety leaves of the body hold, as they are guaranteed by the limit-behavior.
Satisfying a safety leaf then corresponds to reaching the index from which it is guaranteed to hold.
Informally speaking, this turns every safety leaf into a reachability objective, and the formula thus virtually behaves like a $\Sigma_1$ formula (note that this does not make a further rewrite possible, as the index differs from trace to trace and is in general unbounded).
Thus, Proposition~\ref{prop:sigma1-translation} (i.e.\ the translation through finite reinterpretation) becomes conceptually applicable.

For this, we need to lift the finite reinterpretation to $\Sigma_2$, which requires care with the next operators.
\ltl has a single next operator, whereas \ltlf has two, since on a finite trace a next letter need not exist while on an infinite trace it always does.
Each of the two \ltlf nexts has its natural fragment.
The strong next requires a next step to exist and satisfy its operand, which fits the co-safety fragment.
The weak next instead requires the operand to hold at the next step only when one exists, which fits the safety fragment.
Using the wrong one yields degenerate formulas, for instance $\ltlfF\ltlfN(\psi)$, a tautology on finite traces, and $\ltlfG\ltlfX(\psi)$, which is unsatisfiable.
The reinterpretation from \ltl to \ltlf therefore substitutes each next operator according to its context.
This is consistent with Proposition~\ref{prop:sigma1-translation}, where the formulas are $\Sigma_1$ and every next is therefore strong.

\begin{definition}[Extended finite reinterpretation]\label{def:fin-c}
For a context $c \in \{\ltlfX, \ltlfN\}$, the function $\fin_c$ traverses an NNF \ltl formula $\varphi$ 
and rewrites each next-operator, depending on the context, while leaving everything else in place.
In the $\ltlfX$-context a next becomes the strong next, in the $\ltlfN$-context the weak next.

Traversing an operand of $\ltlfF$, $\ltlfU$, $\ltlfM$ switches the context to $c=\ltlfX$. 
Traversing an operand of $\ltlfG$, $\ltlfW$, $\ltlfR$ switches it to $c=\ltlfN$. 
Literals, Boolean connectives and the $\ltlfX$ operator preserve the context. 
\end{definition}

Notice that on a $\Sigma_1$ formula no operator switches the context to $\ltlfN$, so $\finX$ of this definition produces only strong nexts and coincides with Definition~\ref{def:fin}. 

We say a trace $w$ \emph{exhibits} a limit-behavior $(M, N)$ when
\[
w \modelsinf 
\Big(
\bigwedge_{\ltlfF\ltlfG\psi \in N} \!\ltlfF\ltlfG(\psi\fgrw{M}) \wedge \bigwedge_{\ltlfG\ltlfF\psi \in M}\ltlfG\ltlfF(\psi\gfrw{N}) \Big).
\]
For the remaining section, fix one disjunct of the normal form, given by its limit-behavior $(M, N)$, and an infinite trace $w$ that exhibits $(M, N)$.

By Definition~\ref{def:body-rewrite}, the body formula $\varphi\rewM{M,N}$ of the disjunct is generated by the grammar whose atoms are the literals, $\ltlftrue$, $\ltlffalse$, and the safety leaves $\ltlfG(\psi\fgrw{M})$ with $\ltlfF\ltlfG\psi \in N$, closed under $\land$, $\lor$, $\ltlfX$, $\ltlfF$, $\ltlfU$, $\ltlfM$.

Every safety leaf is introduced by the rewrite and has a $\Pi_1$ operand $\psi\fgrw{M}$, shared with a conjunct of the limit-behavior that $w$ exhibits.

\begin{definition}[Persistence index]
\label{def:stab-index}
The trace $w$ exhibits $(M, N)$, so each $\ltlfF\ltlfG\psi \in N$ has a least position from which onward $\psi\fgrw{M}$ holds at every position of $w$.
The persistence index $T$ is the largest of these finitely many positions, and $T=0$ when $N$ is empty.\footnote{Unlike the stabilization index of~\citet{DBLP:journals/jacm/EsparzaRS24}, our trace has only to exhibit the limit-behavior of the fixed $(M, N)$ and may exhibit that of other limit-behaviors as well.}
\end{definition}

By Definition~\ref{def:stab-index}, every operand $\psi\fgrw{M}$ with $\ltlfF\ltlfG\psi \in N$ holds at every position from $T$ onward.
Every safety leaf $\ltlfG(\psi\fgrw{M})$ therefore holds from $T$ onward as well, that is, $w[i...] \modelsinf \ltlfG(\psi\fgrw{M})$ for every $i \geq T$.
Since their value is known from then on, we can \emph{project} these safety leaves out of the body formula (i.e.\ replace them with $\ltlftrue$), which yields a $\Sigma_1$ formula.
We formalize this as follows.

\begin{definition}[Projection]\label{def:proj}
For a body formula $\varphi$ of $(M, N)$, let $\proj(\varphi)$ be the formula obtained by replacing every safety leaf $\ltlfG(\psi\fgrw{M})$ of $\varphi$ with $\ltlftrue$.
\end{definition}

As this removes all safety leaves from the $\Sigma_2$ body, the resulting formula is in $\Sigma_1$.

\begin{restatable}[]{lemma}{projlemma}\label{lem:proj}
    Given a disjunct of the $\Delta_2$ normalization with limit-behavior $(M, N)$, a trace $w$ that exhibits $(M, N)$, and the persistence index $T$, for every body formula $\varphi$ and every position $i \geq T$, $w[i...] \modelsinf \varphi$ iff $w[i...] \modelsinf \proj(\varphi)$, and on every prefix $w[0..n]$ with $n \geq i$, $w[i..n] \modelsfin \finX(\varphi)$ iff $w[i..n] \modelsfin \finX(\proj(\varphi))$.
\end{restatable}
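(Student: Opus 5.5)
The plan is a structural induction over the grammar of body formulas: atoms are literals, $\ltlftrue$, $\ltlffalse$ and safety leaves $\ltlfG(\psi\fgrw{M})$, closed under $\land,\lor,\ltlfX,\ltlfF,\ltlfU,\ltlfM$. The operators of the body only inspect positions at or after the current one. I will therefore strengthen both claims so that they hold uniformly for all positions. For the infinite claim: for every body formula $\varphi$ and every $j \geq T$, $w[j...] \modelsinf \varphi$ iff $w[j...] \modelsinf \proj(\varphi)$. For the finite claim: for every $n$ and every $j$ with $T \leq j \leq n$, $w[j..n] \modelsfin \finX(\varphi)$ iff $w[j..n] \modelsfin \finX(\proj(\varphi))$. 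Instantiating $j=i$ gives the lemma.

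\emph{Infinite part.} The base case of a safety leaf is immediate from Definition~\ref{def:stab-index}: $w[j...] \modelsinf \ltlfG(\psi\fgrw{M})$ for every $j \geq T$, so the leaf and its replacement $\ltlftrue$ agree there. Literals and constants are unchanged by $\proj$. For $\land,\lor$ the induction hypothesis applies directly. For $\ltlfX,\ltlfF,\ltlfU,\ltlfM$, the semantic clauses only quantify over positions $k \geq j \geq T$, so the hypothesis applies at each such position and the clauses coincide.

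\emph{Finite part.} All body operators are $\mu$-operators or $\ltlfX$, so by Definition~\ref{def:fin-c} the context stays $\ltlfX$ along the body. Consequently $\finX$ commutes with $\proj$ on the body skeleton, and differs only inside leaves. A leaf $\ltlfG(\psi\fgrw{M})$ becomes $\ltlfG(\fin_{\ltlfN}(\psi\fgrw{M}))$. Since $\psi\fgrw{M}\in\Pi_1$ has no $\mu$-operators, every next inside it becomes weak. The inductive steps again only look at positions $k\in[j,n]$ with $k \geq T$, and go through exactly as in the infinite part using the finite semantics. The only real work is the base case: showing that the translated leaf is true on every infix $w[j..n]$ with $T\le j\le n$.

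For this I prove an auxiliary claim, which is the main obstacle. For every $\pi\in\Pi_1$ without $\mu$-operators, every $k$, and every $n\ge k$: if $w[k...]\modelsinf\pi$, then $w[k..n]\modelsfin\fin_{\ltlfN}(\pi)$. This says that safety formulas with weak next hold on every finite prefix of a satisfying suffix, the safety dual of Lemma~\ref{lem:2-3} and in the spirit of \citet{DBLP:journals/fmsd/KupfermanV01}. I prove it by induction on $\pi$:
\begin{itemize}
\item Literals and Boolean connectives are direct.
\item For $\ltlfX\pi'$, which becomes $\ltlfN$: if $k=n$ the weak next holds vacuously; otherwise apply the hypothesis at $k{+}1$.
\item For $\pi_1\ltlfW\pi_2$: if $\pi_1$ holds forever from $k$, then by the hypothesis $\pi_1$ holds on all $w[m..n]$ for $m\in[k,n]$. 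Otherwise take the witness $m$ for $\pi_2$. If $m\le n$, the finite $\ltlfU$ clause is satisfied by the hypothesis. If $m>n$, then $\pi_1$ holds at all of $[k,n]$, and the finite ``$\forall j\in[k,n]$'' clause applies.
\item $\ltlfR$ and $\ltlfG$ (as $\cdot\,\ltlfW\,\ltlffalse$) are handled the same way.
\end{itemize}
With the auxiliary claim in hand, fix $T\le j\le n$. For every $m\in[j,n]$ we have $w[m...]\modelsinf\psi\fgrw{M}$ since $m\ge T$. Hence $w[m..n]\modelsfin\fin_{\ltlfN}(\psi\fgrw{M})$, and so the finite $\ltlfG$ leaf holds on $w[j..n]$, matching $\ltlftrue$. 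This closes the base case and the lemma.
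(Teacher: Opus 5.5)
Your proposal is correct and follows the paper's proof: a structural induction over the body grammar at all positions $\geq T$ at once, in which the only nontrivial case is the safety leaf. That case rests on exactly the auxiliary fact you prove, namely that a $\Pi_1$ formula true on $w[k...]$ has its weak reinterpretation true on every infix $w[k..n]$; this is the paper's Lemma~\ref{lem:pi-1-formulas}(i), proved by the same case analysis, with the cosmetic difference that you apply it to the operand $\psi\fgrw{M}$ at each $m \in [j,n]$ and assemble the finite $\ltlfG$ clause yourself, whereas the paper applies it directly to the leaf $\ltlfG(\psi\fgrw{M})$ viewed as a $\Pi_1$ formula.
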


Lemma~\ref{lem:proj} states that past $T$ the same suffixes of $w$ satisfy a body formula and its projection, and the same infixes satisfy their finite reinterpretations.
From $T$ onward the body therefore behaves exactly like the $\Sigma_1$ formula $\proj(\varphi)$, and the lemmas of Section~\ref{sec:min-tight-completion} carry over to it through the projection.
In particular, Lemma~\ref{lem:2-3} extends past $T$ from $\Sigma_1$ formulas to body formulas.
Suppose $w[i..n] \modelsfin \finX(\varphi)$ for some $i \geq T$.
By Lemma~\ref{lem:proj} we may replace $\varphi$ by $\proj(\varphi)$, so $w[i..n] \modelsfin \finX(\proj(\varphi))$.
Now $\proj(\varphi) \in \Sigma_1$, so Lemma~\ref{lem:2-3} applies and gives $w[i...] \modelsinf \proj(\varphi)$.
Replacing $\proj(\varphi)$ by $\varphi$ again, by Lemma~\ref{lem:proj}, gives $w[i...] \modelsinf \varphi$.

\begin{definition}[Convergent]\label{def:convergent}
A formula $\varphi$ is convergent if there is a bound $n_0$ such that, for every $n \geq n_0$, $w[0..n] \modelsfin \finX(\varphi)$ holds if and only if $w \modelsinf \varphi$.
\end{definition}

\begin{restatable}[]{lemma}{bodystablemma}\label{lem:body-stability}
Given a disjunct of the $\Delta_2$ normalization with limit-behavior $(M, N)$ and a trace $w$ that exhibits $(M, N)$, every body formula of $(M, N)$ is convergent.
\end{restatable}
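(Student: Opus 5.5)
Every body formula of $(M,N)$ is generated from literals, $\ltlftrue$, $\ltlffalse$ and safety leaves by $\land$, $\lor$, $\ltlfX$, $\ltlfF$, $\ltlfU$, $\ltlfM$. I will therefore prove a stronger, positional claim by structural induction on $\varphi$ and then instantiate it at position $0$.

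\emph{Claim.} For every body formula $\varphi$ and every position $i$ there is a bound $n_0(\varphi,i) \geq i$ such that for all $n \geq n_0(\varphi,i)$,
\[
w[i..n] \modelsfin \finX(\varphi) \iff w[i...] \modelsinf \varphi .
\]
Applied with $i=0$, this is exactly convergence.

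The base cases are as follows.
\begin{itemize}
\item For literals, $\ltlftrue$ and $\ltlffalse$, take $n_0 = i$.
\item For a safety leaf $\ltlfG(\psi\fgrw{M})$, the context switches to $\ltlfN$, so $\finX$ yields the $\Pi_1$ formula $\ltlfG$ of an \ltlf formula with only weak nexts. On the infinite side, Proposition~\ref{prop:pi1-translation} gives that $w[i...] \modelsinf \ltlfG(\psi\fgrw{M})$ holds iff every finite infix $w[i..n]$ satisfies the weak reinterpretation. I will reuse that argument and check that the weak reinterpretation of a $\Pi_1$ formula is the dual of the strong reinterpretation of its negation.
\item If the leaf holds on $w[i...]$, all infixes satisfy it, so any $n_0$ works.
\item If it fails, some finite infix already witnesses the failure, and by the anti-monotone dual of Lemma~\ref{lem:monotonicity} every longer infix fails too. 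We take $n_0$ to be that witness.
\end{itemize}
This case does not even need $T$. Still, $T$ explains why the leaf is true from $T$ onward, consistently with Lemma~\ref{lem:proj}.

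For the inductive cases, $\land$ and $\lor$ take the maximum of the two bounds. For $\ltlfX\varphi_1$, take $n_0(\varphi_1,i{+}1)$, using that strong next on $w[i..n]$ with $n>i$ reduces to $w[i{+}1..n]$. The main work is the $\mu$-operators $\ltlfU$, $\ltlfF$ and $\ltlfM$, which quantify over unboundedly many positions $k$; I treat $\ltlfU$ in detail below. Positions $k \geq \max(i,T)$ can be handled uniformly. For those, Lemma~\ref{lem:proj} replaces each subformula by its $\Sigma_1$ projection, so Lemmas~\ref{lem:monotonicity}, \ref{lem:2-3} and \ref{lem:2-4} apply and the truth of $\finX$ on $w[k..n]$ is monotone in $n$ and sound.

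Consider $\varphi_1 \ltlfU \varphi_2$ and suppose first that $w[i...] \modelsinf \varphi_1 \ltlfU \varphi_2$ with witness $k$. Take $n_0$ to be the maximum of $n_0(\varphi_2,k)$ and the $n_0(\varphi_1,j)$ for $j\in[i,k)$. These are finitely many bounds, so every $n\ge n_0$ gives a finite witness.

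Conversely, suppose $w[i..n] \modelsfin \finX(\varphi_1 \ltlfU \varphi_2)$ for arbitrarily large $n$, with finite witness $k_n$. Either the witnesses $k_n$ stay below $\max(i,T)$ infinitely often, or they grow. If they stay below, pigeonhole yields a fixed $k$, and the induction hypotheses at the finitely many positions $j\le k$ transfer the satisfaction to $w$. If $k_n \geq T$, the infix $w[k_n..n]$ satisfies $\finX(\varphi_2)$, hence $\finX(\proj(\varphi_2))$, hence $w[k_n...]\modelsinf\varphi_2$ by Lemmas~\ref{lem:2-3} and~\ref{lem:proj}. Satisfaction of $\varphi_1$ at the positions $j<k_n$ with $j\ge T$ likewise transfers soundly, again by the lemmas applied to the projection. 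Positions $j<T$ are finitely many and are handled by their inductive bounds.

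The obstacle is getting a single bound: I need that if $w[i...]\not\modelsinf\varphi$, then only finitely many $n$ admit a finite witness. I would handle this by choosing $n_0$ beyond all inductive bounds at positions $<\max(i,T)$. Above that point, every finite witness is sound by the transfer just described, so it yields a genuine infinite witness, contradicting falsity. Hence a single $n_0$ works in both directions. The $\ltlfM$ case is symmetric and $\ltlfF$ is the special case $\ltlftrue\,\ltlfU\,\cdot$.
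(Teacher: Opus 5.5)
Your proof is correct and follows the paper's argument. Like the paper, you run a structural induction proving convergence at every position. Your $\ltlfU$ case rests, as the paper's does, on convergence at the finitely many positions below $T$ plus soundness past $T$ obtained from Lemma~\ref{lem:proj} and Lemma~\ref{lem:2-3} (the content of Lemma~\ref{lem:until} and Corollary~\ref{cor:body-sound}), with $\ltlfF$ and $\ltlfM$ treated as instances of the same argument. The one local difference is the safety-leaf case. There you derive the paper's Lemma~\ref{lem:pi-1-formulas}, in fact the stronger anti-monotone version, from Proposition~\ref{prop:pi1-translation}, the duality $\finN(\pi) \equiv \neg\finX(\neg\pi)$ and Lemma~\ref{lem:monotonicity}, instead of by a separate structural induction over $\Pi_1$; this is valid and somewhat shorter.
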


Its proof, in Appendix~\ref{appendix:sigma2}, 
is a structural induction on the body formula $\varphi\rewM{M,N}$.
The key idea is that there is a position past which every safety leaf (see Definition~\ref{def:body-rewrite}) holds, and beyond that position a body formula reduces to the $\Sigma_1$ formula in which every safety leaf is $\ltlftrue$, so a finite prefix satisfies it only when the infinite trace does (Lemma~\ref{lem:2-3}).

\begin{restatable}[]{theorem}{sigmatwotheorem}\label{theorem:sigma2}
Given a disjunct of the $\Delta_2$ normalization with limit-behavior $(M, N)$ and body formula $\varphi\rewM{M,N}$, and a trace $w$ that exhibits $(M, N)$, we obtain
    $$
    w \modelsinf \varphi\rewM{M,N} \iff w \modelsinf \exists \forall(\finX(\varphi\rewM{M,N})).
    $$    
\end{restatable}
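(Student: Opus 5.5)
The plan is to derive the theorem directly from Lemma~\ref{lem:body-stability}, applied to the body formula $\varphi\rewM{M,N}$ itself. Since $w$ exhibits $(M, N)$, that lemma shows the body is convergent: there is a bound $n_0$ such that for every $n \geq n_0$,
\[
w[0..n] \modelsfin \finX(\varphi\rewM{M,N}) \iff w \modelsinf \varphi\rewM{M,N}.
\]
The two directions of the theorem then come from unfolding the semantics of $\exists\forall$. Here $\finX$ is understood in the sense of Definition~\ref{def:fin-c}, which places weak nexts below the safety leaves.

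For ($\Rightarrow$), assume $w \modelsinf \varphi\rewM{M,N}$. By convergence, every prefix $w[0..j]$ with $j \geq n_0$ satisfies $\finX(\varphi\rewM{M,N})$. Choosing $i = n_0$ witnesses $w \modelsinf \exists\forall(\finX(\varphi\rewM{M,N}))$.

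For ($\Leftarrow$), assume there is some $i$ such that every prefix $w[0..j]$ with $j \geq i$ satisfies $\finX(\varphi\rewM{M,N})$. Take $j = \max(i, n_0)$. Then $w[0..j] \modelsfin \finX(\varphi\rewM{M,N})$, and because $j \geq n_0$, convergence gives $w \modelsinf \varphi\rewM{M,N}$.

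Given the convergence lemma, the theorem itself is routine; all the real difficulty is already inside Lemma~\ref{lem:body-stability}. If I had to justify that lemma as well, I would argue by structural induction using the persistence index $T$ from Definition~\ref{def:stab-index} together with Lemma~\ref{lem:proj}. I expect the following cases to be the obstacle.

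\textbf{Safety leaves $\ltlfG(\psi\fgrw{M})$.} A safety leaf is true on $w$ from $T$ onward but may fail at position $0$. So the induction hypothesis must be stated positionally: for each subformula and each start position $p$, the finite-prefix truth of $\finX$ on $w[p..n]$ stabilises for large $n$ to the infinite truth at $p$. For a leaf this follows from Proposition~\ref{prop:pi1-translation}-style reasoning with weak nexts.

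\textbf{$\mu$-operators $\ltlfU$ and $\ltlfM$.} Here the witness position is quantified. For $\ltlfU$ the witness $k$ is bounded when the infinite formula holds. When it does not hold, there are two regimes:
\begin{itemize}
\item witnesses $k \geq T$: Lemma~\ref{lem:proj} together with Lemma~\ref{lem:2-3} rules out spurious finite witnesses;
\item witnesses $k < T$: there are finitely many such positions, and each stabilises by the induction hypothesis, so a single uniform bound exists.
\end{itemize}
Taking the maximum of these finitely many bounds gives the required $n_0$.
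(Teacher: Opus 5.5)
Your proof is correct and matches the paper's argument. The paper also derives the theorem directly from the convergence of the body (Lemma~\ref{lem:body-stability}) and unfolds the semantics of $\exists\forall$ in both directions, as you do with the witnesses $i = n_0$ and $j = \max(i, n_0)$; your sketch of the convergence lemma (positional induction, finitely many positions below $T$, soundness past $T$ via Lemma~\ref{lem:proj} and Lemma~\ref{lem:2-3}) likewise mirrors the appendix proof through Lemma~\ref{lem:until} and Corollary~\ref{cor:body-sound}.
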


\begin{proof}
By Lemma~\ref{lem:body-stability} the body is convergent, so there is a bound past which $w[0..n] \modelsfin \finX(\varphi[M,N])$ equals
 $w \modelsinf \varphi[M,N]$.

($\Rightarrow$) Suppose $w \modelsinf \varphi[M,N]$ holds. Then for every $n$ past the bound, $w[0..n] \modelsfin \finX(\varphi[M,N])$ is true, because it equals $w \modelsinf \varphi[M,N]$. So all but finitely many prefixes satisfy $\finX(\varphi[M,N])$, which is $\exists\forall(\finX(\varphi[M,N]))$.

($\Leftarrow$) Suppose $\exists\forall(\finX(\varphi[M,N]))$, so all but finitely many prefixes satisfy $\finX(\varphi[M,N])$. Since only finitely many fail, some satisfying prefix $w[0..n]$ has $n$ past the bound. There $w[0..n] \modelsfin \finX(\varphi[M,N])$ holds and equals $w \modelsinf \varphi[M,N]$, so $w \modelsinf \varphi[M,N]$ holds too.
\end{proof}

\section{Putting It All Together}\label{putting-together}
Starting from the $\Delta_2$ normal form of Theorem~\ref{thm:normalisation}, we translate the three kinds of conjunct that form each disjunct of the normalization.
Theorem~\ref{theorem:AE} translates the conjuncts $\ltlfG\ltlfF(\sigma)$ with $\sigma \in \Sigma_1$, Theorem~\ref{theorem:EA} the conjuncts $\ltlfF\ltlfG(\pi)$ with $\pi \in \Pi_1$, and Theorem~\ref{theorem:sigma2} the $\Sigma_2$ bodies.
Theorems~\ref{theorem:AE} and~\ref{theorem:EA} hold for every trace, while Theorem~\ref{theorem:sigma2} needs the trace to exhibit the disjunct's limit-behavior, that is, to satisfy its limit-behavior conjuncts.
Since the disjunct itself contains these conjuncts, this is no restriction.
When they hold, Theorem~\ref{theorem:sigma2} translates the body.
When one fails, the disjunct is false on both sides of the translation.
We can now state the translation in full and prove it disjunct by disjunct.

\begin{theorem}\label{thm:main}
For every \ltl formula $\varphi$ and every $w \in \Sigma^\omega$,
{\small
\begin{multline*}
w \modelsinf \varphi \iff{}
w \modelsinf \!\!\!\bigvee_{\substack{M \subseteq \BGF \\ N \subseteq \BFG}}\!\!\!
\Big(\exists\forall\big(\finX(\varphi\rewM{M,N})\big)
~\wedge \\
\bigwedge_{\ltlfF\ltlfG\psi \in N}\!\!\! \!\!\exists\forall\big(\ltlfG(\neg\tight(\neg(\psi\fgrw{M})))\big)
~\wedge\!\!\!\!\!
\bigwedge_{\ltlfG\ltlfF\psi \in M}\!\!\!\!\! \forall\exists\big(\ltlfF(\tight(\psi\gfrw{N}))\big)\!\Big),
\end{multline*}}
where $\rewM{M,N}$, $\fgrw{M}$, and $\gfrw{N}$ are the rewrites of the \normalisation (Definitions~\ref{def:fg-rewrite}--\ref{def:body-rewrite}), $\tight$ is the tight completion (Definition~\ref{def:tight}), and $\finX$ the extended finite reinterpretation (Definition~\ref{def:fin-c}).
\end{theorem}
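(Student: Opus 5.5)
The plan is to chain Theorem~\ref{thm:normalisation} with the three component translations, and to argue disjunct by disjunct. Fix $w \in \Sigma^\omega$. By Theorem~\ref{thm:normalisation}, $w \modelsinf \varphi$ holds iff there is a pair $(M,N)$ with $M \subseteq \BGF$ and $N \subseteq \BFG$ whose disjunct $D_{M,N} = \varphi\rewM{M,N} \wedge \bigwedge_{N} \ltlfF\ltlfG(\psi\fgrw{M}) \wedge \bigwedge_{M} \ltlfG\ltlfF(\psi\gfrw{N})$ holds on $w$. The index set of the big disjunction is the same on both sides. So it suffices to show, for each fixed $(M,N)$, that $w \modelsinf D_{M,N}$ iff $w$ satisfies the corresponding \ltlfplus disjunct $D'_{M,N}$. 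Boolean connectives at the \ltlfplus level are interpreted pointwise on $w$, so the disjunctions can then be recombined.

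Fix $(M,N)$. The first step handles the limit-behavior conjuncts, which hold for every trace.
\begin{itemize}
\item For each $\ltlfF\ltlfG\psi \in N$, the operand $\psi\fgrw{M}$ is in $\Pi_1$ by Definition~\ref{def:fg-rewrite}. Theorem~\ref{theorem:EA} then gives $w \modelsinf \ltlfF\ltlfG(\psi\fgrw{M})$ iff $w \modelsinf \exists\forall(\ltlfG(\neg\tight(\neg(\psi\fgrw{M}))))$.
\item For each $\ltlfG\ltlfF\psi \in M$, the operand $\psi\gfrw{N}$ is in $\Sigma_1$ by Definition~\ref{def:gf-rewrite}. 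Theorem~\ref{theorem:AE} then gives $w \modelsinf \ltlfG\ltlfF(\psi\gfrw{N})$ iff $w \modelsinf \forall\exists(\ltlfF(\tight(\psi\gfrw{N})))$.
\end{itemize}
Hence the conjunction $L$ of limit-behavior conjuncts holds on $w$ iff the conjunction $L'$ of their translations does. By definition, $L$ holding on $w$ means exactly that $w$ exhibits $(M,N)$.

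The second step is a case split on whether $w$ exhibits $(M,N)$.
\begin{itemize}
\item If it does not, then $L$ is false, so $L'$ is false by the first step. Both $D_{M,N}$ and $D'_{M,N}$ are then false, whatever the bodies evaluate to.
\item If it does, then $L$ and $L'$ are both true. Theorem~\ref{theorem:sigma2} applies to the body and yields $w \modelsinf \varphi\rewM{M,N}$ iff $w \modelsinf \exists\forall(\finX(\varphi\rewM{M,N}))$. So $D_{M,N}$ and $D'_{M,N}$ again agree.
\end{itemize}
This gives the equivalence per disjunct, and hence the theorem.

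The only delicate point is the dependence of Theorem~\ref{theorem:sigma2} on the exhibiting assumption. The body translation is not sound in isolation, so the argument must never use it outside the case where $L$ holds. The case split above is exactly what keeps it confined to that case. I would also check two syntactic facts explicitly: the $\finX$ in the statement is the context-sensitive one of Definition~\ref{def:fin-c}, matching Theorem~\ref{theorem:sigma2}; and the operands passed to Theorems~\ref{theorem:AE} and~\ref{theorem:EA} genuinely lie in $\Sigma_1$ and $\Pi_1$, which is guaranteed by the remarks closing Definitions~\ref{def:fg-rewrite} and~\ref{def:gf-rewrite}.
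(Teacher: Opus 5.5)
Your proposal is correct and follows the paper's proof essentially step for step. You reduce via Theorem~\ref{thm:normalisation} to a fixed disjunct $(M,N)$, translate the limit-behavior conjuncts with Theorems~\ref{theorem:EA} and~\ref{theorem:AE}, and then split on whether $w$ exhibits $(M,N)$, so that Theorem~\ref{theorem:sigma2} is invoked only where its hypothesis holds.
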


\begin{proof}
Fix $w \in \Sigma^\omega$.
By Theorem~\ref{thm:normalisation}, $w \modelsinf \varphi$ if and only if $w$ satisfies a disjunct
$\varphi\rewM{M,N} \wedge \bigwedge_{\ltlfF\ltlfG\psi \in N} \ltlfF\ltlfG(\psi\fgrw{M}) \wedge \bigwedge_{\ltlfG\ltlfF\psi \in M} \ltlfG\ltlfF(\psi\gfrw{N})$
of its $\Delta_2$ normal form, specifically the one that matches the limit behavior $(M, N)$ of $w$.
It therefore suffices to prove the equivalence for every $(M, N)$.

We first translate the limit-behavior conjuncts.
Every operand $\psi\fgrw{M}$ with $\ltlfF\ltlfG\psi \in N$ is in $\Pi_1$ by Definition~\ref{def:fg-rewrite}, so by Theorem~\ref{theorem:EA}, $w \modelsinf \ltlfF\ltlfG(\psi\fgrw{M})$ if and only if $w \modelsinf \exists\forall(\ltlfG(\neg\tight(\neg(\psi\fgrw{M}))))$.
Every operand $\psi\gfrw{N}$ with $\ltlfG\ltlfF\psi \in M$ is in $\Sigma_1$ by Definition~\ref{def:gf-rewrite}, so by Theorem~\ref{theorem:AE}, $w \modelsinf \ltlfG\ltlfF(\psi\gfrw{N})$ if and only if $w \modelsinf \forall\exists(\ltlfF(\tight(\psi\gfrw{N})))$.
Conjoining these equivalences, $w$ satisfies the limit-behavior conjuncts of the disjunct if and only if $w$ satisfies their translations.

Now we distinguish whether $w$ satisfies the limit-behavior conjuncts.
If it does not, then it does not satisfy their translations either, so each side of the claimed equivalence contains a false conjunct and both sides are false.
If it does, then it satisfies their translations, so on each side every conjunct but the first is true, and only $w \modelsinf \varphi\rewM{M,N} \iff w \modelsinf \exists\forall(\finX(\varphi\rewM{M,N}))$ remains.
The case assumption states precisely that $w$ exhibits $(M, N)$ (Section~\ref{sec:sigma2}), so Theorem~\ref{theorem:sigma2} applies and yields this remaining equivalence.
\end{proof}

The translated formula inherits the length bound $2^{O(|\varphi|)}$ of the $\Delta_2$ normal form (Theorem~\ref{thm:normalisation}), as every added step is linear. Moreover, the exponent comes only from the number of disjuncts, each leaf being of polynomial length, so the automata constructions of \citet{DBLP:conf/ijcai/AminofGRV25}, applied leaf-wise, yield a deterministic automaton of doubly exponential size, as in Section~\ref{sec:normalisation}.

\section{Conclusion}\label{sec:conclusion}

In this paper we showed that \ltl formulas, once in normal form, can be translated into \ltlfplus in linear time.
Given an \ltl formula $\varphi$, its normal form is worst-case of size $2^{O(|\varphi|)}$.
This exponential blow-up, however, does not reflect in a higher complexity of translating LTL to deterministic automata, which remains worst-case in 2EXPTIME (the problem is 2EXPTIME-complete).
In fact, the simplicity of handling the single components makes it the best option for solving \ltl synthesis and is largely adopted by the state-of-the art synthesis tools \citep{DBLP:conf/tacas/KretinskyMPZ25,DBLP:journals/corr/abs-2604-24102,DBLP:conf/cav/MeyerSL18}.
Our translation makes the DFA-based technology developed for \ltlf available to \ltl synthesis.
In particular, all intermediate DFAs can be minimized, which can drastically speed up synthesis in practice.
Furthermore, beyond synthesis, our translation affects all areas which require \ltl to be translated to deterministic (or limit deterministic)
automata.
This includes Markov decision processes \citep{DBLP:conf/ijcai/Giacomo0SWY25}, stochastic planning \cite{DBLP:conf/aaai/WeinhuberGLST26}, %
and reinforcement learning \citep{DBLP:conf/tacas/HahnPSS0W20, DBLP:conf/iclr/JackermeierA25, IJCAI26}.
We thereby establish LTLf+ as a practical intermediate language for arbitrary LTL specifications in a variety of contexts.

We leave the practical realization of this route to future work.
As a first step, we have already implemented the full translation on top of the
\spot library \citep{DBLP:conf/cav/Duret-LutzRCRAS22}.
The implementation goes beyond the plain construction and includes several optimizations.
In particular, for the widely used obligation fragment~\cite{DBLP:journals/corr/abs-2605-12372} it skips both the
normalization and the tight completion and directly returns obligation formulas
of \ltlfplus.

\section*{Acknowledgments}
This work was supported in part by the UKRI Erlangen AI Hub on Mathematical and Computational Foundations of AI (No. EP/Y028872/1) and by the EU Horizon Europe project euroFMX (No. 101299128).

\bibliography{aaai2027}

@inproceedings{DBLP:conf/ijcai/AminofGRV25,
  author       = {Benjamin Aminof and
                  Giuseppe {De Giacomo} and
                  Sasha Rubin and
                  Moshe Y. Vardi},
  title        = {{LTLf+} and {PPLTL+:} Extending {LTLf} and {PPLTL} to Infinite Traces},
  booktitle    = {{IJCAI}},
  pages        = {8447--8455},
  publisher    = {ijcai.org},
  year         = {2025}
}

@inproceedings{DBLP:conf/ijcai/Giacomo0SWY25,
  author       = {Giuseppe {De Giacomo} and
                  Yong Li and
                  Sven Schewe and
                  Christoph Weinhuber and
                  Pian Yu},
  title        = {Solving {MDPs} with {LTLf+} and {PPLTL+} Temporal Objectives},
  booktitle    = {{IJCAI}},
  pages        = {8491--8499},
  publisher    = {ijcai.org},
  year         = {2025}
}

@inproceedings{DBLP:conf/podc/MannaP89,
  author       = {Zohar Manna and
                  Amir Pnueli},
  title        = {A Hierarchy of Temporal Properties},
  booktitle    = {{PODC}},
  pages        = {377--410},
  publisher    = {{ACM}},
  year         = {1990}
}

@inproceedings{DBLP:conf/icalp/ChangMP92,
  author       = {Edward Y. Chang and
                  Zohar Manna and
                  Amir Pnueli},
  title        = {Characterization of Temporal Property Classes},
  booktitle    = {{ICALP}},
  series       = {Lecture Notes in Computer Science},
  pages        = {474--486},
  publisher    = {Springer},
  year         = {1992}
}

@inproceedings{DBLP:conf/cav/Duret-LutzRCRAS22,
  author       = {Alexandre Duret{-}Lutz and
                  Etienne Renault and
                  Maximilien Colange and
                  Florian Renkin and
                  Alexandre Gbaguidi Aisse and
                  Philipp Schlehuber{-}Caissier and
                  Thomas Medioni and
                  Antoine Martin and
                  J{\'{e}}r{\^{o}}me Dubois and
                  Cl{\'{e}}ment Gillard and
                  Henrich Lauko},
  title        = {From Spot 2.0 to Spot 2.10: What's New?},
  booktitle    = {Computer Aided Verification - 34th International Conference, {CAV}
                  2022},
  series       = {Lecture Notes in Computer Science},
  pages        = {174--187},
  publisher    = {Springer},
  year         = {2022}
}

@inproceedings{DBLP:conf/ijcai/GiacomoV13,
  author       = {Giuseppe {De Giacomo} and
                  Moshe Y. Vardi},
  title        = {Linear Temporal Logic and Linear Dynamic Logic on Finite Traces},
  booktitle    = {{IJCAI}},
  pages        = {854--860},
  publisher    = {{IJCAI/AAAI}},
  year         = {2013}
}

@inproceedings{DBLP:conf/focs/Pnueli77,
  author       = {Amir Pnueli},
  title        = {The Temporal Logic of Programs},
  booktitle    = {{FOCS}},
  pages        = {46--57},
  publisher    = {{IEEE} Computer Society},
  year         = {1977}
}

@article{DBLP:journals/jacm/EsparzaRS24,
  author       = {Javier Esparza and
                  Rub{\'{e}}n Rubio and
                  Salomon Sickert},
  title        = {Efficient Normalization of Linear Temporal Logic},
  journal      = {J. {ACM}},
  volume       = {71},
  number       = {2},
  pages        = {16:1--16:42},
  year         = {2024}
}

@article{DBLP:journals/jacm/EsparzaKS20,
  author       = {Javier Esparza and
                  Jan Kret{\'{\i}}nsk{\'{y}} and
                  Salomon Sickert},
  title        = {A Unified Translation of Linear Temporal Logic to {\(\omega\)}-Automata},
  journal      = {J. {ACM}},
  volume       = {67},
  number       = {6},
  pages        = {33:1--33:61},
  year         = {2020}
}

@book{DBLP:books/daglib/0020348,
  author       = {Christel Baier and
                  Joost{-}Pieter Katoen},
  title        = {Principles of model checking},
  publisher    = {{MIT} Press},
  year         = {2008}
}

@inproceedings{DBLP:conf/popl/PnueliR89,
  author       = {Amir Pnueli and
                  Roni Rosner},
  title        = {On the Synthesis of a Reactive Module},
  booktitle    = {{POPL}},
  pages        = {179--190},
  publisher    = {{ACM} Press},
  year         = {1989}
}

@article{DBLP:journals/deds/EhlersLTV17,
  author       = {R{\"{u}}diger Ehlers and
                  St{\'{e}}phane Lafortune and
                  Stavros Tripakis and
                  Moshe Y. Vardi},
  title        = {Supervisory control and reactive synthesis: a comparative introduction},
  journal      = {Discret. Event Dyn. Syst.},
  volume       = {27},
  number       = {2},
  pages        = {209--260},
  year         = {2017}
}

@article{DBLP:journals/amai/BacchusK98,
  author       = {Fahiem Bacchus and
                  Froduald Kabanza},
  title        = {Planning for Temporally Extended Goals},
  journal      = {Ann. Math. Artif. Intell.},
  volume       = {22},
  number       = {1-2},
  pages        = {5--27},
  year         = {1998}
}

@inproceedings{DBLP:conf/ijcai/GiacomoR18,
  author       = {Giuseppe {De Giacomo} and
                  Sasha Rubin},
  title        = {Automata-Theoretic Foundations of {FOND} Planning for LTLf and LDLf
                  Goals},
  booktitle    = {{IJCAI}},
  pages        = {4729--4735},
  publisher    = {ijcai.org},
  year         = {2018}
}

@inproceedings{DBLP:conf/aips/CamachoBM19,
  author       = {Alberto Camacho and
                  Meghyn Bienvenu and
                  Sheila A. McIlraith},
  title        = {Towards a Unified View of {AI} Planning and Reactive Synthesis},
  booktitle    = {{ICAPS}},
  pages        = {58--67},
  publisher    = {{AAAI} Press},
  year         = {2019}
}

@article{DBLP:journals/automatica/FainekosGKP09,
  author       = {Georgios E. Fainekos and
                  Antoine Girard and
                  Hadas Kress{-}Gazit and
                  George J. Pappas},
  title        = {Temporal logic motion planning for dynamic robots},
  journal      = {Autom.},
  volume       = {45},
  number       = {2},
  pages        = {343--352},
  year         = {2009}
}

@inproceedings{DBLP:conf/bpm/MaggiMWA11,
  author       = {Fabrizio Maria Maggi and
                  Marco Montali and
                  Michael Westergaard and
                  Wil M. P. van der Aalst},
  title        = {Monitoring Business Constraints with Linear Temporal Logic: An Approach
                  Based on Colored Automata},
  booktitle    = {{BPM}},
  series       = {Lecture Notes in Computer Science},
  pages        = {132--147},
  publisher    = {Springer},
  year         = {2011}
}

@inproceedings{DBLP:conf/stacs/Vardi07,
  author       = {Moshe Y. Vardi},
  title        = {The B{\"{u}}chi Complementation Saga},
  booktitle    = {{STACS}},
  series       = {Lecture Notes in Computer Science},
  pages        = {12--22},
  publisher    = {Springer},
  year         = {2007}
}

@inproceedings{DBLP:conf/ijcai/GiacomoV15,
  author       = {Giuseppe {De Giacomo} and
                  Moshe Y. Vardi},
  title        = {Synthesis for {LTL} and {LDL} on Finite Traces},
  booktitle    = {{IJCAI}},
  pages        = {1558--1564},
  publisher    = {{AAAI} Press},
  year         = {2015}
}

@inproceedings{DBLP:conf/ijcai/ZhuTLPV17,
  author       = {Shufang Zhu and
                  Lucas M. Tabajara and
                  Jianwen Li and
                  Geguang Pu and
                  Moshe Y. Vardi},
  title        = {Symbolic LTLf Synthesis},
  booktitle    = {{IJCAI}},
  pages        = {1362--1369},
  publisher    = {ijcai.org},
  year         = {2017}
}

@inproceedings{DBLP:conf/aaai/BansalLTV20,
  author       = {Suguman Bansal and
                  Yong Li and
                  Lucas M. Tabajara and
                  Moshe Y. Vardi},
  title        = {Hybrid Compositional Reasoning for Reactive Synthesis from Finite-Horizon
                  Specifications},
  booktitle    = {{AAAI}},
  pages        = {9766--9774},
  publisher    = {{AAAI} Press},
  year         = {2020}
}

@inproceedings{DBLP:conf/aips/GiacomoF21,
  author       = {Giuseppe {De Giacomo} and
                  Marco Favorito},
  title        = {Compositional Approach to Translate LTLf/LDLf into Deterministic Finite
                  Automata},
  booktitle    = {{ICAPS}},
  pages        = {122--130},
  publisher    = {{AAAI} Press},
  year         = {2021}
}

@inproceedings{DBLP:conf/wia/DuretLutzZPGV25,
  author       = {Alexandre Duret{-}Lutz and
                  Shufang Zhu and
                  Nir Piterman and
                  Giuseppe {De Giacomo} and
                  Moshe Y. Vardi},
  title        = {Engineering an LTL\({}_{\mbox{f}}\) Synthesis Tool},
  booktitle    = {{CIAA}},
  series       = {Lecture Notes in Computer Science},
  pages        = {129--147},
  publisher    = {Springer},
  year         = {2025}
}

@article{DBLP:journals/scp/EmersonL87,
  author       = {E. Allen Emerson and
                  Chin{-}Laung Lei},
  title        = {Modalities for Model Checking: Branching Time Logic Strikes Back},
  journal      = {Sci. Comput. Program.},
  volume       = {8},
  number       = {3},
  pages        = {275--306},
  year         = {1987}
}

@inproceedings{DBLP:conf/kr/00010PWGP25,
  author       = {Daniel Hausmann and
                  Shufang Zhu and
                  Gianmarco Parretti and
                  Christoph Weinhuber and
                  Giuseppe {De Giacomo} and
                  Nir Piterman},
  title        = {Emerson-Lei and Manna-Pnueli Games for {LTLf+} and {PPLTL+} Synthesis},
  booktitle    = {{KR}},
  year         = {2025}
}

@article{DBLP:journals/fmsd/KupfermanV01,
  author       = {Orna Kupferman and
                  Moshe Y. Vardi},
  title        = {Model Checking of Safety Properties},
  journal      = {Formal Methods Syst. Des.},
  volume       = {19},
  number       = {3},
  pages        = {291--314},
  year         = {2001}
}

@article{DBLP:journals/lmcs/CimattiGGMT23,
  author       = {Alessandro Cimatti and
                  Luca Geatti and
                  Nicola Gigante and
                  Angelo Montanari and
                  Stefano Tonetta},
  title        = {A first-order logic characterization of safety and co-safety languages},
  journal      = {Log. Methods Comput. Sci.},
  volume       = {19},
  number       = {3},
  year         = {2023}
}

@inproceedings{DBLP:conf/aaai/WeinhuberGLST26,
  author       = {Christoph Weinhuber and
                  Giuseppe {De Giacomo} and
                  Yong Li and
                  Sven Schewe and
                  Qiyi Tang},
  title        = {Good-for-MDP State Reduction for Stochastic {LTL} Planning},
  booktitle    = {{AAAI}},
  pages        = {36457--36465},
  publisher    = {{AAAI} Press},
  year         = {2026}
}

@inproceedings{DBLP:conf/mochart/KupfermanR10,
  author       = {Orna Kupferman and
                  Adin Rosenberg},
  title        = {The Blowup in Translating {LTL} to Deterministic Automata},
  booktitle    = {MoChArt},
  series       = {Lecture Notes in Computer Science},
  volume       = {6572},
  pages        = {85--94},
  publisher    = {Springer},
  year         = {2010}
}

@article{DBLP:journals/corr/abs-2604-18532,
  author       = {Giuseppe {De Giacomo} and
                  Christian Hagemeier and
                  Daniel Hausmann and
                  Nir Piterman},
  title        = {Symbolic Synthesis for LTLf+ Obligations},
  journal      = {CoRR},
  volume       = {abs/2604.18532},
  year         = {2026}
}

@inproceedings{DBLP:conf/tacas/HahnPSS0W20,
  author       = {Ernst Moritz Hahn and
                  Mateo Perez and
                  Sven Schewe and
                  Fabio Somenzi and
                  Ashutosh Trivedi and
                  Dominik Wojtczak},
  title        = {Good-for-MDPs Automata for Probabilistic Analysis and Reinforcement
                  Learning},
  booktitle    = {{TACAS} {(1)}},
  series       = {Lecture Notes in Computer Science},
  volume       = {12078},
  pages        = {306--323},
  publisher    = {Springer},
  year         = {2020}
}

@inproceedings{DBLP:conf/lop/LichtensteinPZ85,
  author       = {Orna Lichtenstein and
                  Amir Pnueli and
                  Lenore D. Zuck},
  title        = {The Glory of the Past},
  booktitle    = {Logic of Programs},
  series       = {Lecture Notes in Computer Science},
  volume       = {193},
  pages        = {196--218},
  publisher    = {Springer},
  year         = {1985}
}

@incollection{DBLP:reference/mc/PitermanP18,
  author       = {Nir Piterman and
                  Amir Pnueli},
  title        = {Temporal Logic and Fair Discrete Systems},
  booktitle    = {Handbook of Model Checking},
  pages        = {27--73},
  publisher    = {Springer},
  year         = {2018}
}

@inproceedings{DBLP:journals/corr/abs-1709-02102,
  author       = {David M{\"{u}}ller and
                  Salomon Sickert},
  title        = {{LTL} to Deterministic Emerson-Lei Automata},
  booktitle    = {GandALF},
  series       = {{EPTCS}},
  volume       = {256},
  pages        = {180--194},
  year         = {2017}
}

@inproceedings{IJCAI26,
  author       = {Alessandro Abate and Giuseppe {De Giacomo} and Mathias Jackermeier and Jan Kret{\'{\i}}nsk{\'{y}} and Maximilian Prokop and Christoph Weinhuber},
  title        = {Semantically Labelled Automata for Multi-Task Reinforcement Learning with {LTL} Instructions},
  booktitle    = {{IJCAI}},
  publisher    = {ijcai.org},
  year         = {2026}
}

@inproceedings{DBLP:conf/ijcai/GiacomoSFR20,
  author       = {Giuseppe {De Giacomo} and
                  Antonio Di Stasio and
                  Francesco Fuggitti and
                  Sasha Rubin},
  title        = {Pure-Past Linear Temporal and Dynamic Logic on Finite Traces},
  booktitle    = {{IJCAI}},
  pages        = {4959--4965},
  publisher    = {ijcai.org},
  year         = {2020}
}

@inproceedings{DBLP:conf/cav/MeyerSL18,
  author       = {Philipp J. Meyer and
                  Salomon Sickert and
                  Michael Luttenberger},
  title        = {Strix: Explicit Reactive Synthesis Strikes Back!},
  booktitle    = {{CAV} {(1)}},
  series       = {Lecture Notes in Computer Science},
  volume       = {10981},
  pages        = {578--586},
  publisher    = {Springer},
  year         = {2018}
}

@inproceedings{DBLP:conf/tacas/KretinskyMPZ25,
  author       = {Jan Kret{\'{\i}}nsk{\'{y}} and
                  Tobias Meggendorfer and
                  Maximilian Prokop and
                  Ashkan Zarkhah},
  title        = {SemML: Enhancing Automata-Theoretic {LTL} Synthesis with Machine Learning},
  booktitle    = {{TACAS} {(1)}},
  series       = {Lecture Notes in Computer Science},
  volume       = {15696},
  pages        = {233--253},
  publisher    = {Springer},
  year         = {2025}
}

@article{DBLP:journals/corr/abs-2604-24102,
  author       = {Jan Kret{\'{\i}}nsk{\'{y}} and
                  Tobias Meggendorfer and
                  Maximilian Prokop},
  title        = {SemML 2.0: Synthesizing Controllers for {LTL}},
  journal      = {CoRR},
  volume       = {abs/2604.24102},
  year         = {2026}
}

@ARTICLE{PSL,
  journal={IEC 62531:2012(E) (IEEE Std 1850-2010)}, 
  title={{{IEC}} 62531:2012(E) (IEEE Std 1850-2010): Standard for Property Specification Language (PSL)}, 
  year={2012},
  volume={},
  number={},
  nopages={1-184},
  doi={10.1109/IEEESTD.2012.6228486}
}

@ARTICLE{SVA,
  journal={{IEEE} Std 1800-2017 (Revision of IEEE Std 1800-2012)}, 
  title={{IEEE} Standard for SystemVerilog--Unified Hardware Design, Specification, and Verification Language}, 
  year={2018},
  volume={},
  number={},
  nopages={1-1315},
  doi={10.1109/IEEESTD.2018.8299595}}

@inproceedings{DBLP:conf/iclr/JackermeierA25,
  author       = {Mathias Jackermeier and
                  Alessandro Abate},
  title        = {DeepLTL: Learning to Efficiently Satisfy Complex {LTL} Specifications
                  for Multi-Task {RL}},
  booktitle    = {{ICLR}},
  publisher    = {OpenReview.net},
  year         = {2025}
}

@article{DBLP:journals/corr/abs-2605-12372,
  author       = {Alexandre Duret{-}Lutz and
                  Giuseppe De Giacomo and
                  Marcin Jurdzinski and
                  Nir Piterman and
                  Moshe Y. Vardi and
                  Shufang Zhu},
  title        = {Fast Obligation Translation and Synthesis},
  journal      = {CoRR},
  volume       = {abs/2605.12372},
  year         = {2026}
}

\clearpage
\appendix
\section{Related Work}\label{sec:related}

\paragraph{Normal forms for the safety-progress hierarchy.}
The hierarchy originates with \citet{DBLP:conf/lop/LichtensteinPZ85}, was described in detail and named by \citet{DBLP:conf/podc/MannaP89}, and is surveyed by \citet{DBLP:reference/mc/PitermanP18}.
In particular, every formula of \ltl with past operators is equivalent to a formula $\bigwedge_i \big(\ltlfG\ltlfF(\varphi_i) \lor \ltlfF\ltlfG(\psi_i)\big)$ where $\varphi_i$ and $\psi_i$ contain only past operators \citep{DBLP:conf/lop/LichtensteinPZ85,DBLP:conf/podc/MannaP89}.
\citet{DBLP:conf/icalp/ChangMP92} carried the characterization over to future-only \ltl and stated the corresponding $\Delta_2$ normalization theorem.
The classical procedures behind these results incur a non-elementary blow-up, translating formulas into counter-free automata and star-free expressions and back.
The elementary alternatives construct deterministic $\omega$-automata as an intermediate step and are therefore at least doubly exponential.
We refer to \citet{DBLP:journals/jacm/EsparzaRS24} for a detailed description.
The \normalisation of \citet{DBLP:journals/jacm/EsparzaRS24} that our translation builds on is instead direct and purely syntactic, and incurs only a single-exponential blow-up.
Together with the unified \ltl-to-$\omega$-automata translation of \citet{DBLP:journals/jacm/EsparzaKS20}, it has renewed interest in the hierarchy \citep{DBLP:conf/ijcai/AminofGRV25}.

\paragraph{\ltl to deterministic automata.}
Translating \ltl to deterministic $\omega$-automata is doubly exponential, and this is tight \citep{DBLP:conf/mochart/KupfermanR10}.
Moreover, the classical route through \buchi automata and Safra-style determinisation is notoriously difficult \citep{DBLP:conf/stacs/Vardi07}.
Deterministic Emerson-Lei automata \citep{DBLP:journals/scp/EmersonL87} carry a Muller acceptance condition that is expressed symbolically as a Boolean formula, which makes it possible to shift complexity from the state space to the acceptance condition.
In particular, \citet{DBLP:journals/corr/abs-1709-02102} translate the (co-)safety and fairness fragments of \ltl directly, delegate subformulas outside these fragments to external translators, and compose the parts by a product.

\paragraph{Finite-trace logics and \ltlfplus.}
\ltlf interprets temporal specifications over finite traces, where the automaton counterpart is the DFA \citep{DBLP:conf/ijcai/GiacomoV13,DBLP:conf/ijcai/GiacomoV15}, and supports a family of scalable synthesis tools \citep{DBLP:conf/ijcai/ZhuTLPV17,DBLP:conf/aaai/BansalLTV20,DBLP:conf/aips/GiacomoF21,DBLP:conf/wia/DuretLutzZPGV25}.
Its pure-past variant PPLTL sees the trace backwards and admits DFAs of single-exponential size \citep{DBLP:conf/ijcai/GiacomoSFR20}.
\ltlfplus and PPLTL+ lift this machinery to infinite traces through prefix quantification \citep{DBLP:conf/ijcai/AminofGRV25}, with synthesis through Emerson-Lei games and Manna-Pnueli games \citep{DBLP:conf/kr/00010PWGP25}, symbolic algorithms for the obligation fragment \citep{DBLP:journals/corr/abs-2604-18532}, and \ltlfplus objectives in MDPs \citep{DBLP:conf/ijcai/Giacomo0SWY25,DBLP:conf/aaai/WeinhuberGLST26}.
\citet{DBLP:conf/ijcai/AminofGRV25} prove that \ltlfplus captures all of \ltl but do not provide a translation. 
Our translation supplies the missing step and connects the normalization with the prefix quantifiers.
Our tight completion builds on classical prefix characterizations of safety and co-safety \citep{DBLP:journals/fmsd/KupfermanV01,DBLP:journals/lmcs/CimattiGGMT23}.

\section{Supplementary Material for Section~\ref{sec:normalisation}}\label{appendix:normalisation}
The $\Sigma_2$ body rewrite of Definition~\ref{def:body-rewrite} adapts \citet[Definition~4.19]{DBLP:journals/jacm/EsparzaRS24}.
A kept $\nu$-operator is rewritten as there, and a dropped one as in the corresponding clauses of \citet[Definition~28]{DBLP:journals/jacm/EsparzaKS20}.
The one difference is that we keep the safety leaf \emph{conditionally}, dropping it to $\ltlffalse$ unless $\ltlfF\ltlfG\psi_1 \in N$.

We write $\varphi\rewMN{M}$ for the rewrite of \citet[Definition~4.19]{DBLP:journals/jacm/EsparzaRS24}, which keeps every safety leaf unconditionally:
\begin{align*}
(\ltlfG\,\psi_1)\rewMN{M} &= \psi_1\rewMN{M}\, \ltlfU\, \ltlfG(\psi_1\fgrw{M}), \\
(\psi_1 \ltlfW \psi_2)\rewMN{M} &= \psi_1\rewMN{M}\, \ltlfU\, \bigl(\psi_2\rewMN{M} \lor \ltlfG(\psi_1\fgrw{M})\bigr), \\
(\psi_2 \ltlfR \psi_1)\rewMN{M} &= \bigl(\psi_2\rewMN{M} \lor \ltlfG(\psi_1\fgrw{M})\bigr)\, \ltlfM\, \psi_1\rewMN{M},
\end{align*}
and otherwise agrees with Definition~\ref{def:body-rewrite}.
\citet[Theorem~4.22]{DBLP:journals/jacm/EsparzaRS24} state that every \ltl formula $\varphi$ is equivalent to the normal form of Theorem~\ref{thm:normalisation} with bodies $\varphi\rewMN{M}$ in place of $\varphi\rewM{M,N}$, and otherwise identical.

\begin{lemma}\label{lem:variant-strengthening}
For every $(M, N)$, every subformula $\varphi'$ of $\varphi$, every $w \in \Sigma^\omega$, and every position $i$, we have
$w, i \modelsinf \varphi'\rewM{M,N} \implies w, i \modelsinf \varphi'\rewMN{M}$.
\end{lemma}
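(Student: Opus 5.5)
We prove the implication by structural induction on the subformula $\varphi'$, simultaneously for all positions $i$, keeping $(M,N)$ and $w$ fixed. The key observation is that the two rewrites differ only in the $\nu$-clauses: wherever $\varphi'\rewMN{M}$ has the safety leaf $\ltlfG(\psi_1\fgrw{M})$, the formula $\varphi'\rewM{M,N}$ has either the same leaf (if $\ltlfF\ltlfG\psi_1 \in N$) or $\ltlffalse$. Both rewrites are positive in their arguments; all connectives $\land$, $\lor$, $\ltlfX$, $\ltlfF$, $\ltlfU$, $\ltlfM$ in NNF are monotone. So $\varphi'\rewM{M,N}$ is obtained from $\varphi'\rewMN{M}$ by replacing some positive occurrences of subformulas with $\ltlffalse$, and replacing by something stronger can only make the formula stronger.

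Concretely, the induction runs as follows. Base cases ($\ltlftrue$, $\ltlffalse$, literals) are trivial, since both rewrites leave them unchanged. For $\land$, $\lor$, $\ltlfX$, $\ltlfF$, $\ltlfU$, $\ltlfM$, both rewrites distribute. Here we use monotonicity of the operator semantics: if $w,j \modelsinf \alpha_k \Rightarrow w,j \modelsinf \beta_k$ holds for all positions $j$ and both operands $k$, then $w,i \modelsinf \alpha_1 \circ \alpha_2 \Rightarrow w,i \modelsinf \beta_1 \circ \beta_2$. This follows directly from the semantic clauses of $\modelsinf$, since the witness $k$ and the intermediate positions carry over. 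This is why the induction hypothesis must be quantified over all positions.

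For the $\nu$-cases $\ltlfG\psi_1$, $\psi_1\ltlfW\psi_2$ and $\psi_2\ltlfR\psi_1$, we compare the two right-hand sides. The outer operator ($\ltlfU$ or $\ltlfM$) is the same in both, and the operands $\psi_1\rewM{M,N}$ and $\psi_2\rewM{M,N}$ imply their $\rewMN{M}$ counterparts at every position by the induction hypothesis. For the safety leaf, there are two cases:
\begin{itemize}
\item if $\ltlfF\ltlfG\psi_1 \in N$, the leaves are literally identical, so the implication is trivial;
\item otherwise, the leaf is $\ltlffalse$ on the left, which implies anything.
\end{itemize}
In either case the disjunction $\psi_2\rewM{M,N} \lor \text{leaf}$ implies $\psi_2\rewMN{M} \lor \ltlfG(\psi_1\fgrw{M})$ pointwise. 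Monotonicity of $\ltlfU$ and $\ltlfM$ then closes the case.

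The only delicate points are bookkeeping. First, $\ltlfF$ and $\ltlfG$ are treated through their $\ltlfU$ and $\ltlfW$ encodings. Second, the $\fgrw{M}$ subterm is produced identically by both rewrites, so no induction on it is needed. I expect the main (mild) obstacle to be stating the monotonicity lemma for $\ltlfU$ and $\ltlfM$ cleanly with a positionwise hypothesis. Once that is in place, every case is a one-line application.
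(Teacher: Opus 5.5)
Your proof is correct and follows essentially the same route as the paper. Both use structural induction with the hypothesis at all positions, handle the distributing cases by the induction hypothesis, and handle the $\nu$-cases by observing that each safety leaf is either identical or $\ltlffalse$. The paper splits kept and dropped operators and simplifies the dropped clauses before weakening, whereas you treat both uniformly via positionwise monotonicity of $\ltlfU$ and $\ltlfM$; this is only a presentational difference.
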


\begin{proof}
Structural induction on $\varphi'$.
\begin{itemize}
\item Literals, $\ltlftrue$, and $\ltlffalse$ are unchanged by both rewrites.
\item Both rewrites distribute over $\land$, $\lor$, $\ltlfX$, $\ltlfF$, $\ltlfU$, $\ltlfM$, so these cases follow by applying the induction hypothesis.
\item A kept $\ltlfG$, $\ltlfW$, or $\ltlfR$ has the same clause with the same safety leaf in both rewrites, and the claim again follows by applying the induction hypothesis.
\item A dropped $\ltlfG\psi$ is $\ltlffalse$, which implies every formula.
\item A dropped $\psi_1 \ltlfW \psi_2$ is $\psi_1\rewM{M,N} \ltlfU \psi_2\rewM{M,N}$. The induction hypothesis on both operands gives $\psi_1\rewMN{M} \ltlfU \psi_2\rewMN{M}$, and weakening the second operand by the disjunct $\ltlfG(\psi_1\fgrw{M})$ gives $(\psi_1 \ltlfW \psi_2)\rewMN{M}$.
\item A dropped $\psi_2 \ltlfR \psi_1$ is $\psi_2\rewM{M,N} \ltlfM \psi_1\rewM{M,N}$, and as before the induction hypothesis on both operands and weakening the first operand by the disjunct $\ltlfG(\psi_1\fgrw{M})$ give $(\psi_2 \ltlfR \psi_1)\rewMN{M}$.
\qedhere
\end{itemize}
\end{proof}

\begin{lemma}[Compare with {\citealp[Proposition~4.16]{DBLP:journals/jacm/EsparzaRS24}}]\label{lem:mu-monotone}
Let $N \subseteq N' \subseteq \BFG$. For every formula $\psi$, every $w \in \Sigma^\omega$, and every position $i$, we have
$w, i \modelsinf \psi\gfrw{N} \implies w, i \modelsinf \psi\gfrw{N'}$.
\end{lemma}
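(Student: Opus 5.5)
We argue by structural induction on $\psi$, proving the implication simultaneously for all positions $i$ and all traces $w$. The $\Sigma_1$-rewrite of Definition~\ref{def:gf-rewrite} leaves $\ltlftrue$, $\ltlffalse$ and literals unchanged, so the base cases are trivial. For $\land$, $\lor$ and $\ltlfX$, and for the $\mu$-operators $\ltlfF$, $\ltlfU$, $\ltlfM$, the rewrite distributes over the operator. The claim then follows from the induction hypothesis applied to the operands at every position, together with the fact that these LTL operators are monotone in their operands. For instance, if $w,i \modelsinf \psi_1\gfrw{N} \ltlfU \psi_2\gfrw{N}$ with witness $k$, then the same $k$ witnesses $\psi_1\gfrw{N'} \ltlfU \psi_2\gfrw{N'}$, because the induction hypothesis upgrades each position $j\in[i,k)$ and position $k$ itself.

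The only interesting cases are the $\nu$-operators $\ltlfW$, $\ltlfR$, and $\ltlfG$, which is handled as $\psi_1 \ltlfW \ltlffalse$. Consider $\psi_1 \ltlfW \psi_2$ and split on membership of $\ltlfF\ltlfG\psi_1$.

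\textbf{Case 1: $\ltlfF\ltlfG\psi_1 \in N$.} Then $\ltlfF\ltlfG\psi_1 \in N'$ as well, because $N \subseteq N'$. Both rewrites yield $\ltlftrue$, and the implication holds trivially.

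\textbf{Case 2: $\ltlfF\ltlfG\psi_1 \notin N'$.} Then it is also not in $N$. Both rewrites yield the $\ltlfU$-formula over the rewritten operands, and the implication follows exactly as in the distributive $\ltlfU$ case above.

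\textbf{Case 3: $\ltlfF\ltlfG\psi_1 \in N' \setminus N$.} The left side is $\psi_1\gfrw{N} \ltlfU \psi_2\gfrw{N}$ and the right side is $\ltlftrue$, so the implication holds since anything implies $\ltlftrue$.

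The $\ltlfR$ case is identical, now testing $\ltlfF\ltlfG\psi_2$ and strengthening to $\ltlfM$. The three cases match those for $\ltlfW$, with monotonicity of $\ltlfM$ in both operands used in Case~2.

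The main point needing care is that the induction hypothesis must be stated for all positions, since the temporal cases evaluate operands at positions other than $i$. It must also be stated for arbitrary subformulas, not just operands with $\ltlfG\ltlfF\psi \in \BGF$, which is why the lemma is phrased for every formula $\psi$. With the hypothesis quantified universally over $i$, no further obstacle arises. The whole proof is a monotonicity argument: growing $N$ can only replace some $\ltlfU$/$\ltlfM$ subformulas by $\ltlftrue$, and every remaining operator is monotone.
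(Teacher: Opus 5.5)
Your proposal is correct and follows the paper's proof essentially step for step. Both use structural induction, discharge the distributive cases by the induction hypothesis, and split $\psi_1 \ltlfW \psi_2$ on whether $\ltlfF\ltlfG\psi_1$ lies in $N$, in $N' \setminus N$, or outside $N'$, with $\ltlfR$ and $\ltlfG$ handled analogously; your explicit point that the hypothesis must be quantified over all positions is something the paper leaves implicit.
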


\begin{proof}
Structural induction on $\psi$.
\begin{itemize}
\item Literals, $\ltlftrue$, and $\ltlffalse$ are unchanged, and the rewrite distributes over $\lor$, $\land$, $\ltlfX$, $\ltlfF$, $\ltlfU$, $\ltlfM$, so these cases follow by applying the induction hypothesis to the operands.
\item For $\psi_1 \ltlfW \psi_2$ with $\ltlfF\ltlfG\psi_1 \in N$, both rewrites yield $\ltlftrue$.
\item For $\psi_1 \ltlfW \psi_2$ with $\ltlfF\ltlfG\psi_1 \in N' \setminus N$, the left-hand side is a $\ltlfU$-formula and the right-hand side is $\ltlftrue$.
\item For $\psi_1 \ltlfW \psi_2$ with $\ltlfF\ltlfG\psi_1 \notin N'$, both rewrites yield $\ltlfU$-formulas over operands related by the induction hypothesis.
\item The $\ltlfR$ case is analogous with the condition on $\psi_2$, and in the $\ltlfG$ case both sides are constants with the left implying the right.
\qedhere
\end{itemize}
\end{proof}

\begin{lemma}\label{lem:variant-agreement}
Let $(M, N)$ be given and let $w \in \Sigma^\omega$ be a trace with $w \not\modelsinf \ltlfF\ltlfG(\psi\fgrw{M})$ for every $\ltlfF\ltlfG\psi \in \BFG \setminus N$.
Then for every subformula $\varphi'$ of $\varphi$ and every position $i$, we have
$w, i \modelsinf \varphi'\rewM{M,N} \iff w, i \modelsinf \varphi'\rewMN{M}$.
\end{lemma}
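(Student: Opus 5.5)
The plan is a structural induction on $\varphi'$, proving the biconditional at every position $i$ simultaneously. Lemma~\ref{lem:variant-strengthening} already gives the forward direction ($\Rightarrow$) for all traces, so only the backward direction ($\Leftarrow$) needs work, and only in the cases where the two rewrites differ.

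\textbf{Cases where the rewrites agree.} Literals, $\ltlftrue$ and $\ltlffalse$ are left unchanged by both rewrites. Both rewrites distribute over $\land$, $\lor$, $\ltlfX$, $\ltlfF$, $\ltlfU$ and $\ltlfM$. A \emph{kept} $\nu$-operator, i.e.\ one whose operand $\psi_1$ satisfies $\ltlfF\ltlfG\psi_1 \in N$, receives the same clause with the same safety leaf in both rewrites. In all of these cases the claim follows from the induction hypothesis, applied to the operands at all positions.

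\textbf{Dropped $\nu$-operators.} Here $\ltlfF\ltlfG\psi_1 \in \BFG \setminus N$. By the hypothesis on $w$, $w \not\modelsinf \ltlfF\ltlfG(\psi_1\fgrw{M})$, so $\ltlfG(\psi_1\fgrw{M})$ holds at no position of $w$. The safety leaf in the $\rewMN{M}$ clause is therefore semantically $\ltlffalse$ along $w$ and may be deleted without changing truth values on $w$.
\begin{itemize}
\item For $\ltlfG\psi_1$, the right-hand side $\psi_1\rewMN{M}\,\ltlfU\,\ltlfG(\psi_1\fgrw{M})$ is false everywhere on $w$, as is the left-hand side $\ltlffalse$.
\item For $\psi_1\ltlfW\psi_2$, the right-hand side reduces on $w$ to $\psi_1\rewMN{M}\,\ltlfU\,\psi_2\rewMN{M}$. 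By the induction hypothesis at every position, this agrees with $\psi_1\rewM{M,N}\,\ltlfU\,\psi_2\rewM{M,N}$, which is exactly the dropped clause.
\item The $\ltlfR$ case is symmetric: the disjunct is removed from the left operand of $\ltlfM$.
\end{itemize}

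\textbf{Main obstacle.} The step needing care is checking that the dropping condition in Definition~\ref{def:body-rewrite} tests membership of $\ltlfF\ltlfG\psi_1$ in $N$ for the same $\psi_1$ whose rewrite $\psi_1\fgrw{M}$ appears in the leaf. This is what lets the hypothesis on $w$ apply: it is stated for all $\ltlfF\ltlfG\psi \in \BFG\setminus N$, and $\psi_1$ qualifies because $\psi_1\ltlfW\psi_2$, or $\psi_2\ltlfR\psi_1$, or $\ltlfG\psi_1 = \psi_1\ltlfW\ltlffalse$, lies in $\mathit{sf}(\varphi)$, so $\ltlfF\ltlfG\psi_1 \in \BFG$ by Definition~\ref{def:basis}. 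The induction hypothesis must be used at all positions, not just at $i$, because $\ltlfU$ and $\ltlfM$ quantify over later positions. The induction is therefore stated pointwise for every $i$, as in the lemma.
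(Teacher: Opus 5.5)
Your proposal is correct and follows essentially the same route as the paper. Both use a pointwise structural induction in which the agreeing and kept cases go through the induction hypothesis. Each dropped $\nu$-operator is handled by observing that its safety leaf holds at no position of $w$, which is legitimate because $\ltlfF\ltlfG\psi_1 \in \BFG$ by Definition~\ref{def:basis}. The appeal to Lemma~\ref{lem:variant-strengthening} for the forward direction is harmless but unnecessary, since your case analysis, like the paper's, already establishes both directions at once.
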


\begin{proof}
For every $\ltlfF\ltlfG\psi \in \BFG \setminus N$, no position of $w$ satisfies $\ltlfG(\psi\fgrw{M})$, since otherwise $w \modelsinf \ltlfF\ltlfG(\psi\fgrw{M})$ would hold.
By Definition~\ref{def:basis} the $\ltlfF\ltlfG$-formula of every $\nu$-operator of $\varphi'$ lies in $\BFG$, so every dropped operator falls under this observation.
We proceed by structural induction on $\varphi'$.
\begin{itemize}
\item The unchanged and distributing cases transfer the equivalence from the operands, and a kept $\nu$-operator has the same clause with the same safety leaf on both sides.
\item For a dropped $\ltlfG\psi$, the clause $\psi\rewMN{M} \ltlfU \ltlfG(\psi\fgrw{M})$ requires a position satisfying the leaf, so both sides are false at every position.
\item For a dropped $\psi_1 \ltlfW \psi_2$, the disjunct $\ltlfG(\psi_1\fgrw{M})$ is false at every position of $w$, so on $w$ the clause is equivalent to $\psi_1\rewMN{M} \ltlfU \psi_2\rewMN{M}$, and the induction hypothesis transfers both directions.
\item For a dropped $\psi_2 \ltlfR \psi_1$, the disjunct $\ltlfG(\psi_1\fgrw{M})$ in the first operand is false at every position of $w$, so on $w$ the clause is equivalent to $\psi_2\rewMN{M} \ltlfM \psi_1\rewMN{M}$, and the induction hypothesis transfers both directions.
\qedhere
\end{itemize}
\end{proof}

\normalisationthm*

For every \ltl formula $\varphi$ and every $w \in \Sigma^\omega$, $w$ satisfies the normal form of Theorem~\ref{thm:normalisation} if and only if $w$ satisfies the normal form of \citet[Theorem~4.22]{DBLP:journals/jacm/EsparzaRS24}.
In particular, the equivalence stated in Theorem~\ref{thm:normalisation} holds.

\begin{proof}
The limit-behavior conjuncts of the two normal forms coincide, and the disjuncts differ only in their bodies.

($\Rightarrow$) If $w$ satisfies the disjunct of $(M, N)$ in Theorem~\ref{thm:normalisation}, then $w \modelsinf \varphi\rewMN{M}$ by Lemma~\ref{lem:variant-strengthening} at position $0$, so $w$ satisfies the disjunct of the same $(M, N)$ in the normal form of \citet[Theorem~4.22]{DBLP:journals/jacm/EsparzaRS24}.

($\Leftarrow$) Suppose $w$ satisfies the disjunct of $(M, N)$ in the normal form of \citet[Theorem~4.22]{DBLP:journals/jacm/EsparzaRS24}, and define $N^* = N \cup \{\, \ltlfF\ltlfG\psi \in \BFG \mid w \modelsinf \ltlfF\ltlfG(\psi\fgrw{M}) \,\}$.
The persistence blocks of $(M, N^*)$ hold, for $\ltlfF\ltlfG\psi \in N$ by assumption and otherwise by the definition of $N^*$.
The recurrence blocks hold by applying $\ltlfG\ltlfF$ to the position-wise implication of Lemma~\ref{lem:mu-monotone}.
Every $\ltlfF\ltlfG\psi \in \BFG \setminus N^*$ satisfies $w \not\modelsinf \ltlfF\ltlfG(\psi\fgrw{M})$, so Lemma~\ref{lem:variant-agreement} applies to $(M, N^*)$ and yields $w \modelsinf \varphi\rewM{M,N^*}$ from $w \modelsinf \varphi\rewMN{M}$.
Hence $w$ satisfies the disjunct of $(M, N^*)$ in Theorem~\ref{thm:normalisation}.

Since the normal form of \citet[Theorem~4.22]{DBLP:journals/jacm/EsparzaRS24} is equivalent to $\varphi$, so is the normal form of Theorem~\ref{thm:normalisation}.
\end{proof}

Every clause of Definition~\ref{def:body-rewrite} is at most as large as its $\rewMN{M}$-counterpart, so the length bound $2^{O(|\varphi|)}$ of \citet[Theorem~4.22]{DBLP:journals/jacm/EsparzaRS24} carries over to Theorem~\ref{thm:normalisation}.

\section{Supplementary Material for Section~\ref{sec:min-tight-completion}}\label{appendix:min-tight-completion}
\choplemma*
\begin{proof}
Throughout the proof, both sides evaluate their formula at position $p$. The left-hand side on the full trace $u = u[0..N]$, the right-hand side on the trace $u[0..N-1]$. Two elementary facts are used in every case.
First, a position $j$ of $u$ satisfies $\neg\last$ if and only if $j \leq N-1$, that is, if and only if $j$ is also a position of $u[0..N-1]$.
Second, $u$ and $u[0..N-1]$ carry the same letter at every position $j \leq N-1$.
We proceed by structural induction on $\sigma \in \Sigma_1$, at all positions $p$ at once. 
\begin{itemize}
\item $\sigma \in \{\ltlftrue, \ltlffalse, a, \neg a\}$.
Here $\chop(\sigma) = \sigma$, and both sides depend only on the letter at position $p$. Since $p \leq N-1$, that letter is the same in $u$ and in $u[0..N-1]$.

\item $\sigma = \sigma_1 \land \sigma_2$ (and $\lor$ analogously).
Here $\chop(\sigma) = \chop(\sigma_1) \land \chop(\sigma_2)$. 
Both semantics evaluate a conjunction at position $p$ as the conjunction of the evaluations, so the claim follows from the I.H.\ applied to $\sigma_1$ and $\sigma_2$ at $p$.

\item  $\sigma = \ltlfX\sigma'$.
Here $\chop(\ltlfX\sigma') = \ltlfX(\neg\last \land \chop(\sigma'))$.

$(\Rightarrow)$. Assume that the left-hand side holds. The strong next provides the successor position $p+1$ and asserts both conjuncts there. The conjunct $\neg\last$ tells us that $p+1 \leq N-1$, so $p+1$ is a position of the shortened trace. The induction hypothesis at $p+1$ turns the conjunct $\chop(\sigma')$ into $u[p+1..N-1] \modelsfin \sigma'$. Since $p+1 \leq N-1$, position $p$ has a successor inside the shortened trace as well. Together this gives $u[p..N-1] \modelsfin \ltlfX\sigma'$.

$(\Leftarrow)$. Assume that the right-hand side holds. The strong next on the shortened trace provides $p+1 \leq N-1$ and $u[p+1..N-1] \modelsfin \sigma'$. The induction hypothesis at $p+1$ gives $u[p+1..N] \modelsfin \chop(\sigma')$. From $p+1 \leq N-1$ we also obtain $\neg\last$ at $p+1$ on the full trace, where the successor $p+1$ certainly exists. Together this gives the left-hand side.

\item $\sigma = \ltlfF\sigma'$. Here the rewrite is $\chop(\ltlfF\sigma') = \ltlfF(\neg\last \land \chop(\sigma'))$. On both sides, satisfaction requires a position at which the operand $\sigma'$ holds, and we show that the same positions can be chosen on both sides.

($\Rightarrow$). The left-hand side provides a position $j \in [p, N]$ at which $\neg\last \land \chop(\sigma')$ holds. The conjunct $\neg\last$ at $j$ tells us that $j \leq N-1$, so $j$ exists in the shortened trace. The induction hypothesis at $j$ turns $\chop(\sigma')$ at $j$ into $\sigma'$ at $j$ on the shortened trace. Hence the same $j$ establishes $u[p..N-1] \modelsfin \ltlfF\sigma'$.

($\Leftarrow$). The right-hand side provides a position $j \in [p, N-1]$ with $u[j..N-1] \modelsfin \sigma'$. Then $j \leq N-1$, because $j$ is a position of the shortened trace. The induction hypothesis at $j$ gives $\chop(\sigma')$ at $j$ on the full trace, and $j \leq N-1$ gives $\neg\last$ at $j$. Hence the same $j$ establishes the left-hand side.

\item $\sigma = \sigma_1 \ltlfU \sigma_2$. 
Here the rewrite is $\chop(\sigma_1 \ltlfU \sigma_2) = \chop(\sigma_1) \ltlfU (\neg\last \land \chop(\sigma_2))$.

($\Rightarrow$) The left-hand side provides a position $j$ at which $\neg\last \land \chop(\sigma_2)$ holds, with $\chop(\sigma_1)$ at every $i \in [p, j)$. The conjunct $\neg\last$ at $j$ gives $j \leq N-1$. Every waiting position satisfies $i < j \leq N-1$, so all positions involved exist in the shortened trace. The induction hypothesis at $j$ and at every $i$ turns $\chop(\sigma_2)$ into $\sigma_2$ and $\chop(\sigma_1)$ into $\sigma_1$ on the shortened trace. Hence the same $j$ establishes $u[p..N-1] \modelsfin \sigma_1 \ltlfU \sigma_2$.

($\Leftarrow$) The right-hand side provides a position $j \in [p, N-1]$ with $\sigma_2$ at $j$ and $\sigma_1$ at every $i \in [p, j)$, and all these positions are at most $N-1$. The induction hypothesis at each of them transfers both requirements to the full trace, and $j \leq N-1$ gives $\neg\last$ at $j$. Hence the same $j$ establishes the left-hand side.

\item $\sigma = \sigma_1 \ltlfM \sigma_2$.
The rewrite is $\chop(\sigma_1\ltlfM\sigma_2) = (\neg\last \land \chop(\sigma_1)) ~\ltlfM~ \chop(\sigma_2)$.

($\Rightarrow$) The left-hand side provides a position $j$ at which $\neg\last \land \chop(\sigma_1)$ holds, with $\chop(\sigma_2)$ at every $i \in [p, j]$. The conjunct $\neg\last$ at $j$ gives $j \leq N-1$, and every $i$ satisfies $i \leq j \leq N-1$, so all positions involved exist in the shortened trace. The induction hypothesis at $j$ and at every $i$ turns $\chop(\sigma_1)$ into $\sigma_1$ and $\chop(\sigma_2)$ into $\sigma_2$ on the shortened trace. Hence the same $j$ establishes $u[p..N-1] \modelsfin \sigma_1 \ltlfM \sigma_2$.

($\Leftarrow$) The right-hand side provides a position $j \in [p, N-1]$ with $\sigma_1$ at $j$ and $\sigma_2$ at every $i \in [p, j]$, and all these positions are at most $N-1$. The induction hypothesis at each of them transfers both requirements to the full trace, and $j \leq N-1$ gives $\neg\last$ at $j$. Hence the same $j$ establishes the left-hand side.
\end{itemize}
\end{proof}

\tightcompllemma*

\begin{proof}
Recall that $\tight(\sigma) = \finX(\sigma) \land (\last \lor \neg\chop(\finX(\sigma)))$, and write $u = w[p..n]$. If $c_\sigma(p) = \infty$, then no infix $w[p..m]$ satisfies $\finX(\sigma)$, so the first conjunct of $\tight(\sigma)$ fails on every $u$ and both sides of the claim are false for every $n \geq p$. Therefore,
assume from now on that $c_\sigma(p)$ is finite. We unfold the definition in four steps.

\begin{itemize}
\item The disjunct $\last$ holds at position $0$ of $u$ exactly when $n = p$. Hence
$
    u \modelsfin \tight(\sigma)
    \iff
    u \modelsfin \finX(\sigma)
    \text{ and }
    \bigl( n = p \text{ or } u \modelsfin \neg\chop(\finX(\sigma)) \bigr).
$

\item For $n \geq p+1$, Lemma~\ref{lem:chop} applied to $u$ at position $0$ states that $u \modelsfin \chop(\finX(\sigma))$ holds exactly when $w[p..n-1] \modelsfin \finX(\sigma)$ holds. Negating both sides, the condition above becomes
$u \modelsfin \finX(\sigma)
\text{ and }
\bigl( n = p \text{ or } w[p..n-1] \not\modelsfin \finX(\sigma) \bigr)$.

\item  Lemma~\ref{lem:monotonicity} turns both satisfaction conditions into threshold comparisons. We have $u \modelsfin \finX(\sigma)$ exactly when $n \geq c_\sigma(p)$, and, for $n \geq p+1$, $w[p..n-1] \not\modelsfin \finX(\sigma)$ exactly when $n - 1 < c_\sigma(p)$. The condition becomes
$n \geq c_\sigma(p)
\text{ and }
\bigl( n = p \text{ or } n - 1 < c_\sigma(p) \bigr)$.

\item This arithmetic condition is equivalent to $n = c_\sigma(p)$. If $n = c_\sigma(p)$, then $n \geq c_\sigma(p)$ holds, and either $n = p$ or $n - 1 = c_\sigma(p) - 1 < c_\sigma(p)$. Conversely, if $n \geq c_\sigma(p)$ and $n - 1 < c_\sigma(p)$, then $c_\sigma(p) \leq n \leq c_\sigma(p)$, so $n = c_\sigma(p)$. If instead $n \geq c_\sigma(p)$ and $n = p$, then $c_\sigma(p) \leq p$, and since $c_\sigma(p) \geq p$ we again obtain $n = p = c_\sigma(p)$.
\end{itemize}
\end{proof}

\begin{corollary}\label{corollary:E}
For every \ltl formula $\sigma \in \Sigma_1$ and every $w \in \Sigma^\omega$,
$w \modelsinf \sigma \iff w \modelsinf \exists(\tight(\sigma))$.
\end{corollary}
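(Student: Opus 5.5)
The plan is to reduce Corollary~\ref{corollary:E} to Proposition~\ref{prop:sigma1-translation} by showing that, on prefixes of a fixed trace $w$, the existence of a prefix satisfying $\tight(\sigma)$ is equivalent to the existence of a prefix satisfying $\finX(\sigma)$. Both directions go through the completion index $c_\sigma = c_\sigma(0)$, using Lemma~\ref{lemma:tight-compl} with $p = 0$, which states that $w[0..n] \modelsfin \tight(\sigma)$ holds iff $n = c_\sigma$.

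For ($\Rightarrow$), assume $w \modelsinf \sigma$. Lemma~\ref{lem:2-4} (at $i=0$) gives $c_\sigma < \infty$. Lemma~\ref{lemma:tight-compl} with $p = 0$ and $n = c_\sigma$ then yields $w[0..c_\sigma] \modelsfin \tight(\sigma)$. This prefix witnesses $w \modelsinf \exists(\tight(\sigma))$.

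For ($\Leftarrow$), assume $w[0..n] \modelsfin \tight(\sigma)$ for some $n$. We can argue in either of two ways:
\begin{itemize}
\item Use the first conjunct of $\tight(\sigma)$ directly: it gives $w[0..n] \modelsfin \finX(\sigma)$, and Lemma~\ref{lem:2-3} at $i = 0$ gives $w \modelsinf \sigma$.
\item Observe that Lemma~\ref{lemma:tight-compl} forces $n = c_\sigma$ to be finite, and then conclude via Lemma~\ref{lem:monotonicity} and Proposition~\ref{prop:sigma1-translation}.
\end{itemize}
I would use the first route, since it avoids any case analysis.

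No step here is a real obstacle; all the substance sits in Lemma~\ref{lemma:tight-compl}, which is assumed. The only point needing care is the ($\Rightarrow$) direction, where the finiteness of $c_\sigma$ must be established before Lemma~\ref{lemma:tight-compl} can be applied with $n = c_\sigma$. Lemma~\ref{lem:2-4} provides exactly this.
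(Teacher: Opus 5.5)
Your proposal is correct and essentially matches the paper's proof, which likewise shows $w \modelsinf \sigma \iff c_\sigma < \infty$ via Lemmas~\ref{lem:2-3} and~\ref{lem:2-4}, and then uses Lemma~\ref{lemma:tight-compl} at $p=0$ to identify $w[0..c_\sigma]$ as the unique prefix satisfying $\tight(\sigma)$. In your ($\Leftarrow$) direction you read $\finX(\sigma)$ off the first conjunct of $\tight(\sigma)$ and apply Lemma~\ref{lem:2-3}; this is only a small shortcut, and the paper uses the same step in the proof of Theorem~\ref{theorem:AE}.
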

\noindent Intuitively, $\sigma$ holds exactly when the minimal completion point exists.
\begin{proof}
    By Lemmas~\ref{lem:2-3} and \ref{lem:2-4}, $w \modelsinf \sigma \iff c_\sigma < \infty$. By Lemma~\ref{lemma:tight-compl}, a prefix satisfying $\tight(\sigma)$ exists iff $c_\sigma < \infty$, namely $n = c_\sigma$ and no other.
\end{proof}

\begin{corollary}\label{corollary:A}
For every \ltl formula $\pi \in \Pi_1$ and every $w \in \Sigma^\omega$,
$w \modelsinf \pi \iff w \modelsinf \forall(\neg \tight(\neg \pi))$.
\end{corollary}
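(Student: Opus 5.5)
The plan is to mirror the proof of Proposition~\ref{prop:pi1-translation}, replacing $\finX$ by $\tight$ and using Corollary~\ref{corollary:E} in place of Proposition~\ref{prop:sigma1-translation}. Since $\pi \in \Pi_1$, its NNF dual $\neg\pi$ lies in $\Sigma_1$, so Corollary~\ref{corollary:E} applies to $\neg\pi$. It yields that $w \modelsinf \neg\pi$ holds if and only if some prefix $w[0..n]$ satisfies $\tight(\neg\pi)$.

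Next, I unfold the semantics of the universal prefix quantifier. By definition, $w \modelsinf \forall(\neg\tight(\neg\pi))$ holds iff $w[0..i] \modelsfin \neg\tight(\neg\pi)$ for every $i \geq 0$. This step needs a small check. The $\neg$ here is the NNF dual operation on \ltlf formulas, so I must confirm that on finite traces the dual is semantically the complement: $u \modelsfin \neg\psi$ iff $u \not\modelsfin \psi$.

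For this I will cite the standard dualities of \ltlf: $\ltlfX$ is dual to $\ltlfN$, $\ltlfU$ to $\ltlfR$, $\ltlfM$ to $\ltlfW$, $\land$ to $\lor$, and $\ltlftrue$ to $\ltlffalse$. These dualities guarantee that $\neg$ behaves as the complement. I expect this bookkeeping to be the only mildly delicate point. The formula $\tight(\neg\pi)$ contains $\last = \ltlfN(\ltlffalse)$ and the $\chop$ construction, and all of these are plain \ltlf syntax, so the standard duality covers them. This is the same fact already used silently in Proposition~\ref{prop:pi1-translation} and Theorem~\ref{theorem:EA}.

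With this in hand, $w \modelsinf \forall(\neg\tight(\neg\pi))$ holds iff no prefix of $w$ satisfies $\tight(\neg\pi)$. By Corollary~\ref{corollary:E}, that holds iff $w \not\modelsinf \neg\pi$. Since the NNF dual is also semantically complementary on infinite traces, this is iff $w \modelsinf \pi$, which closes the chain of equivalences.

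As a sanity check, the argument could also go through Lemma~\ref{lemma:tight-compl} directly. A prefix satisfies $\tight(\neg\pi)$ exactly when its end index is $c_{\neg\pi}$, and that index is finite iff $w \modelsinf \neg\pi$, by Lemmas~\ref{lem:2-3} and~\ref{lem:2-4}. However, routing the proof through Corollary~\ref{corollary:E} is shorter.
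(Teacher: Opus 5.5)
Your proof is correct and follows the paper's argument: unfold $\forall$ to ``no prefix satisfies $\tight(\neg\pi)$'', apply Corollary~\ref{corollary:E} to $\neg\pi \in \Sigma_1$, and conclude by complementation. Your explicit check that the NNF dual is the semantic complement on finite and on infinite traces is a step the paper leaves implicit, and making it explicit is fine.
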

\noindent Intuitively, $\pi$ holds exactly when no prefix ever completes a violation.
\begin{proof}
    $w \modelsinf \forall (\neg \tight(\neg \pi))$ iff no prefix satisfies $\tight(\neg \pi)$, iff $w \not \modelsinf \neg \pi$ (Corollary~\ref{corollary:E} applied to $\neg \pi \in \Sigma_1$), iff $w \modelsinf \pi$.
\end{proof}

\existsforalltheorem*
\begin{proof}
    Since $\ltlfG(\neg \tight(\neg \pi)) \equiv \neg \ltlfF(\tight(\neg \pi))$ on finite traces, $w \modelsinf \exists \forall(\ltlfG(\neg \tight(\neg \pi))) \iff w \not \modelsinf \forall \exists (\ltlfF(\tight(\neg \pi)))$. By Theorem~\ref{theorem:AE} applied to $\neg \pi \in \Sigma_1$, the latter holds iff $w \not \modelsinf \ltlfG \ltlfF(\neg \pi)$, which is $w \modelsinf \ltlfF \ltlfG (\pi)$.
\end{proof}

\section{Supplementary Material for Section~\ref{sec:sigma2}}\label{appendix:sigma2}

The proofs of Lemmas~\ref{lem:proj} and~\ref{lem:body-stability} and Theorem~\ref{theorem:pi2} are all under the context of Section~\ref{sec:sigma2}.
A disjunct of the normal form given by the limit-behavior $(M, N)$, an infinite trace $w$ exhibiting it, the persistence index $T$ of Definition~\ref{def:stab-index}, and the projection of Definition~\ref{def:proj}. 
The supporting definitions and lemmas appear where the proofs need them. 
We restate the two lemmas (Lemma~\ref{lem:proj} and Lemma~\ref{lem:body-stability}) from the main text and prove them, and close with Theorem~\ref{theorem:pi2}, a recurrence variant of Theorem~\ref{theorem:sigma2}.

The induction below needs the convergence of Definition~\ref{def:convergent} at every position of the trace, not only at position $0$.

\begin{definition}[Convergent at a position]\label{def:stable}
A formula $\varphi$ is convergent at a position $k$ if there is a bound $n_0 \geq k$ such that, for every $n \geq n_0$, $w[k..n] \modelsfin \finX(\varphi)$ holds if and only if $w[k...] \modelsinf \varphi$. Convergence at position $0$ is the convergence of Definition~\ref{def:convergent}.
\end{definition}

\begin{definition}[Soundness past a bound]\label{def:soundness-past}
A formula $\varphi$ is sound past $B$ if, for every position $k \geq B$, $w[k..n] \modelsfin \finX(\varphi)$ for some $n \geq k$ implies $w[k...] \modelsinf \varphi$. Equivalently, for every position $k \geq B$, $w[k...] \not \modelsinf \varphi$ implies $w[k..n] \not \modelsfin \finX(\varphi)$ for every $n \geq k$.
\end{definition}
Intuitively, convergence says the finite evaluations tell the truth from some point on, and soundness past $B$ says a true finite evaluation from $B$ on is never wrong.

\begin{corollary}\label{cor:sig1_restated}
Let $\sigma \in \Sigma_1$. Then (i) $\sigma$ is convergent at every position $k$, with the completion index $c_\sigma(k)$ a bound of convergence at $k$ whenever $w[k...] \modelsinf \sigma$, and (ii) $\sigma$ is sound past 0.
\end{corollary}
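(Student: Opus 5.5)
The plan is to derive both parts directly from Lemmas~\ref{lem:monotonicity}, \ref{lem:2-3} and \ref{lem:2-4}, applied to the suffix $w[k...]$. These lemmas are stated for an arbitrary starting index $i$ (equivalently, for the suffix viewed as a trace in its own right), so no new induction on $\sigma$ is needed.

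Part (ii) comes first because it is immediate. Fix any position $k \geq 0$ and assume $w[k..n] \modelsfin \finX(\sigma)$ for some $n \geq k$. Lemma~\ref{lem:2-3} with $i = k$ gives $w[k...] \modelsinf \sigma$. This is exactly the definition of soundness past $0$ in Definition~\ref{def:soundness-past}.

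For part (i), fix a position $k$ and split on whether $w[k...] \modelsinf \sigma$.
\begin{itemize}
\item \textbf{Case $w[k...] \modelsinf \sigma$.} Lemma~\ref{lem:2-4} gives $c_\sigma(k) < \infty$, and $c_\sigma(k) \geq k$ holds by Definition~\ref{def:completion-index}. Take $n_0 = c_\sigma(k)$. By Lemma~\ref{lem:monotonicity}, every $n \geq n_0$ satisfies $w[k..n] \modelsfin \finX(\sigma)$. So the finite evaluation is true and agrees with the infinite one, which also yields the extra claim that $c_\sigma(k)$ is a bound of convergence.
\item \textbf{Case $w[k...] \not\modelsinf \sigma$.} Take $n_0 = k$. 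By the contrapositive of Lemma~\ref{lem:2-3}, i.e.\ part (ii), no $n \geq k$ satisfies $w[k..n] \modelsfin \finX(\sigma)$. So both evaluations are false for all $n \geq n_0$.
\end{itemize}
In both cases $n_0 \geq k$, as Definition~\ref{def:stable} requires.

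There is no real obstacle here. The only point to check is that the cited lemmas may be used at an arbitrary start position $k$ rather than at $0$. They are phrased with a general index $i$ and infixes $w[i..n]$, so they apply verbatim with $i = k$.
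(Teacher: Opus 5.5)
Your proposal is correct and matches the paper's proof: in the satisfied case, Lemma~\ref{lem:2-4} and Lemma~\ref{lem:monotonicity} give convergence with bound $c_\sigma(k)$, and in the unsatisfied case, Lemma~\ref{lem:2-3} gives convergence with bound $k$ as well as soundness past $0$. The only difference is ordering: you prove soundness first and reuse it, whereas the paper reads it off the second case.
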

A co-safety formula is the easy case, since by Lemma~\ref{lem:monotonicity} once $w[k..n] \modelsfin \finX(\sigma)$ is true it stays true as $n$ grows, and by Lemma~\ref{lem:2-3} it is never true at a position where $\sigma$ fails.

\begin{proof}
    Fix a position $k$. If $w[k...] \modelsinf \sigma$ then $c_{\sigma}(k)$ is finite by Lemma~\ref{lem:2-4}, and by Lemma~\ref{lem:monotonicity} $w[k..n] \modelsfin \finX(\sigma)$ holds iff $n \geq c_{\sigma}(k)$, which is convergence at $k$ with bound $c_{\sigma}(k)$.
    If $w[k...] \not \modelsinf \sigma$ then by Lemma~\ref{lem:2-3} $w[k..n] \not \modelsfin \finX(\sigma)$ for every $n \geq k$, which is convergence at $k$ with bound $k$, and soundness past 0.
\end{proof}

The outermost operator of each safety leaf of the body is a $\nu$-operator, and on such a formula $\finX$ and $\finN$ coincide by Definition~\ref{def:fin-c}, since its operands enter the $\ltlfN$-context whatever the incoming context, so the leaves are governed by the weak reinterpretation.
We therefore record its behavior on $\Pi_1$ formulas, which we require once in the proof of Lemma~\ref{lem:proj}, and with both of its parts in the proof of Lemma~\ref{lem:body-stability}.

\begin{lemma}\label{lem:pi-1-formulas}
Let $\pi \in \Pi_1$. For every position $k$ of $w$: (i) If $w[k...] \modelsinf \pi$, then $w[k..n] \modelsfin \finN(\pi)$ for every $n \geq k$.
(ii) If $w[k...] \not \modelsinf \pi$, then there is a bound $n_0 \geq k$, depending on $\pi$ and on $k$, with $w[k..n] \not \modelsfin \finN(\pi)$ for every $n \geq n_0$.
\end{lemma}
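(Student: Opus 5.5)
Both parts reduce to the co-safety results of Section~\ref{sec:min-tight-completion}, using duality between $\finN$ and $\finX$. The key auxiliary fact is that for $\pi \in \Pi_1$ (no $\mu$-operators, so every next is in $\ltlfN$-context),
\[
u, j \modelsfin \finN(\pi) \iff u, j \not\modelsfin \finX(\neg\pi)
\]
for every finite trace $u$ and position $j$. I would prove this by structural induction on $\pi$. Literals, $\ltlftrue$, $\ltlffalse$, $\land$ and $\lor$ are immediate from the NNF dual. The next case uses that $\ltlfN\psi$ is the finite-trace dual of $\ltlfX\neg\psi$: at the last position $\ltlfN$ holds and $\ltlfX$ fails. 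For $\ltlfW$ and $\ltlfR$, their finite semantics are exactly the negations of $\ltlfM$ and $\ltlfU$ over the dual operands, with the bounded quantifiers $[k,n]$ on both sides. The point to check is that the context of Definition~\ref{def:fin-c} dualises correctly: the operands of a $\nu$-operator in $\pi$ get $\ltlfN$-context, and the operands of the dual $\mu$-operator in $\neg\pi \in \Sigma_1$ get $\ltlfX$-context, which agrees with Definition~\ref{def:fin}.

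For part (i), assume $w[k...] \modelsinf \pi$. Then $w[k...] \not\modelsinf \neg\pi$ with $\neg\pi \in \Sigma_1$. Lemma~\ref{lem:2-3} (contrapositive) gives $w[k..n] \not\modelsfin \finX(\neg\pi)$ for every $n \geq k$. The duality fact then turns this into $w[k..n] \modelsfin \finN(\pi)$ for all $n \geq k$.

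For part (ii), assume $w[k...] \not\modelsinf \pi$, so $w[k...] \modelsinf \neg\pi$. Lemma~\ref{lem:2-4} gives a finite completion index $c_{\neg\pi}(k)$, which I take as $n_0$; it satisfies $n_0 \geq k$. Lemma~\ref{lem:monotonicity} gives $w[k..n] \modelsfin \finX(\neg\pi)$ for every $n \geq n_0$. Duality then yields $w[k..n] \not\modelsfin \finN(\pi)$ for every $n \geq n_0$, and $n_0$ depends only on $\pi$ and $k$ (and the fixed $w$).

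I expect the main obstacle to be the duality induction, specifically bookkeeping the two next operators through the context switches. It must be clear that in a $\Pi_1$ formula the $\finN$ translation produces only weak nexts, including under $\ltlfX$ nested inside $\ltlfW$ and $\ltlfR$, and that these match the strong nexts of $\finX$ on the dual. Everything after that is a direct appeal to Lemmas~\ref{lem:monotonicity}, \ref{lem:2-3} and~\ref{lem:2-4}.
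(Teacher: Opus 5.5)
Your proof is correct, but it takes a different route from the paper's.

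The paper proves both parts by a direct structural induction on $\pi$, comparing $\modelsinf$ with $\modelsfin$ case by case. In part (ii) every case must produce its own bound. The $\ltlfW$ and $\ltlfR$ cases locate the least position $l \geq k$ where the relevant operand fails, and take the maximum of finitely many inductive bounds collected on $[k,l]$.

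You instead isolate a purely finite-trace duality, $u,j \modelsfin \finN(\pi) \iff u,j \not\modelsfin \finX(\neg\pi)$. This is closure of \ltlf under NNF duality, with $\ltlfX$ and $\ltlfN$ swapped, $\ltlfW$ dual to $\ltlfM$, and $\ltlfR$ dual to $\ltlfU$. Your case analysis and your context bookkeeping both go through: on $\pi \in \Pi_1$ every next becomes weak, and on $\neg\pi \in \Sigma_1$ every next becomes strong. You then read both parts off Lemmas~\ref{lem:monotonicity}, \ref{lem:2-3} and~\ref{lem:2-4} applied to $\neg\pi$.

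Your route is shorter and avoids re-deriving the co-safety argument in dual form. It also gives more than the statement asks. Along the prefixes $w[k..n]$, $\finN(\pi)$ holds exactly when $n < c_{\neg\pi}(k)$, so its truth value changes at most once, and you get the explicit bound $n_0 = c_{\neg\pi}(k)$. As a by-product, the formula $\neg\finX(\neg\pi)$ of Proposition~\ref{prop:pi1-translation} is equivalent on finite traces to $\finN(\pi)$.

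The paper's induction, in exchange, is self-contained. It does not lean on the co-safety lemmas imported from the literature, and it establishes directly the safety-side behaviour that Lemmas~\ref{lem:proj} and~\ref{lem:body-stability} later consume.
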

\begin{proof}
    On a $\Pi_1$ formula no operator switches the $\fin_c$ (Definition~\ref{def:fin-c}) context to $c = \ltlfX$. So $\finN$ carries $\land$, $\lor$, $\ltlfG$, $\ltlfW$, $\ltlfR$ to themselves and every next to the weak next, in every case translating the operands again by $\finN$. We prove both parts with a structural induction on $\pi$, at all positions $k$ at once.

    (i) Suppose $w[k...] \modelsinf \pi$.
    \begin{itemize}

    \item For a literal, for $\ltlftrue$, and for $\ltlffalse$, $w[k..n] \modelsfin \finN(\pi)$ does not depend on $n$ and equals $w[k...] \modelsinf \pi$, which is true.

    \item For $\pi' \land \pi''$, $w[k...] \modelsinf \pi' \land \pi''$ gives $w[k...] \modelsinf \pi'$ and $w[k...] \modelsinf \pi''$. By the induction hypothesis $w[k..n] \modelsfin \finN(\pi')$ and $w[k..n] \modelsfin \finN(\pi'')$ hold for every $n \geq k$, so $w[k..n] \modelsfin \finN(\pi' \land \pi'')$ holds for every $n \geq k$.

    \item For $\pi' \lor \pi''$, $w[k...] \modelsinf \pi' \lor \pi''$ gives $w[k...] \modelsinf \pi'$ or $w[k...] \modelsinf \pi''$, and the induction hypothesis on the true disjunct gives $w[k..n] \modelsfin \finN(\pi' \lor \pi'')$ for every $n \geq k$.

    \item For a next $\ltlfX\pi'$, $w[k...] \modelsinf \ltlfX\pi'$ gives $w[k+1...] \modelsinf \pi'$. On the prefix with $n = k$, position $k$ is the last one, where the weak next is vacuously true. On a prefix with $n > k$, the weak next reduces to $w[k+1..n] \modelsfin \finN(\pi')$, true by the induction hypothesis at $k+1$.

    \item For $\ltlfG\pi'$, $w[k...] \modelsinf \ltlfG\pi'$ gives $w[j...] \modelsinf \pi'$ at every position $j \geq k$. The finite clause of $\ltlfG$ on $w[k..n]$ requires $w[j..n] \modelsfin \finN(\pi')$ at every position $j \in [k, n]$, and the induction hypothesis at each such position supplies it.

    \item For $\pi' \ltlfW \pi''$, recall $\pi' \ltlfW \pi'' \equiv (\pi' \ltlfU \pi'') \lor \ltlfG\pi'$, so $w[k...] \modelsinf \pi' \ltlfW \pi''$ gives two branches. In the until branch, there is a position $j \geq k$ with $w[j...] \modelsinf \pi''$ and with $w[j'...] \modelsinf \pi'$ at every $j' \in [k, j)$. If $j \leq n$, the induction hypothesis at $j$ and at every position of $[k, j)$ gives the finite until branch on $w[k..n]$. 
    If $j > n$, $w[j'...] \modelsinf \pi'$ at every position $j' \in [k, n]$, and the induction hypothesis gives the finite $\ltlfG\pi'$ branch. In the $\ltlfG\pi'$ branch, $w[j...] \modelsinf \pi'$ at every position $j \geq k$, and the induction hypothesis gives the finite $\ltlfG\pi'$ branch on every prefix.

    \item For $\pi' \ltlfR \pi''$, recall $\pi' \ltlfR \pi'' \equiv (\pi'' \ltlfU (\pi' \land \pi'')) \lor \ltlfG\pi''$, so $w[k...] \modelsinf \pi' \ltlfR \pi''$ gives two branches. In the release branch, there is a position $j \geq k$ with $w[j...] \modelsinf \pi'$ and with $w[j'...] \modelsinf \pi''$ at every position $j'$ of the inclusive range $[k, j]$. If $j \leq n$, the induction hypothesis at $j$ and at every position of the inclusive range $[k, j]$ gives the finite release branch on $w[k..n]$. 
    If $j > n$, $w[j'...] \modelsinf \pi''$ at every position $j' \in [k, n]$, and the induction hypothesis gives the finite $\ltlfG\pi''$ branch. In the $\ltlfG\pi''$ branch, the induction hypothesis gives the finite $\ltlfG\pi''$ branch on every prefix.
\end{itemize}

(ii) Suppose $w[k...] \not\modelsinf \pi$. Every case names its bound.
\begin{itemize}
  \item For a literal, for $\ltlftrue$, and for $\ltlffalse$, $w[k..n] \modelsfin \finN(\pi)$ equals $w[k...] \modelsinf \pi$, which is false, and the bound is $k$.

  \item For $\pi' \land \pi''$, since $w[k...] \not\modelsinf \pi' \land \pi''$ some conjunct is false, $w[k...] \not\modelsinf \pi'$ say, and the bound $b$ of $\pi'$ from the induction hypothesis serves. For every $n \geq b$, $w[k..n] \not\modelsfin \finN(\pi')$, so $w[k..n] \not\modelsfin \finN(\pi' \land \pi'')$.

  \item For $\pi' \lor \pi''$, $w[k...] \not\modelsinf \pi' \lor \pi''$ gives $w[k...] \not\modelsinf \pi'$ and $w[k...] \not\modelsinf \pi''$, and the larger of their two bounds serves.

  \item For a next $\ltlfX\pi'$, $w[k...] \not\modelsinf \ltlfX\pi'$ gives $w[k+1...] \not\modelsinf \pi'$, and the induction hypothesis gives a bound $b$ at $k+1$. On every prefix $w[k..n]$ with $n \geq b$ and $n \geq k+1$, position $k$ is not the last one, so the weak next reduces to $w[k+1..n] \modelsfin \finN(\pi')$, which is false. The bound is the larger of $b$ and $k+1$.

  \item For $\ltlfG\pi'$, $w[k...] \not\modelsinf \ltlfG\pi'$ gives a position $j \geq k$ with $w[j...] \not\modelsinf \pi'$, and the induction hypothesis gives a bound $b$ at $j$. On every prefix $w[k..n]$ with $n \geq j$ and $n \geq b$, the finite clause of $\ltlfG$ at $k$ requires $w[j..n] \modelsfin \finN(\pi')$ at position $j$, and that is false. The bound is the larger of $j$ and $b$.

  \item For $\pi' \ltlfW \pi''$, since $w[k...] \not\modelsinf \pi' \ltlfW \pi''$ both branches of $(\pi' \ltlfU \pi'') \lor \ltlfG\pi'$ fail at $k$. Since $\ltlfG\pi'$ fails, there is a least position $l \geq k$ with $w[l...] \not\modelsinf \pi'$. Now $w[j...] \not\modelsinf \pi''$ at every position $j \in [k, l]$. Otherwise $w[j...] \modelsinf \pi''$ for some $j \in [k, l]$,
  and by the minimality of $l$ we have $w[j'...] \modelsinf \pi'$ at every position $j' \in [k, j)$, a range inside $[k, l)$ since $j \leq l$, so the until branch would hold at $k$.
  The induction hypothesis now gives a bound for $\pi'$ at $l$ and one for $\pi''$ at every position of $[k, l]$. The bound of the case, $n_0$, is the largest of these finitely many bounds and $l$. Fix a prefix $w[k..n]$ with $n \geq n_0$. 
  The finite $\ltlfG\pi'$ branch fails, since $w[l..n] \not\modelsfin \finN(\pi')$ and $l \leq n$.
  A position $j \in [k, l]$ cannot satisfy the finite until branch, since $w[j..n] \not\modelsfin \finN(\pi'')$. A position $j > l$ cannot satisfy it either, since the finite until branch needs $w[m..n] \modelsfin \finN(\pi')$ at every position $m \in [k, j)$, and $l$ is one of them. So $w[k..n] \not\modelsfin \finN(\pi' \ltlfW \pi'')$.

  \item For $\pi' \ltlfR \pi''$, since $w[k...] \not\modelsinf \pi' \ltlfR \pi''$ both branches of $(\pi'' \ltlfU (\pi' \land \pi'')) \lor \ltlfG\pi''$ fail at $k$. Since $\ltlfG\pi''$ fails, there is a least position $l \geq k$ with $w[l...] \not\modelsinf \pi''$. Now $w[j...] \not\modelsinf \pi'$ at every position $j \in [k, l)$. Otherwise $w[j...] \modelsinf \pi'$ for some $j \in [k, l)$, and by the minimality of $l$ we have $w[j'...] \modelsinf \pi''$ at every position $j'$ of the inclusive range $[k, j]$, a range inside $[k, l)$ since $j < l$, so the release branch would hold at $k$. The induction hypothesis now gives a bound for $\pi''$ at $l$ and one for $\pi'$ at every position of $[k, l)$. The bound of the case, $n_0$, is the largest of these finitely many bounds and $l$. Fix a prefix $w[k..n]$ with $n \geq n_0$. 
  The finite $\ltlfG\pi''$ branch fails, since $w[l..n] \not\modelsfin \finN(\pi'')$ and $l \leq n$.
  A position $j \in [k, l)$ cannot satisfy the finite release branch, since $w[j..n] \not\modelsfin \finN(\pi')$. A position $j \geq l$ cannot satisfy it either, since the finite release branch needs $w[m..n] \modelsfin \finN(\pi'')$ at every position $m$ of the inclusive range $[k, j]$, and $l$ is one of them. So $w[k..n] \not\modelsfin \finN(\pi' \ltlfR \pi'')$.
\end{itemize}

\end{proof}

\projlemma*
Past $T$ every safety leaf is true, so a body formula behaves exactly like the $\Sigma_1$ formula obtained by replacing its leaves with $\ltlftrue$.
\begin{proof}
    We prove by structural induction over the body grammar, at all positions $i \geq T$ at once.
    \begin{itemize}
        \item For a literal, for $\ltlftrue$, and for $\ltlffalse$, the projection leaves the formula unchanged, and the two sides are the same formula.

        \item For a safety leaf $\ltlfG(\psi\fgrw{M})$, $w[i...] \modelsinf \ltlfG(\psi\fgrw{M})$ by Definition~\ref{def:stab-index}. Its operand $\psi\fgrw{M}$ is $\Pi_1$ by Definition~\ref{def:fg-rewrite} and $\Pi_1$ is closed under $\ltlfG$, so the safety leaf is itself a $\Pi_1$ formula, and $\finX$ and $\finN$ coincide on formulas whose outermost operator is a $\nu$-operator (Definition~\ref{def:fin-c}), so Lemma~\ref{lem:pi-1-formulas}(i) applies and makes $w[i..n] \modelsfin \finX(\ltlfG(\psi\fgrw{M}))$ true on every prefix. The projection replaces the safety leaf by $\ltlftrue$, and $w[i...] \modelsinf \ltlftrue$ and $w[i..n] \modelsfin \finX(\ltlftrue)$ are also true.

        \item For $\varphi' \land \varphi''$, $w[i...] \modelsinf \varphi' \land \varphi''$ holds exactly when $w[i...] \modelsinf \varphi'$ and $w[i...] \modelsinf \varphi''$, and $w[i..n] \modelsfin \finX(\varphi' \land \varphi'')$ holds exactly when $w[i..n] \modelsfin \finX(\varphi')$ and $w[i..n] \modelsfin \finX(\varphi'')$. The same two descriptions hold for $\proj(\varphi') \land \proj(\varphi'')$, so the induction hypothesis at $i$ transfers both. For $\varphi' \lor \varphi''$ the same argument runs with the truth of some operand in place of the truth of both.

        \item For $\ltlfX\varphi'$, $w[i...] \modelsinf \ltlfX\varphi'$ equals $w[i+1...] \modelsinf \varphi'$, and $w[i..n] \modelsfin \finX(\ltlfX\varphi')$ is given by the clause of the strong next, false for $n = i$ and equal to $w[i+1..n] \modelsfin \finX(\varphi')$ for $n > i$. Both descriptions hold for $\ltlfX\proj(\varphi')$ as well, so the induction hypothesis at $i+1$ transfers $w[i...] \modelsinf \cdot$ and every $w[i..n] \modelsfin \finX(\cdot)$ with $n > i$, and for $n = i$ both $w[i..i] \modelsfin \finX(\ltlfX\varphi')$ and $w[i..i] \modelsfin \finX(\ltlfX\proj(\varphi'))$ are false.

        \item For $\ltlfF\varphi'$, $w[i...] \modelsinf \ltlfF\varphi'$ holds exactly when $w[j...] \modelsinf \varphi'$ for some position $j \geq i$, and $w[i..n] \modelsfin \finX(\ltlfF\varphi')$ holds exactly when $w[j..n] \modelsfin \finX(\varphi')$ for some position $j \in [i, n]$. Every such $j$ satisfies $j \geq T$, so the induction hypothesis at $j$ transfers both descriptions to $\ltlfF\proj(\varphi')$.

        \item For $\varphi' \ltlfU \varphi''$, $w[i...] \modelsinf \varphi' \ltlfU \varphi''$ holds exactly when $w[j...] \modelsinf \varphi''$ for some position $j \geq i$ while $w[m...] \modelsinf \varphi'$ at every position $m \in [i, j)$, and $w[i..n] \modelsfin \finX(\varphi' \ltlfU \varphi'')$ holds exactly when $w[j..n] \modelsfin \finX(\varphi'')$ for some position $j \in [i, n]$ while $w[m..n] \modelsfin \finX(\varphi')$ at every position $m \in [i, j)$. All positions involved are $\geq T$, so the induction hypothesis applies at each of them and transfers both descriptions to $\proj(\varphi') \ltlfU \proj(\varphi'')$.

        \item For $\varphi' \ltlfM \varphi''$, $w[i...] \modelsinf \varphi' \ltlfM \varphi''$ holds exactly when $w[j...] \modelsinf \varphi'$ for some position $j \geq i$ while $w[m...] \modelsinf \varphi''$ at every position $m$ of the inclusive range $[i, j]$, and $w[i..n] \modelsfin \finX(\varphi' \ltlfM \varphi'')$ holds by the same condition with $j \in [i, n]$ and $w[\cdot..n] \modelsfin \finX(\cdot)$ in place of $w[\cdot...] \modelsinf \cdot$. The induction hypothesis again applies at every position involved and transfers both descriptions to $\proj(\varphi') \ltlfM \proj(\varphi'')$.
    \end{itemize}
    
\end{proof}

\begin{corollary}\label{cor:body-sound}
Every body formula is sound past $T$.
\end{corollary}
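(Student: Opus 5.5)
The plan is to derive the claim directly from Lemma~\ref{lem:proj} and Lemma~\ref{lem:2-3}, essentially formalizing the argument sketched in the main text after Lemma~\ref{lem:proj}. By Definition~\ref{def:soundness-past}, we must show that for every body formula $\varphi$ of $(M, N)$, every position $k \geq T$, and every $n \geq k$, the finite satisfaction $w[k..n] \modelsfin \finX(\varphi)$ implies $w[k...] \modelsinf \varphi$.

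Fix such $\varphi$, $k \geq T$, and $n \geq k$, and assume $w[k..n] \modelsfin \finX(\varphi)$. The second part of Lemma~\ref{lem:proj} (applicable since $k \geq T$ and $n \geq k$) converts this into $w[k..n] \modelsfin \finX(\proj(\varphi))$. Since $\proj(\varphi)$ replaces every safety leaf by $\ltlftrue$, the remaining operators are literals, $\ltlftrue$, $\ltlffalse$, $\land$, $\lor$, $\ltlfX$, $\ltlfF$, $\ltlfU$, $\ltlfM$, so $\proj(\varphi) \in \Sigma_1$.

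One point needs checking: on a $\Sigma_1$ formula, $\finX$ in the sense of Definition~\ref{def:fin-c} agrees with Definition~\ref{def:fin}, because no $\nu$-operator switches the context. I must also confirm that $\finX(\proj(\varphi))$ means the extended reinterpretation of the projected formula. This is harmless, since projection deletes only subtrees rooted at $\nu$-operators, so the contexts of all remaining nexts are the $\ltlfX$-context in both readings.

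Lemma~\ref{lem:2-3} then yields $w[k...] \modelsinf \proj(\varphi)$. The first part of Lemma~\ref{lem:proj} at position $k \geq T$ transfers this back to $w[k...] \modelsinf \varphi$. The only potential obstacle is the bookkeeping between the two versions of $\finX$; everything else is direct chaining of the cited lemmas.
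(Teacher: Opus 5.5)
Your proof is correct and follows the same route as the paper: Lemma~\ref{lem:proj} on the finite side, soundness of the $\Sigma_1$ projection via Lemma~\ref{lem:2-3} (which the paper invokes through Corollary~\ref{cor:sig1_restated}(ii)), then Lemma~\ref{lem:proj} on the infinite side. The only difference is that the paper runs this chain in contrapositive form, starting from $w[i...]\not\modelsinf\varphi$. Your extra remark that $\finX$ of the projection coincides with the $\Sigma_1$ reinterpretation is correct and harmless.
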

The soundness of $\Sigma_1$ carries over to the body formulas through the projection.
\begin{proof}
    Suppose $w[i...] \not\modelsinf \varphi$ for a body formula $\varphi$ at some $i \geq T$. By the $\modelsinf$ equivalence of Lemma~\ref{lem:proj}, $w[i...] \not\modelsinf \proj(\varphi)$. By Corollary~\ref{cor:sig1_restated}, $\proj(\varphi) \in \Sigma_1$ is sound past $0$, so $w[i..n] \not\modelsfin \finX(\proj(\varphi))$ for every $n \geq i$. By the $\modelsfin$ equivalence of Lemma~\ref{lem:proj}, $w[i..n] \not\modelsfin \finX(\varphi)$ for every $n \geq i$.
\end{proof}

\begin{lemma}\label{lem:closure}
    If $\varphi$ and $\varphi'$ are convergent at every position, then so are $\varphi \land \varphi'$, $\varphi \lor \varphi'$ and $\ltlfX\varphi$.
\end{lemma}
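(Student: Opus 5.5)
The plan is to handle each connective separately. Fix a position $k$ and unfold the $\finX$ clause for the connective at $k$. Then use the convergence bounds that the hypothesis provides for the operands at the relevant positions, and take the maximum of finitely many such bounds. Since $\finX$ commutes with $\land$, $\lor$ and $\ltlfX$ (the context $\ltlfX$ is preserved, and $\ltlfX$ becomes the strong next), we have $\finX(\varphi\land\varphi') = \finX(\varphi)\land\finX(\varphi')$, and similarly for the other two connectives.

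For the conjunction, let $b$ and $b'$ be the bounds of $\varphi$ and $\varphi'$ at $k$, and set $n_0 = \max(b,b')$. For every $n \geq n_0$ we have $w[k..n] \modelsfin \finX(\varphi)$ iff $w[k...] \modelsinf \varphi$, and likewise for $\varphi'$. Conjoining these two equivalences gives $w[k..n] \modelsfin \finX(\varphi\land\varphi')$ iff $w[k...] \modelsinf \varphi\land\varphi'$. The disjunction is handled verbatim with ``or''. Both bounds are at least $k$, so $n_0 \geq k$ as Definition~\ref{def:stable} requires.

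For the next operator, take the bound $b$ of $\varphi$ at position $k+1$ and set $n_0 = \max(b, k+1)$. For $n \geq n_0$ we have $n > k$, so the strong next clause gives $w[k..n] \modelsfin \ltlfX\finX(\varphi)$ iff $w[k+1..n] \modelsfin \finX(\varphi)$. Because $n \geq b$, this is equivalent to $w[k+1...] \modelsinf \varphi$, which is exactly $w[k...] \modelsinf \ltlfX\varphi$. This case is the one needing care, and it is the reason the hypothesis is ``convergent at every position'' rather than only at $k$: the bound has to be taken at $k+1$. We must also ensure that $n_0 > k$, so that the case $n = k$ is excluded. There the strong next is false on $w[k..k]$ regardless of the truth of $\ltlfX\varphi$ on the infinite suffix, and this discrepancy is exactly why it must lie below the bound.

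Since $k$ was arbitrary, all three formulas are convergent at every position. No real obstacle is expected. The only point to watch is the choice of $\max(b, k+1)$ in the next case, which avoids the degenerate one-letter infix.
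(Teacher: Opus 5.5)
Your proof is correct and matches the paper's argument essentially verbatim. For $\land$ and $\lor$ you take the larger of the two operand bounds at $k$, and for the strong next you take $\max(b, k+1)$ with $b$ the bound of $\varphi$ at $k+1$, which excludes the one-letter infix $w[k..k]$ where $\finX(\ltlfX\varphi)$ is false; beyond the paper you only make explicit that the combined bounds stay $\geq k$ and that $\finX$ commutes with the three connectives by Definition~\ref{def:fin-c}.
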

These connectives inspect their operands at a bounded offset, so the operands' bounds simply combine.
\begin{proof}
Fix a position $k$. $w[k..n] \modelsfin \finX(\varphi \land \varphi')$ is the conjunction of $w[k..n] \modelsfin \finX(\varphi)$ and $w[k..n] \modelsfin \finX(\varphi')$, and $w[k...] \modelsinf \varphi \land \varphi'$ the conjunction of $w[k...] \modelsinf \varphi$ and $w[k...] \modelsinf \varphi'$, so past the larger of the two bounds at $k$, $w[k..n] \modelsfin \finX(\varphi \land \varphi')$ equals $w[k...] \modelsinf \varphi \land \varphi'$. Disjunction is identical. 
$w[k..n] \modelsfin \finX(\ltlfX\varphi)$ is false for $n = k$ and equals $w[k+1..n] \modelsfin \finX(\varphi)$ for $n > k$, while $w[k...] \modelsinf \ltlfX\varphi$ equals $w[k+1...] \modelsinf \varphi$, so past the larger of $k+1$ and the bound of $\varphi$ at $k+1$, $w[k..n] \modelsfin \finX(\ltlfX\varphi)$ equals $w[k...] \modelsinf \ltlfX\varphi$.
\end{proof}

\begin{lemma}\label{lem:until}
    Let $\varphi$ and $\varphi'$ be convergent at every position and sound past a common bound $B$. Then $\varphi \ltlfU \varphi'$ is convergent at every position.
\end{lemma}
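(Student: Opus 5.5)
We fix a position $k$ and split on whether $w[k...] \modelsinf \varphi \ltlfU \varphi'$ holds. In each case we exhibit a bound $n_0 \geq k$ past which $w[k..n] \modelsfin \finX(\varphi \ltlfU \varphi')$ has the same truth value. The finite clause reads: there is $j \in [k,n]$ with $w[j..n] \modelsfin \finX(\varphi')$, and $w[m..n] \modelsfin \finX(\varphi)$ for all $m \in [k,j)$. This holds because $\finX$ keeps the $\ltlfX$-context on the operands of $\ltlfU$.

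\emph{True case.} Suppose $w[k...] \modelsinf \varphi \ltlfU \varphi'$, and take the infinite witness $j \geq k$. Then $\varphi'$ holds at $j$ and $\varphi$ holds at every $m \in [k,j)$. We let $n_0$ be the maximum of $j$, the convergence bound of $\varphi'$ at $j$, and the convergence bounds of $\varphi$ at the finitely many $m \in [k,j)$. For $n \geq n_0$ the same $j$ witnesses the finite until, so the finite evaluation is true, as required.

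\emph{False case.} Suppose $w[k...] \not\modelsinf \varphi \ltlfU \varphi'$. We distinguish whether $\varphi$ fails somewhere from $k$ on.

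First, let $l \geq k$ be the least position with $w[l...] \not\modelsinf \varphi$. As in the $\ltlfW$ case of Lemma~\ref{lem:pi-1-formulas}(ii), minimality of $l$ gives $w[j...] \not\modelsinf \varphi'$ for every $j \in [k,l]$, since otherwise the until would hold at $k$. We take $n_0$ as the maximum of $l$, the convergence bound of $\varphi$ at $l$, and the convergence bounds of $\varphi'$ at each $j \in [k,l]$. For $n \geq n_0$, no $j \in [k,l]$ can be a finite witness, because $\finX(\varphi')$ fails there. No $j > l$ can be a finite witness either, because it would need $\finX(\varphi)$ at $l$, which fails.

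Second, suppose $\varphi$ holds at every position from $k$ on. Then $\varphi'$ fails at every $j \geq k$, and now infinitely many candidate witnesses must be excluded; this is the main obstacle, and it is where soundness enters. For $j \geq \max(k,B)$, soundness past $B$ (Definition~\ref{def:soundness-past}) gives $w[j..n] \not\modelsfin \finX(\varphi')$ for every $n \geq j$. The finitely many remaining $j \in [k,B)$ are handled by the convergence bounds of $\varphi'$ at those positions. We take $n_0$ as the maximum of $k$ and these bounds. Then for $n \geq n_0$ no $j \in [k,n]$ is a finite witness, so the finite evaluation is false, as required.

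The two cases together give convergence of $\varphi \ltlfU \varphi'$ at every $k$.
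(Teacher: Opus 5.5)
Your proof is correct and follows essentially the same argument as the paper. In the true case you use convergence at the finitely many witness positions. In the false case you take the least failure position $l$ of $\varphi$, use minimality to get that $\varphi'$ fails on $[k,l]$, and use soundness past $B$ to exclude the unbounded tail of candidate witnesses.

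The only difference is organizational: you split off the case $l<\infty$, where finitely many convergence bounds suffice, while the paper treats $l=\infty$ uniformly and invokes soundness at every position $\geq B$. Your split shows that only the soundness of $\varphi'$ is actually needed.
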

Until is the difficult case, as it inspects unboundedly many positions while convergence bounds only finitely many of them. Soundness past a common bound is what rules out a false witness in the unbounded tail.
\begin{proof}
    Fix a position $i$, $w[i..n] \modelsfin \finX(\varphi \ltlfU \varphi')$ holds exactly when $w[j..n] \modelsfin \finX(\varphi')$ for some position $j \in [i, n]$ while $w[m..n] \modelsfin \finX(\varphi)$ at every position $m \in [i, j)$.

    Suppose $w[i...] \modelsinf \varphi \ltlfU \varphi'$. 
    Fix a position $j$ with $w[j...] \modelsinf \varphi'$ and $w[j'...] \modelsinf \varphi$ at every $j' \in [i, j)$. 
    These are finitely many positions, and since $\varphi$ and $\varphi'$ are convergent at every position (Definition~\ref{def:stable}), each has a bound past which its finite evaluation is true.
    On every prefix past $j$ and past the largest of these bounds, $w[j..n] \modelsfin \finX(\varphi')$ and every $w[j'..n] \modelsfin \finX(\varphi)$ are true, so $w[i..n] \modelsfin \finX(\varphi \ltlfU \varphi')$ is true.

    Suppose $w[i...] \not\modelsinf \varphi \ltlfU \varphi'$. Let $l$ be the least position $\geq i$ with $w[l...] \not\modelsinf \varphi$, and $l = \infty$ if there is none, in which case $[i, l]$ reads as $[i, \infty)$ and there is no $j > l$, so the case below can be skipped.
    Then $w[m...] \not\modelsinf \varphi'$ at every position $m \in [i, l]$. 
    Were $w[j...] \modelsinf \varphi'$ for some $j \in [i, l]$, then with $w[j'...] \modelsinf \varphi$ at every $j' \in [i, j)$, a range inside $[i, l)$ since $j \leq l$, $w[i...] \modelsinf \varphi \ltlfU \varphi'$ would follow.

    Now $w[i..n] \modelsfin \finX(\varphi \ltlfU \varphi')$ requires a position $j \in [i, n]$ with $w[j..n] \modelsfin \finX(\varphi')$ and $w[m..n] \modelsfin \finX(\varphi)$ at every $m \in [i, j)$. 
    A position $j \leq l$ needs $w[j..n] \modelsfin \finX(\varphi')$, while $w[j...] \not\modelsinf \varphi'$. 
    A position $j > l$ needs $w[l..n] \modelsfin \finX(\varphi)$, as $l \in [i, j)$, while $w[l...] \not\modelsinf \varphi$. 
    So each $j$ requires the finite evaluation of an operand to hold at a position where the operand does not hold under $\modelsinf$, at $j$ for $\varphi'$ when $j \leq l$ and at $l$ for $\varphi$ when $j > l$.

    It remains to make each of these finite evaluations false. 
    At each of these positions before $B$, of which there are finitely many, $\varphi$ and $\varphi'$ are convergent there (Definition~\ref{def:stable}), so the operand has a bound past which its finite evaluation is false.
    At each position at or beyond $B$, $\varphi$ and $\varphi'$ are sound past $B$ (Definition~\ref{def:soundness-past}), so the operand's finite evaluation is false on every prefix, with no bound. 
    This handles the positions $\geq B$, unboundedly many when $l = \infty$. 
    Let $B'$ be the largest of $i$ and these finitely many bounds from the positions before $B$. 
    For every $n \geq B'$, no position $j \in [i, n]$ has both $w[j..n] \modelsfin \finX(\varphi')$ and $w[m..n] \modelsfin \finX(\varphi)$ at every $m \in [i, j)$, so $w[i..n] \not\modelsfin \finX(\varphi \ltlfU \varphi')$.
\end{proof}

\bodystablemma*
\begin{proof}
We prove the stronger statement that every body formula is convergent at every position (Definition~\ref{def:stable}). 
The lemma is the case of position $0$. The proof is a structural induction over the body grammar, the until case taken before the $\ltlfF$ and $\ltlfM$ cases, which reduce to until formulas.

\begin{itemize}

\item For a literal, for $\ltlftrue$, and for $\ltlffalse$, $w[k..n] \modelsfin \finX(\varphi)$ equals $w[k...] \modelsinf \varphi$ for every $n \geq k$ and every position $k$, which is convergence at every position (Definition~\ref{def:stable}).

\item For a safety leaf $\ltlfG(\psi\fgrw{M})$, the operand $\psi\fgrw{M}$ is $\Pi_1$ by Definition~\ref{def:fg-rewrite}, and $\Pi_1$ is closed under $\ltlfG$, so the safety leaf is itself a $\Pi_1$ formula, and $\finX$ and $\finN$ coincide on formulas whose outermost operator is a $\nu$-operator (Definition~\ref{def:fin-c}), so $w[k..n] \modelsfin \finX(\ltlfG(\psi\fgrw{M}))$ equals $w[k..n] \modelsfin \finN(\ltlfG(\psi\fgrw{M}))$ and Lemma~\ref{lem:pi-1-formulas} applies. If $w[k...] \modelsinf \ltlfG(\psi\fgrw{M})$, part~(i) makes $w[k..n] \modelsfin \finN(\ltlfG(\psi\fgrw{M}))$ true for every $n \geq k$. If $w[k...] \not\modelsinf \ltlfG(\psi\fgrw{M})$, part~(ii) makes it false from a bound onward. In both cases $w[k..n] \modelsfin \finX(\ltlfG(\psi\fgrw{M}))$ equals $w[k...] \modelsinf \ltlfG(\psi\fgrw{M})$ from some point on, which is convergence at every position (Definition~\ref{def:stable}).

\item For $\varphi' \land \varphi''$, the operands are convergent at every position by the induction hypothesis, and Lemma~\ref{lem:closure} concludes.

\item For $\varphi' \lor \varphi''$, the operands are convergent at every position by the induction hypothesis, and Lemma~\ref{lem:closure} concludes.

\item For $\ltlfX\varphi'$, the operand is convergent at every position by the induction hypothesis, and Lemma~\ref{lem:closure} concludes.

\item For $\varphi' \ltlfU \varphi''$, the operands are convergent at every position by the induction hypothesis. Both are body formulas, so Corollary~\ref{cor:body-sound} makes both sound past $T$, and Lemma~\ref{lem:until} concludes.

\item For $\ltlfF\varphi'$, we claim that $w[k...] \modelsinf \ltlfF\varphi'$ equals $w[k...] \modelsinf \ltlftrue \ltlfU \varphi'$ at every position $k$, and $w[k..n] \modelsfin \finX(\ltlfF\varphi')$ equals $w[k..n] \modelsfin \finX(\ltlftrue \ltlfU \varphi')$ on every prefix.

Under $\modelsinf$, $w[k...] \modelsinf \ltlfF\varphi'$ holds exactly when $w[j...] \modelsinf \varphi'$ for some $j \geq k$, and $w[k...] \modelsinf \ltlftrue \ltlfU \varphi'$ holds exactly when $w[j...] \modelsinf \varphi'$ for some $j \geq k$ while $w[m...] \modelsinf \ltlftrue$ at every $m \in [k, j)$. Since $w[m...] \modelsinf \ltlftrue$ at every position, the two coincide.

Under $\modelsfin$, $\finX(\ltlfF\varphi') = \ltlfF(\finX(\varphi'))$ and $\finX(\ltlftrue \ltlfU \varphi') = \ltlftrue \ltlfU \finX(\varphi')$ (Definition~\ref{def:fin-c}, as $\finX(\ltlftrue) = \ltlftrue$), sharing the operand $\finX(\varphi')$, so the same reasoning equates $w[k..n] \modelsfin \finX(\ltlfF\varphi')$ with $w[k..n] \modelsfin \finX(\ltlftrue \ltlfU \varphi')$ on every prefix.

$w[k...] \modelsinf \ltlfF\varphi'$ equals $w[k...] \modelsinf \ltlftrue \ltlfU \varphi'$ and $w[k..n] \modelsfin \finX(\ltlfF\varphi')$ equals $w[k..n] \modelsfin \finX(\ltlftrue \ltlfU \varphi')$, so by Definition~\ref{def:stable} $\ltlfF\varphi'$ is convergent at every position exactly when $\ltlftrue \ltlfU \varphi'$ is. Now $\ltlftrue$ is convergent at every position and sound past $0$ ($\ltlftrue \in \Sigma_1$, Corollary~\ref{cor:sig1_restated}), hence sound past $T$. The operand $\varphi'$ is convergent at every position by the induction hypothesis and, as a body formula, sound past $T$ by Corollary~\ref{cor:body-sound}. Lemma~\ref{lem:until} makes $\ltlftrue \ltlfU \varphi'$ convergent at every position, hence so is $\ltlfF\varphi'$.

\item For $\varphi' \ltlfM \varphi''$, we claim that $w[k...] \modelsinf \varphi' \ltlfM \varphi''$ equals $w[k...] \modelsinf \varphi'' \ltlfU (\varphi' \land \varphi'')$ at every position $k$, and $w[k..n] \modelsfin \finX(\varphi' \ltlfM \varphi'')$ equals $w[k..n] \modelsfin \finX(\varphi'' \ltlfU (\varphi' \land \varphi''))$ on every prefix.

Under $\modelsinf$, $w[k...] \modelsinf \varphi' \ltlfM \varphi''$ holds exactly when $w[j...] \modelsinf \varphi'$ for some $j \geq k$ while $w[m...] \modelsinf \varphi''$ at every $m \in [k, j]$, and $w[k...] \modelsinf \varphi'' \ltlfU (\varphi' \land \varphi'')$ holds exactly when $w[j...] \modelsinf \varphi' \land \varphi''$ for some $j \geq k$ while $w[m...] \modelsinf \varphi''$ at every $m \in [k, j)$. These coincide, as $w[j...] \modelsinf \varphi' \land \varphi''$ puts $\varphi'$ and $\varphi''$ at $j$, extending $w[m...] \modelsinf \varphi''$ from $[k, j)$ to $[k, j]$.

Under $\modelsfin$, $\finX(\varphi' \ltlfM \varphi'') = \finX(\varphi') \ltlfM \finX(\varphi'')$ and $\finX(\varphi'' \ltlfU (\varphi' \land \varphi'')) = \finX(\varphi'') \ltlfU (\finX(\varphi') \land \finX(\varphi''))$ (Definition~\ref{def:fin-c}), sharing the operands $\finX(\varphi')$ and $\finX(\varphi'')$, so the same reasoning equates $w[k..n] \modelsfin \finX(\varphi' \ltlfM \varphi'')$ with $w[k..n] \modelsfin \finX(\varphi'' \ltlfU (\varphi' \land \varphi''))$ on every prefix.

$w[k...] \modelsinf \varphi' \ltlfM \varphi''$ equals $w[k...] \modelsinf \varphi'' \ltlfU (\varphi' \land \varphi'')$ and $w[k..n] \modelsfin \finX(\varphi' \ltlfM \varphi'')$ equals $w[k..n] \modelsfin \finX(\varphi'' \ltlfU (\varphi' \land \varphi''))$, so by Definition~\ref{def:stable} $\varphi' \ltlfM \varphi''$ is convergent at every position exactly when $\varphi'' \ltlfU (\varphi' \land \varphi'')$ is. Now $\varphi'$ and $\varphi''$ are convergent at every position by the induction hypothesis, so $\varphi' \land \varphi''$ is convergent at every position by Lemma~\ref{lem:closure}. Both $\varphi''$ and $\varphi' \land \varphi''$ are body formulas, the grammar being closed under $\land$, so Corollary~\ref{cor:body-sound} makes both sound past $T$. Lemma~\ref{lem:until} makes $\varphi'' \ltlfU (\varphi' \land \varphi'')$ convergent at every position, hence so is $\varphi' \ltlfM \varphi''$.

\end{itemize}
    
\end{proof}

The persistence quantifier is not the only quantifier that reads the limit of Lemma~\ref{lem:body-stability}: the recurrence quantifier, applied to the weak reinterpretation, captures the body as well.

\begin{theorem}\label{theorem:pi2}
Given a disjunct of the $\Delta_2$ normalization with limit-behavior $(M, N)$ and body formula $\varphi\rewM{M,N}$, and a trace $w$ that exhibits $(M, N)$, we obtain
$$
w \modelsinf \varphi\rewM{M,N} \iff w \modelsinf \forall \exists(\finN(\varphi\rewM{M,N})).
$$
\end{theorem}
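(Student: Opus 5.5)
The plan is to reduce the claim to the convergence result of Lemma~\ref{lem:body-stability}, which is stated for the strong reinterpretation $\finX$. I would show that on all sufficiently long prefixes the weak reinterpretation $\finN$ of a body formula agrees with $\finX$. Once a formula's finite evaluation is eventually constant and equal to its infinite-trace truth value, the quantifiers $\exists\forall$ and $\forall\exists$ coincide on it. The argument would then mirror the proof of Theorem~\ref{theorem:sigma2}.

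\emph{Step 1 (agreement of the two reinterpretations).} By Definition~\ref{def:fin-c}, $\fin_c$ with $c=\ltlfN$ and with $c=\ltlfX$ differ only on next operators that are reached before any context switch. These are the nexts in the top layer of $\varphi\rewM{M,N}$ built from literals, $\land$, $\lor$ and $\ltlfX$ alone. Every operand of $\ltlfF$, $\ltlfU$, $\ltlfM$ is translated in the $\ltlfX$-context under both choices, so it yields identical subformulas. Every safety leaf has a $\nu$-operator outermost, so it too yields identical subformulas.

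I would then prove, by induction over this top layer, the following claim. For every formula $\varphi$ of the layer with $\ltlfX$-nesting depth $d$ in that layer, every position $k$, and every $n \geq k+d$, we have $w[k..n] \modelsfin \finN(\varphi)$ iff $w[k..n] \modelsfin \finX(\varphi)$.
\begin{itemize}
\item The base cases are literals, constants, and subformulas rooted at a $\mu$- or $\nu$-operator. Their two translations are syntactically equal, so the claim is immediate.
\item The cases for $\land$ and $\lor$ are immediate from the induction hypothesis.
\item For $\ltlfX\varphi'$ with $n \geq k+d \geq k+1$, position $k$ is not last. Hence both the weak and the strong next reduce to evaluating the respective translation of $\varphi'$ on $w[k+1..n]$, and the induction hypothesis applies at $k+1$ with depth $d-1$.
\end{itemize}
Taking $k=0$, $\finN(\varphi\rewM{M,N})$ and $\finX(\varphi\rewM{M,N})$ agree on every prefix $w[0..n]$ with $n \geq |\varphi\rewM{M,N}|$.

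\emph{Step 2 (convergence for $\finN$).} Lemma~\ref{lem:body-stability} gives a bound $n_0$ such that, for all $n \geq n_0$, $w[0..n] \modelsfin \finX(\varphi\rewM{M,N})$ iff $w \modelsinf \varphi\rewM{M,N}$. Set $n_1 = \max(n_0, |\varphi\rewM{M,N}|)$. By Step~1, for all $n \geq n_1$ we get $w[0..n] \modelsfin \finN(\varphi\rewM{M,N})$ iff $w \modelsinf \varphi\rewM{M,N}$.

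\emph{Step 3 (the two directions).} If $w \modelsinf \varphi\rewM{M,N}$, then every prefix beyond $n_1$ satisfies $\finN(\varphi\rewM{M,N})$. So for every $i$, the index $j=\max(i,n_1)$ witnesses the required prefix, and $\forall\exists$ holds. Conversely, if $\forall\exists(\finN(\varphi\rewM{M,N}))$ holds, instantiate $i=n_1$ to obtain a satisfying prefix $w[0..j]$ with $j \geq n_1$. By Step~2 this forces $w \modelsinf \varphi\rewM{M,N}$.

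The main obstacle is Step~1. One must check carefully against Definition~\ref{def:fin-c} that the context really is reset to $\ltlfX$ under every $\mu$-operator and to $\ltlfN$ under every $\nu$-operator, independently of the incoming context. Only then are the two translations syntactically identical below the top $\ltlfX$-layer. If that fails, the fallback is to redo the induction of Lemma~\ref{lem:body-stability} directly for $\finN$, using that the closure cases (Lemma~\ref{lem:closure}) only need $n>k$ for the next case.
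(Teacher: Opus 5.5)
Your proposal is correct and follows essentially the same route as the paper. The paper also shows that $\finX$ and $\finN$ of the body agree on all prefixes $w[0..n]$ with $n$ at least the maximal number of leading nexts (those below no $\ltlfF$, $\ltlfU$, $\ltlfM$ and no safety leaf), transfers the convergence of Lemma~\ref{lem:body-stability} to $\finN$, and reads off both directions of $\forall\exists$. Your depth induction is a more explicit version of the paper's remark that a leading next is evaluated at a position at most $d-1<n$, and the concern in your last paragraph is settled by Definition~\ref{def:fin-c}, which resets the context under every $\mu$- and $\nu$-operator regardless of the incoming context.
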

\begin{proof}
By Lemma~\ref{lem:body-stability} the body is convergent, so $w[0..n] \modelsfin \finX(\varphi[M,N])$ is constant in $n$ from some point on and equals $w \modelsinf \varphi[M,N]$.
The claimed equivalence replaces $\finX$ by the weak $\finN$, so we first give the $\finN$ sequence the same limit. $\finX$ and $\finN$ differ only at the nexts that lie below no $\ltlfF$, $\ltlfU$, $\ltlfM$ operator and no safety leaf of the body (Definition~\ref{def:fin-c}). Call these the \emph{leading nexts}, and let $d$ be the greatest number of them along one path of the body. We show that $\finX$ and $\finN$ agree on every prefix $w[0..n]$ with $n \geq d$.

Fix such a prefix. $\finX(\varphi[M,N])$ and $\finN(\varphi[M,N])$ are the same formula except that each leading next carries $\ltlfX$ in the first and $\ltlfN$ in the second. For $i < n$, $w[i..n] \modelsfin \ltlfX \psi$ and $w[i..n] \modelsfin \ltlfN \psi$ both hold exactly when $w[i+1..n] \modelsfin \psi$ (Section~\ref{sec:prelim}), so the strong and the weak next differ only at the last position $n$. A leading next lies under only Boolean connectives and at most $d - 1$ leading nexts, hence is evaluated at a position at most $d - 1 < n$, never the last. So $w[0..n] \modelsfin \finX(\varphi[M,N])$ equals $w[0..n] \modelsfin \finN(\varphi[M,N])$.

Hence for $n \geq d$ the sequence $n \mapsto w[0..n] \modelsfin \finN(\varphi[M,N])$ coincides with the one under $\finX$, so it too is constant from some point on and equals $w \modelsinf \varphi[M,N]$. The equivalence now follows in both directions.

($\Rightarrow$) Suppose $w \modelsinf \varphi[M,N]$. For all large enough $n$, $w[0..n] \modelsfin \finN(\varphi[M,N])$ equals $w \modelsinf \varphi[M,N]$ and so holds. Thus infinitely many prefixes satisfy $\finN(\varphi[M,N])$, which is $\forall\exists(\finN(\varphi[M,N]))$.

($\Leftarrow$) Suppose $\forall\exists(\finN(\varphi[M,N]))$, so infinitely many prefixes satisfy $\finN(\varphi[M,N])$. Some $w[0..n]$ with $n$ large enough among them has $w[0..n] \modelsfin \finN(\varphi[M,N])$ equal to $w \modelsinf \varphi[M,N]$, so $w \modelsinf \varphi[M,N]$.
\end{proof}

\section{Implementation}

A reference implementation of the full pipeline, built on top of the \spot
library \citep{DBLP:conf/cav/Duret-LutzRCRAS22}, is available online. \footnote{\url{https://github.com/weinhuber/ltl2ltlfplus}}
It covers the normalization, the minimal tight completion, and the translation of $\Sigma_2$ formulas.
The implementation goes beyond the plain construction and includes several
optimizations. 
In particular, for the widely used obligation
fragment~\cite{DBLP:journals/corr/abs-2605-12372} it skips both the
normalization and the tight completion and directly returns obligation formulas of \ltlfplus.

\end{document}